\documentclass{article}
\PassOptionsToPackage{numbers, compress}{natbib}

\usepackage[preprint]{neurips_2026}

\usepackage[utf8]{inputenc} 
\usepackage[T1]{fontenc}    
\usepackage{hyperref}       
\usepackage{url}            
\usepackage{booktabs}       
\usepackage{amsfonts}       
\usepackage{nicefrac}       
\usepackage{microtype}      
\usepackage{xcolor}         
\usepackage{colortbl}
\usepackage{multirow}
\usepackage{diagbox}
\usepackage{subcaption}
\usepackage{makecell}
\usepackage{setspace}
\usepackage{wrapfig}
\usepackage{placeins}

\usepackage{amsmath}
\usepackage{amssymb}
\usepackage{mathtools}
\usepackage{amsthm}
\usepackage{pifont}

\usepackage{bm}
\usepackage{algorithm}
\usepackage{algpseudocode}

\newtheorem{theorem}{Theorem}
\algrenewcommand\algorithmicrequire{\textbf{Require:}}
\algrenewcommand\algorithmicensure{\textbf{Ensure:}}
\algrenewcommand\algorithmicreturn{\textbf{Return}}

\title{OMP-MoE: Efficient Expert Pruning for Mixture-of-Experts LLMs via Orthogonal Matching Pursuit}

\author{%
  Dezhi Li \quad Lujun Li \quad Qiyuan Zhu \quad Hao Gu \quad Bei Liu \quad Sirui Han\thanks{Corresponding authors.}
  \quad Yike Guo\footnotemark[1] \\
  The Hong Kong University of Science and Technology
}

\begin{document}

\maketitle

\begin{abstract}
  Mixture-of-Experts (MoE) models enable efficient scaling of large language models but face critical deployment challenges due to massive memory requirements. Existing pruning methods either incur prohibitive search costs or neglect the dynamic interdependencies between experts. To address these challenges, we present OMP-MoE, a novel training-free compression framework for reducing expert redundancy in MoE-based LLMs. Based on observations of expert contribution patterns, we reformulate the pruning problem as a sparse signal reconstruction task solved through Orthogonal Matching Pursuit. Specifically, our method first treats individual expert contributions as dictionary atoms and selects experts that greedily minimize reconstruction error with linear computational complexity. Then, we optimize cross-layer expert allocation through a water-filling strategy that accounts for both reconstruction quality and routing stability. Finally, we introduce OMP-MoE$^\dagger$, an adaptive inference mechanism that dynamically adjusts expert activation based on energy prediction. Comprehensive experiments on Qwen, DeepSeek-V2, GPT-OSS, and Mixtral MoE demonstrate consistent improvements over existing methods at 25-50\% pruning ratios. For Qwen3-30B-A3B at 50\% expert pruning, we retain 93.3\% of original performance, achieving 33$\times$ faster search and 1.55$\times$ inference speedup. Codes are available in Supplementary Material.
\end{abstract}

\begin{figure*}[th]
    \centering
    \begin{subfigure}[b]{0.33\linewidth}
        \centering
        \includegraphics[width=\linewidth]{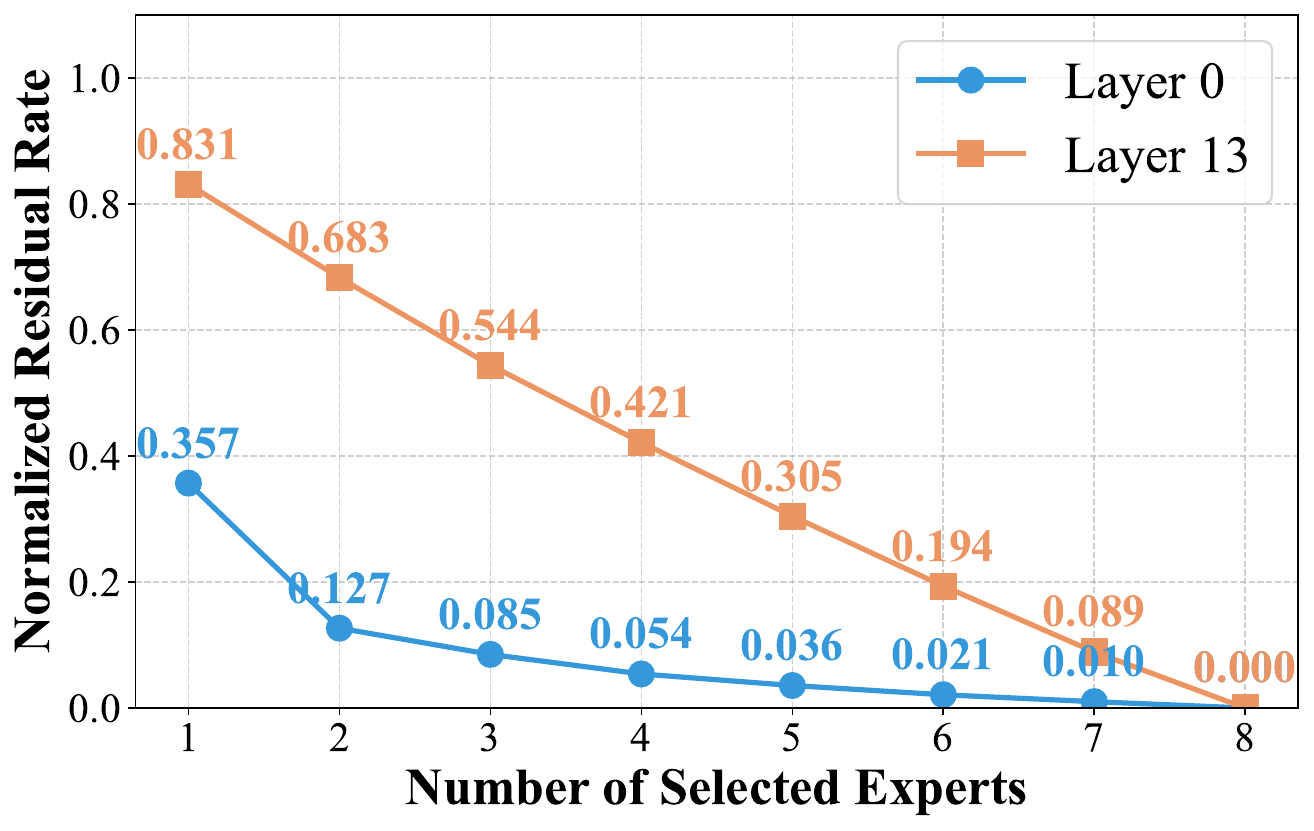}
        \caption{Normalized residual curves}
        \label{fig:analysis_residual}
    \end{subfigure}
    \hfill
    \begin{subfigure}[b]{0.63\linewidth}
        \centering
        \includegraphics[width=\linewidth]{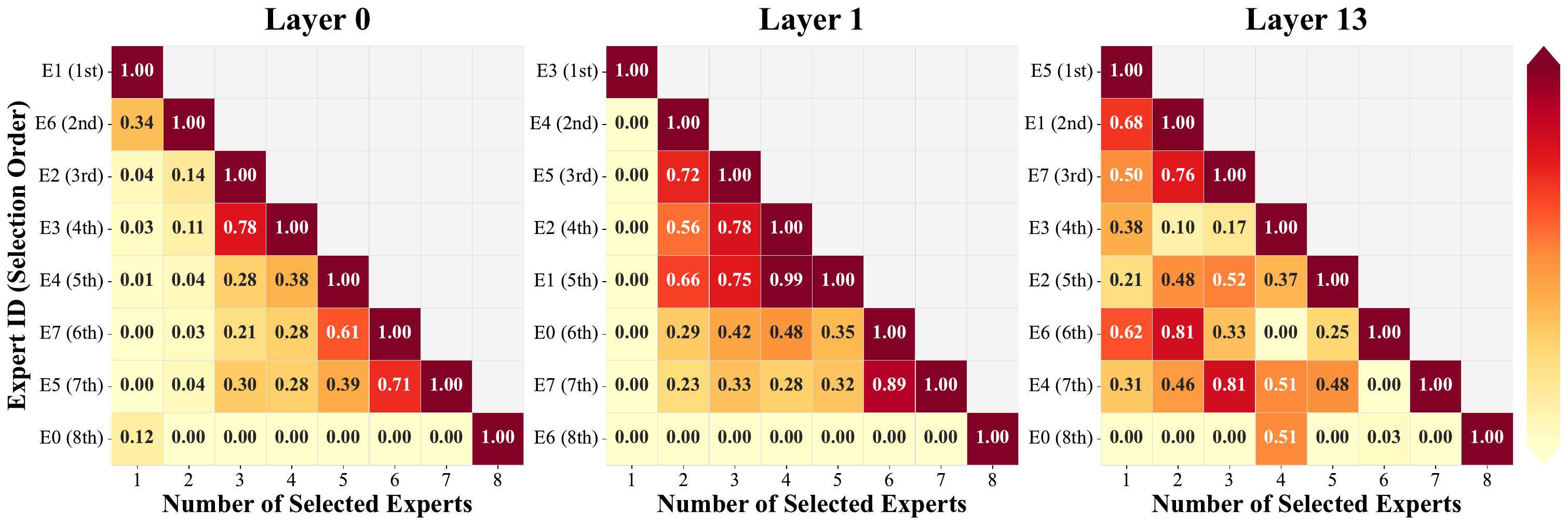}
        \caption{Reconstruction gain heatmaps generated by OMP}
        \label{fig:analysis_heatmap}
    \end{subfigure}

    \caption{\textbf{Analysis of Dynamic Expert Selection.}
    We visualize the selection process across three representative layers in Mixtral-8$\times$7B:
    (\textbf{a}) Decay of normalized residual rate curve as a function of the number of selected experts.
    (\textbf{b}) Dynamic importance of candidate experts sorted by selection order. We provide more details and visualizations across various MoE LLMs in Appendix~\ref{app:intro:heatmaps} and~\ref{app:intro:residual_curves}.}
    \label{fig:combined_analysis}
\end{figure*}

\section{Introduction}

The scaling of Large Language Models (LLMs) has led to unprecedented capabilities across diverse tasks~\cite{report2023gpt4, report2024llama3}. To balance massive parameter counts with computational efficiency, the Mixture-of-Experts (MoE) architecture has emerged as a dominant paradigm~\cite{shazeer2017outrageously, fedus2022switch}. Modern MoE models, such as Mixtral~\cite{jiang2024mixtral}, DeepSeek-V3~\cite{deepseekai2024deepseekv3}, Qwen-MoE~\cite{yang2025qwen3}, and GPT-OSS~\cite{agarwal2025gptoss}, utilize sparse gating to activate only a subset of experts per token, and build upon foundational works that introduced sparsity to scale capacity~\cite{shazeer2017outrageously, lepikhin2021gshard, fedus2022switch}. However, the immense scale of these models, which exceeds hundreds of billions of parameters, presents significant challenges for deployment in environments with limited resources~\cite{ghorbani2022scaling, Clark2022ScalingMoe}. 

Existing works on MoE compression primarily follow two trajectories: expert merging and expert pruning. Merge-based methods, such as MC-SMoE~\cite{li2023mcsmoe} and HC-SMoE~\cite{chen2024hcsmoe}, attempt to combine redundant experts into a reduced set of ``centroid'' experts. While Sub-MoE~\cite{li2025submoe} improves upon this by merging experts within a subspace, recent studies like REAP~\cite{lasby2025reap} argue that merging inevitably introduces lossy representations by blending distinct specialized features. In contrast, expert pruning directly removes less critical experts to reduce the model's footprint. However, current pruning techniques face a significant efficiency-performance trade-off. Combination-based methods like NAEE~\cite{lu2024naee} require exhaustive evaluations over combinatorial spaces. For a model with $N_e=128$ experts and a target of $n=64$, the search space reaches $\binom{128}{64} \approx 2.4 \times 10^{37}$ combinations, making global optimization prohibitive. Other approaches like HEAPr~\cite{li2025heapr} leverage Hessian-based second-order information to guide pruning, but calculating the Hessian inverse for massive models incurs prohibitive memory and computational overhead. Similarly, differentiable methods such as DiEP~\cite{bai2025diep} necessitate additional fine-tuning, which imposes a heavy training burden.
These challenges present the key question: \textit{How can we design efficient pruning frameworks that simultaneously achieve fast search and maintain performance?}

\textbf{\textit{``The shortest distance between two points is a straight line.''}}

\textbf{\rightline{\textit{--- Euclid}}}

As the quote suggests, efficient solutions often emerge from direct formulations that exploit underlying structure.
Motivated by sparse approximation, we analyze whether expert pruning can be posed as reconstructing the MoE layer output from a small subset of expert contribution atoms. Figure~\ref{fig:combined_analysis} shows two useful properties. First, residual curves decay at different rates across layers, indicating that a uniform expert budget is suboptimal. Second, expert importance changes after each selected expert because the residual signal changes, making static one-shot ranking insufficient. These findings underscore that employing Orthogonal Matching Pursuit (OMP)~\cite{pati1993orthogonal,tropp2007signal} for expert selection is a promising approach for MoE compression that balances efficiency, optimality, and performance preservation.


Building on these findings, we propose OMP-MoE, a training-free expert pruning framework for MoE LLMs.
OMP-MoE formulates expert pruning as a sparse signal reconstruction problem, where expert contributions are treated as dictionary atoms and selected by greedily reducing the current reconstruction residual.
This design avoids exhaustive subset search, reduces the search complexity from exponential to linear, and provides a locally optimal greedy selection rule under the binary expert-retention constraint.
To handle layer heterogeneity, OMP-MoE further allocates expert budgets across layers with a water-filling strategy that combines reconstruction error and routing stability.
We also introduce OMP-MoE$^{\dagger}$, an adaptive inference mechanism that skips low-energy expert executions for input-aware computation reduction.


Extensive experiments demonstrate that the OMP-MoE framework identifies superior expert subsets and consistently outperforms state-of-the-art methods across multiple architectures. Our experiments reveal three primary advantages. First, OMP-MoE achieves superior performance retention: at 50\% pruning on Qwen3-30B-A3B, it maintains 93.3\% of the original model's accuracy, substantially outperforming the 53.0\% retained by MC-SMoE and the 80.5\% retained by NAEE. Second, our method exhibits remarkable search efficiency, completing expert selection in just 641 seconds, a 33$\times$ speedup over NAEE while maintaining comparable memory usage. Third, our adaptive OMP-MoE$^\dagger$ delivers tangible inference benefits, achieving 1.46$\times$ end-to-end acceleration on DeepSeek-V2-Lite at 50\% pruning. Beyond standard benchmarks, OMP-MoE demonstrates strong capability retention on reasoning-intensive benchmarks, preserving 59.6\% performance on GSM8K compared to 26.5\% for the best baseline. Furthermore, OMP-MoE exhibits orthogonal compatibility with quantization and unstructured pruning methods.

\section{Related Work}

\textbf{Mixture-of-Experts Compression.}
The compression of MoE models has seen rapid development as parameter counts scale. Expert merging methods consolidate weights to reduce memory footprints. MC-SMoE~\cite{li2023mcsmoe} groups and merges experts based on routing patterns, though this fusion often degrades the fine-grained specialization of the original model. HC-SMoE~\cite{chen2024hcsmoe} utilizes hierarchical clustering to optimize similarity metrics, yet its static nature fails to account for dynamic expert interactions during inference. Sub-MoE~\cite{li2025submoe} employs subspace expert merging to preserve dominant components, but blending distinct expert functionalities remains fundamentally lossy. Expert pruning directly removes redundant parameters to improve efficiency. NAEE~\cite{lu2024naee} identifies critical experts through combinatorial subset evaluation, which becomes computationally prohibitive as the number of experts increases. DiEP~\cite{bai2025diep} utilizes differentiable masks for expert selection, necessitating a resource-intensive fine-tuning stage. Shapley-MoE~\cite{huang2025shapleymoe} applies game-theoretic frameworks to estimate marginal contributions, but the exact calculation is often intractable for massive MoE architectures. HEAPr~\cite{li2025heapr} guides pruning via Hessian-based second-order information, which incurs significant memory and computational overhead for high-dimensional models. In contrast to prior approaches hindered by prohibitive optimization costs or the neglect of inter-expert correlations, OMP-MoE employs a training-free, iterative mechanism to select experts with linear computational complexity.

\textbf{Orthogonal Matching Pursuit.}
Orthogonal Matching Pursuit is a classic greedy algorithm designed for sparse signal reconstruction in compressed sensing~\cite{tropp2007signal, pati1993orthogonal}. It iteratively selects dictionary atoms most correlated with the current residual and updates the residual via orthogonal projection. In neural network optimization, OMP has been used for filter pruning in CNNs~\cite{Yang2009CNN} and efficient sparse coding for KV caches~\cite{kim2025lexico}. We first explore OMP for MoE compression, treating individual experts as dictionary atoms to solve the pruning problem.

\begin{figure*}[t]
    \vspace{-0.8cm}
    \centering
    \includegraphics[width=\textwidth]{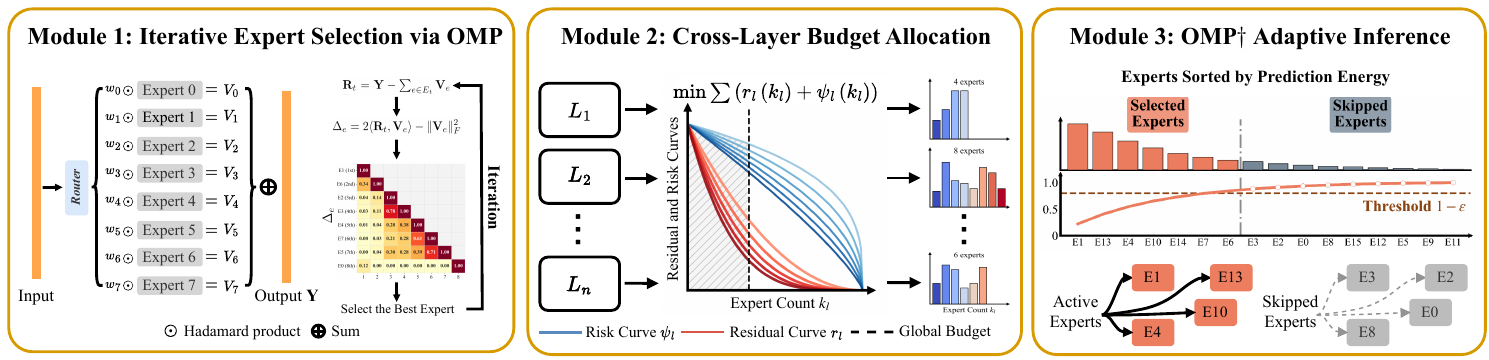}
    \caption{Overview of our OMP-MoE framework: (a) Greedy pursuit of expert contribution atoms through iterative selection to minimize layer wise reconstruction error. (b) Cross-layer resource optimization via water-filling to align expert counts with marginal benefits. (c) Energy-based dynamic pruning during runtime to achieve streamlined inference without performance degradation.}
    \label{fig:framework_overview}
\end{figure*}

\section{Methodology}
We propose OMP-MoE, a framework designed to efficiently identify the optimal expert subset under the iterative principles of Orthogonal Matching Pursuit. As illustrated in Figure~\ref{fig:framework_overview}, OMP-MoE is structured into three main components: (1) single-layer expert selection via OMP, (2) cross-layer budget allocation with routing stability, and (3) adaptive inference with OMP-MoE$^\dagger$. We provide the detailed theoretical derivation of the OMP-MoE framework and the formal proofs of optimality in Appendix~\ref{app:formal_analysis}.

\subsection{Preliminary}

\textbf{Mixture-of-Experts Mechanism.} Given an MoE layer consisting of $N_e$ experts $\{F_e\}_{e=1}^{N_e}$, the output $\mathbf{Y}$ for an input $\mathbf{X}$ is computed as the sparse weighted sum of expert outputs:
\begin{equation}
\mathbf{Y} = \sum_{e=1}^{N_e} G(\mathbf{X})_e \odot F_e(\mathbf{X}) = \sum_{e=1}^{N_e} \mathbf{V}_e,
\end{equation}
where $\odot$ denotes the broadcasting element-wise product and $\mathbf{V}_e$ represents the \textit{contribution atom} of expert $e$. The gating scores $G(\mathbf{X})$ are computed from the routing matrix $\mathbf{W}_g \in \mathbb{R}^{d \times N_e}$ by the architecture-native router:
\begin{equation}
G(\mathbf{X}) = \text{Router}(\mathbf{X}\mathbf{W}_g),
\label{eq:router}
\end{equation}
where $\text{Router}(\cdot)$ denotes the model's native composition of score transformation, sparse expert selection, and any normalization or bias correction. A common instance applies softmax scores followed by top-$k$ selection, as in Mixtral's top-2 router~\cite{jiang2024mixtral}. Other architectures such as DeepSeek-V3~\cite{deepseekai2024deepseekv3} use sigmoid scores and top-8 selection.

In implementation, $\mathbf{X}$ denotes a mini-batch of token representations at a given MoE layer. For $B$ sequences with length $T$, we flatten them into $N=B\times T$ tokens, so $\mathbf{Y}\in\mathbb{R}^{N\times d}$ and $\mathbf{V}_e\in\mathbb{R}^{N\times d}$ store token-wise layer outputs and weighted expert contributions.

\textbf{Principles of Orthogonal Matching Pursuit.} Orthogonal Matching Pursuit (OMP)~\cite{pati1993orthogonal} is a classic greedy algorithm designed for sparse approximation:
\begin{equation}
\min_{\mathbf{c}} \|\mathbf{Y} - \mathbf{D}\mathbf{c}\|_F^2 \quad \text{subject to} \quad \|\mathbf{c}\|_0 \le n,
\label{eq:omp_standard}
\end{equation}
where $\mathbf{D} = [\mathbf{d}_1, \dots, \mathbf{d}_{N_e}]$ is a dictionary of atoms. OMP iteratively selects the atom $\mathbf{d}_i$ that maximizes the projection onto the current residual signal and updates the coefficient vector $\mathbf{c}$ via orthogonal projection to minimize the reconstruction error.

We adopt the OMP-MoE framework by treating the set of contribution atoms $\{\mathbf{V}_1, \dots, \mathbf{V}_{N_e}\}$ as our dictionary. Crucially, we constrain the optimization coefficients to be binary, i.e., $c_e \in \{0, 1\}$, indicating whether an expert is retained. This yields the following reconstruction objective for expert selection:
\begin{equation}
\min_{E: |E|=n} \left\| \mathbf{Y} - \sum_{e \in E} \mathbf{V}_e \right\|_F^2,
\label{eq:reconstruction_obj}
\end{equation}

where $E$ represents the support set containing the indices of $n$ selected experts. By reformulating pruning as a signal reconstruction task, we transform a combinatorial search into an efficient greedy pursuit of the optimal expert subset.

\subsection{Iterative Expert Selection via OMP}

To solve the sparse approximation problem in Eq.~\ref{eq:reconstruction_obj}, we adopt the greedy selection strategy from OMP theory. Instead of performing an exhaustive combinatorial search over $C(N_e, n)$ directly, OMP iteratively selects the expert that provides the maximum reduction in the current residual rate.

\begin{algorithm}[h]
\caption{Greedy Expert Selection via OMP}
\begin{algorithmic}
\Require Layer output $\bm{Y}$, contribution atoms $\{\bm{V}_e\}_{e=1}^{N_e}$, expert norms $\{\|\bm{V}_e\|_F^2\}_{e=1}^{N_e}$, target count $n$
\Ensure Set of retained experts $E^{\mathrm{keep}}$

\State Initialize residual $\bm{R}_0 \gets \bm{Y}$
\State Initialize selected set $E_0 \gets \varnothing$

\For{$t = 0$ to $n-1$}
    \State $\Delta_e \gets 2\langle \bm{R}_t, \bm{V}_e\rangle
    - \|\bm{V}_e\|_F^2$ for all $e \notin E_t$
    \State $e^\ast \gets \operatorname*{arg\,max}_{e \notin E_t} \Delta_e$
    \State $\bm{R}_{t+1} \gets \bm{R}_t - \bm{V}_{e^\ast}$
    \State $E_{t+1} \gets E_t \cup \{e^\ast\}$
\EndFor

\State \Return $E_n$
\end{algorithmic}
\label{alg:greedy-expert-omp}
\end{algorithm}

\textbf{Analytical Selection Criterion.} Let $E_t$ be the set of experts selected after $t$ rounds, and $\mathbf{R}_t = \mathbf{Y} - \sum_{e \in E_t} \mathbf{V}_e$ be the current residual signal (with $\mathbf{R}_0 = \mathbf{Y}$). When evaluating a candidate expert $e \notin E_t$, the reconstruction gain $\Delta_e$ can be derived analytically in a single step:
\begin{equation}
\Delta_e = 2\langle \mathbf{R}_t, \mathbf{V}_e \rangle - \|\mathbf{V}_e\|_F^2
\label{eq:omp_gain_main}
\end{equation}
where $\langle \cdot, \cdot \rangle$ denotes the Frobenius inner product. We provide the detailed mathematical derivation of this formula in Appendix~\ref{app:formal_analysis}.
This criterion is locally optimal under the binary expert-retention constraint because, conditioned on the current selected set, maximizing $\Delta_e$ is exactly equivalent to minimizing the next residual norm. The full proof is given in Appendix~\ref{app:binary_greedy_optimality}.

To identify the critical experts, we employ an iterative greedy search. This process consists of two phases: a one-time \textit{Statistics Phase} that performs a single forward pass to cache the signal $\mathbf{Y}$ and expert contribution atoms $\{\mathbf{V}_e\}_{e=1}^{N_e}$ for each layer, and a \textit{Greedy Selection Phase} that iteratively updates the residual and selects experts through tensor operations. As detailed in Algorithm~\ref{alg:greedy-expert-omp}, this iterative approach generates an ordered contribution sequence of experts.

\textbf{Search Efficiency.} As derived in Appendix~\ref{app:sub:complexity_analysis}, our greedy selection strategy achieves a total time complexity of $\mathcal{O}(L \cdot n \cdot N_e \cdot Bd)$, where $L$ is the number of layers, $n$ is the target number of experts, $N_e$ is the total number of experts, and $Bd$ represents the token-dimension product of the cached statistics. This provides a significant computational speedup over the evaluation of all $C(N_e, n)$ expert combinations, which is prohibitive for modern MoE models with large $N_e$. By decoupling the selection logic from the forward pass and operating on a linear search space, OMP-MoE can prune models with over 128 experts within minutes.

\subsection{Cross-Layer Budget Allocation with Routing Stability}
\label{subsec:cross_layer_allocation}

Uniform pruning assigns the same expert budget to each MoE layer, although different layers may have different reconstruction difficulty. Moreover, removing experts changes the normalization term of the gating softmax, leading to routing renormalization~\cite{dai2022stablemoe}. We therefore allocate experts across layers by jointly considering reconstruction error and preserved routing mass.

For layer $l$, OMP-MoE produces an ordered expert sequence
$\pi_l=(e_{l,1},e_{l,2},\dots,e_{l,N_e})$. If the first $k$ experts are retained, the residual is
\begin{equation}
\mathbf{R}_l(k)=\mathbf{Y}_l-\sum_{t=1}^{k}\mathbf{V}_{l,e_{l,t}},
\end{equation}
and the normalized reconstruction error is
\begin{equation}
r_l(k)=\frac{\|\mathbf{R}_l(k)\|_F^2}{\|\mathbf{Y}_l\|_F^2}.
\end{equation}

To measure the routing mass preserved by the retained experts, we accumulate the gating weight
$m_{l,e}$ of expert $e$ on the calibration set and normalize it as
\begin{equation}
\tilde{m}_{l,e}=\frac{m_{l,e}}{\sum_{j=1}^{N_e}m_{l,j}}.
\end{equation}
The routing coverage of the first $k$ retained experts is
\begin{equation}
C_l(k)=\sum_{t=1}^{k}\tilde{m}_{l,e_{l,t}},
\end{equation}
and we define the routing risk as
\begin{equation}
\psi_l(k)=-\log(C_l(k)+\zeta),
\end{equation}
where $\zeta$ is a small constant for numerical stability. The layer-wise allocation cost is then
\begin{equation}
F_l(k)=r_l(k)+\lambda\psi_l(k),
\end{equation}
where $\lambda$ controls the strength of the routing-risk term.

Given a total expert budget $C_{\mathrm{tot}}$, we choose the number of retained experts
$\{k_l\}_{l=1}^{L}$ by solving
\begin{equation}
\min_{\{k_l\}_{l=1}^{L}}
\sum_{l=1}^{L}F_l(k_l)
\quad
\mathrm{s.t.}\quad
\sum_{l=1}^{L}k_l=C_{\mathrm{tot}},\quad
k_l\in\{1,\dots,N_e\}.
\label{eq:cross_layer}
\end{equation}

We solve Eq.~\eqref{eq:cross_layer} with a discrete water-filling procedure. We initialize each layer with one retained expert and then allocate the remaining $C_{\mathrm{tot}}-L$ experts one by one. At each step, we add one expert to the layer with the largest marginal cost decrease:
\begin{equation}
l^\star=\arg\max_{l:k_l<N_e}
\Delta_l(k_l+1),
\quad
\Delta_l(k_l+1)=F_l(k_l)-F_l(k_l+1).
\end{equation}
The update is then $k_{l^\star}\leftarrow k_{l^\star}+1$. This greedy allocation places each added expert in the layer where it gives the largest immediate decrease in the sum of reconstruction error and routing risk. More details are given in Appendix~\ref{app:sub:detailed_workflow}.

\begin{table*}[t]
    \centering
    \setlength{\extrarowheight}{0pt}
    \addtolength{\extrarowheight}{\aboverulesep}
    \addtolength{\extrarowheight}{\belowrulesep}
    \setlength{\aboverulesep}{0pt}
    \setlength{\belowrulesep}{0pt}
    \caption{Zero-shot performance comparison across 6 tasks on Qwen, DeepSeek, GPT-OSS and Mixtral MoE models at 25\% and 50\% expert pruning ratios. Bold indicates the best performance. `Avg.' represents the average accuracy across 6 tasks.}
    \label{tab:main_results}
    \resizebox{\textwidth}{!}{
    \begin{tabular}{lccccccc||ccccccc}
    \toprule
    \textbf{Method} & \textbf{ARC-c} & \textbf{BoolQ} & \textbf{HellaS.} & \textbf{MMLU} & \textbf{OBQA} & \textbf{WinoG.} & \textbf{Avg.} & \textbf{ARC-c} & \textbf{BoolQ} & \textbf{HellaS.} & \textbf{MMLU} & \textbf{OBQA} & \textbf{WinoG.} & \textbf{Avg.} \\
    \midrule
    \rowcolor[rgb]{0.851,0.933,0.98}  & \multicolumn{7}{c||}{\textbf{Qwen3-30B-A3B}} & \multicolumn{7}{c}{\textbf{DeepSeek-V2-Lite }} \\
    Original & 0.528 & 0.887 & 0.596 & 0.778 & 0.346 & 0.703 & 0.640 & 0.465 & 0.799 & 0.587 & 0.551 & 0.348 & 0.710 & 0.577 \\
    \midrule
     & \multicolumn{7}{c||}{Pruning Ratio 25\%} & \multicolumn{7}{c}{Pruning Ratio 25\%} \\
    \cmidrule(l){2-15}
    MC-SMoE & 0.392 & 0.777 & 0.415 & 0.540 & 0.318 & 0.588 & 0.505 & 0.367 & 0.713 & 0.531 & 0.422 & \textbf{0.366} & 0.687 & 0.514 \\
    HC-SMoE & 0.458 & 0.865 & 0.515 & 0.669 & 0.410 & \textbf{0.704} & 0.603 & 0.420 & 0.722 & 0.560 & 0.458 & 0.280 & 0.695 & 0.523 \\
    NAEE & 0.481 & 0.870 & 0.555 & 0.701 & 0.290 & 0.693 & 0.598 & 0.375 & 0.669 & 0.531 & 0.365 & 0.290 & 0.669 & 0.483 \\
    DiEP & 0.505 & 0.871 & 0.567 & 0.646 & 0.334 & 0.702 & 0.604 & 0.433 & 0.740 & 0.550 & \textbf{0.505} & 0.302 & 0.660 & 0.532 \\
    MoNE & 0.523 & 0.871 & 0.580 & 0.730 & 0.308 & 0.684 & 0.616 & 0.456 & 0.720 & 0.576 & 0.474 & 0.314 & 0.702 & 0.540 \\
    Shapley-MoE & 0.458 & 0.841 & 0.524 & - & 0.300 & 0.650 & - & 0.395 & \textbf{0.760} & 0.512 & - & 0.262 & 0.673 & - \\
    \rowcolor[rgb]{0.992,0.992,0.741} OMP-MoE (Ours) & \textbf{0.534} & \textbf{0.890} & \textbf{0.592} & \textbf{0.747} & \textbf{0.338} & 0.699 & \textbf{0.633} & \textbf{0.462} & 0.703 & \textbf{0.583} & 0.484 & 0.324 & \textbf{0.709} & \textbf{0.544} \\
    \midrule
     & \multicolumn{7}{c||}{Pruning Ratio 50\%} & \multicolumn{7}{c}{Pruning Ratio 50\%} \\
    \cmidrule(l){2-15}
    MC-SMoE & 0.195 & 0.583 & 0.273 & 0.261 & 0.250 & 0.470 & 0.339 & 0.288 & 0.595 & 0.438 & 0.237 & 0.306 & 0.598 & 0.410 \\
    HC-SMoE & 0.358 & 0.830 & 0.418 & 0.461 & \textbf{0.362} & 0.652 & 0.513 & 0.325 & 0.564 & 0.456 & \textbf{0.302} & 0.216 & 0.625 & 0.415 \\
    NAEE & 0.369 & 0.805 & 0.467 & 0.521 & 0.262 & 0.665 & 0.515 & 0.328 & 0.583 & 0.409 & 0.265 & 0.218 & 0.585 & 0.398 \\
    DiEP & 0.389 & 0.762 & 0.486 & 0.299 & 0.244 & 0.634 & 0.469 & 0.294 & 0.617 & 0.441 & 0.256 & 0.212 & 0.569 & 0.398 \\
    MoNE & 0.387 & 0.866 & \textbf{0.553} & 0.516 & 0.286 & 0.673 & 0.547 & \textbf{0.375} & 0.548 & 0.484 & 0.253 & 0.258 & 0.611 & 0.422 \\
    Shapley-MoE & 0.320 & 0.712 & 0.420 & - & 0.222 & 0.591 & - & 0.268 & 0.240 & 0.395 & - & 0.240 & 0.603 & - \\
    \rowcolor[rgb]{0.992,0.992,0.741} OMP-MoE (Ours) & \textbf{0.533} & \textbf{0.879} & 0.546 & \textbf{0.605} & 0.324 & \textbf{0.695} & \textbf{0.597} & 0.354~ & \textbf{0.602~} & \textbf{0.513~} & 0.273~ & \textbf{0.310~} & \textbf{0.631~} & \textbf{0.447} \\
    \midrule
    \rowcolor[rgb]{0.851,0.933,0.98}  & \multicolumn{7}{c||}{\textbf{GPT-OSS-20B}} & \multicolumn{7}{c}{\textbf{Mixtral-8$\times$7B}} \\
    Original & 0.451 & 0.757 & 0.415 & 0.566 & 0.27 & 0.657 & 0.519 & 0.565 & 0.851 & 0.649 & 0.671 & 0.350 & 0.759 & 0.641 \\
    \midrule
     & \multicolumn{7}{c||}{Pruning Ratio 25\%} & \multicolumn{7}{c}{Pruning Ratio 25\%} \\
    \cmidrule(l){2-15}
    MC-SMoE & 0.399 & 0.725 & 0.399 & 0.488 & 0.258 & 0.665 & 0.489 & 0.262 & 0.521 & 0.432 & 0.250 & 0.194 & 0.585 & 0.374 \\
    HC-SMoE & 0.285 & 0.622 & 0.336 & 0.425 & 0.182 & 0.617 & 0.411 & 0.450 & 0.830 & 0.570 & 0.560 & 0.290 & 0.745 & 0.574 \\
    NAEE & 0.422 & 0.731 & 0.400 & 0.547 & 0.232 & 0.624 & 0.493 & 0.516 & \textbf{0.836} & 0.616 & 0.587 & 0.330 & 0.754 & 0.607 \\
    DiEP & 0.379 & 0.751 & 0.393 & 0.527 & 0.224 & 0.593 & 0.478 & 0.514 & 0.835 & 0.612 & 0.598 & 0.302 & 0.740 & 0.600 \\
    MoNE & 0.421 & 0.653 & 0.406 & 0.503 & \textbf{0.268} & \textbf{0.654} & 0.484 & 0.463 & 0.702 & 0.594 & 0.493 & 0.300 & 0.728 & 0.547 \\
    \rowcolor[rgb]{0.992,0.992,0.741} OMP-MoE (Ours) & \textbf{0.424} & \textbf{0.762} & \textbf{0.409} & \textbf{0.555} & 0.260 & 0.650 & \textbf{0.510} & \textbf{0.535} & 0.831 & \textbf{0.624} & \textbf{0.604} & \textbf{0.330} & \textbf{0.766} & \textbf{0.615} \\
    \midrule
     & \multicolumn{7}{c||}{Pruning Ratio 50\%} & \multicolumn{7}{c}{Pruning Ratio 50\%} \\
    \cmidrule(l){2-15}
    MC-SMoE & \textbf{0.337} & 0.716 & 0.365 & 0.376 & 0.236 & 0.635 & 0.444 & 0.212 & 0.495 & 0.277 & 0.245 & 0.108 & 0.496 & 0.306 \\
    HC-SMoE & 0.208 & 0.421 & 0.31 & 0.295 & 0.174 & 0.555 & 0.327 & 0.322 & 0.754 & 0.493 & 0.392 & 0.256 & 0.671 & 0.481 \\
    NAEE & 0.284 & 0.659 & 0.359 & 0.367 & 0.178 & 0.580 & 0.405 & \textbf{0.489} & 0.814 & 0.577 & 0.473 & 0.290 & 0.729 & 0.562 \\
    DiEP & 0.277 & 0.49 & 0.336 & 0.315 & 0.194 & 0.549 & 0.360 & 0.473 & 0.812 & \textbf{0.580} & 0.489 & 0.292 & \textbf{0.740} & 0.564 \\
    MoNE & 0.265 & 0.476 & 0.369 & 0.240 & 0.222 & 0.627 & 0.367 & 0.310 & 0.620 & 0.412 & 0.231 & 0.202 & 0.647 & 0.404 \\
    \rowcolor[rgb]{0.992,0.992,0.741} OMP-MoE (Ours) & 0.333 & \textbf{0.750} & \textbf{0.376} & \textbf{0.438} & \textbf{0.256} & \textbf{0.642} & \textbf{0.466} & 0.482 & \textbf{0.830} & 0.577 & \textbf{0.497} & \textbf{0.298} & 0.728 & \textbf{0.569} \\
    \bottomrule
    \end{tabular}
    }
\vspace{-10pt}
\end{table*}
\subsection{OMP-MoE$^{\dagger}$ : Adaptive Inference via Energy Prediction}

To alleviate the compute bottleneck during deployment, we introduce OMP-MoE$^\dagger$  as an adaptive inference strategy. This algorithm leverages the precomputed energy coefficients to skip unnecessary experts for each input token, thereby lowering the total computational overhead without sacrificing the fidelity of the reconstructed signal.

We precompute an energy coefficient $\alpha_e = \mathbb{E}[ \| F_e(\mathbf{x}) \|^2 ]$ which is set to quantify the potential reconstruction contribution of each expert in an one-time offline calibration phase. During inference, for a token $\mathbf{x}_i$ with gating weights $g_{i,j}$, we predict the contribution energy as $\hat{\mathcal{E}}_{i,j} = g_{i,j}^2 \cdot \alpha_{e_{i,j}}$. We then select the minimum number of experts $k_i$ required to satisfy a relative residual threshold $\epsilon$:
\begin{equation}
k_i = \min \{ m \in [1, K] \mid (1 - \frac{\sum_{j=1}^m \hat{\mathcal{E}}_{i,j}}{\sum_{j=1}^K \hat{\mathcal{E}}_{i,j} + \zeta}) \le \epsilon \}.
\end{equation}
where $\zeta$ is a small constant to ensure numerical stability.
Appendix~\ref{app:sensitivity_energy} further gives a gradient-sensitivity view showing that the energy rule controls the expected downstream loss perturbation by skipping low-energy expert directions.
Under this dynamic scheme, we execute only the first $k_i$ experts and treat the contributions of the remaining candidates as zero. As analysed in Appendix~\ref{app:sub:flops_derivation}, this mechanism yields a theoretical efficiency gain of $\eta = 1 - \mathbb{E}[m^* | \epsilon]/K$, where $\mathbb{E}[m^* | \epsilon]$ is the expected number of activated experts under threshold $\epsilon$. Consequently, OMP-MoE$^\dagger$ transforms the rigid execution path of standard MoE into a input-aware mechanism, significantly reducing floating-point operations while preserving the reasoning fidelity of the model.

\begin{table*}[t]
    \centering
    \small
    \caption{Comparison of search efficiency and peak memory usage of NAEE, DiEP and OMP-MoE at a 50\% pruning ratio. Measurements are taken on 8 $\times$ NVIDIA H20 GPUs.}
    \label{tab:search_efficiency}
    \resizebox{\textwidth}{!}{
    \begin{tabular}{lcccccccc}
    \toprule
    \multirow{2}{*}{Method} & \multicolumn{2}{c}{Qwen3-30B-A3B} & \multicolumn{2}{c}{DeepSeek-V2-Lite} & \multicolumn{2}{c}{GPT-OSS-20B} & \multicolumn{2}{c}{Mixtral-8$\times$7B} \\
     & Time (s) & Mem (GB) & Time (s) & Mem (GB) & Time (s) & Mem (GB) & Time (s) & Mem (GB) \\
    \midrule
    NAEE & 21301 & 79.4 & 8017 & 42.5 & 12787 & 66.3 & 7168 & 102.4 \\
    DiEP & 8972 & 267.0 & 2745 & 98.8 & 4189 & 122.3 & 5957 & 242.3 \\
    \textbf{OMP-MoE (Ours)} & \textbf{641} & \textbf{75.0} & \textbf{358} & \textbf{40.6} & \textbf{275} & \textbf{56.8} & \textbf{188} & \textbf{97.5} \\
    \bottomrule
    \end{tabular}
    }
\vspace{-10pt}
\end{table*}

\begin{table}[t]
    \centering
    \small
    \caption{Inference performance and efficiency trade offs on DeepSeek-V2-Lite model. We report the average zero-shot accuracy across 8 benchmarks with the total inference time, and the speedup at 25\% and 50\% pruning ratios. The analysis compares the static pruning of OMP-MoE against the OMP-MoE$^\dagger$ algorithm with cross layer allocation enabled.}
    \label{tab:inference_cost}
    \begin{tabular}{l|ccc|lc}
    \toprule
    Prun. Ratio & OMP-MoE & OMP-MoE$^\dagger$  & Avg. Acc & Cost $\downarrow$ & Speedup $\uparrow$ \\
    \midrule
    0 & - & - & 0.609 & 2459s & 1.00$\times$ \\
    \midrule
    25\% & \checkmark & - & 0.576 & 2222s & 1.11$\times$ \\
    25\% & \checkmark & \checkmark & 0.581 & 2019s & 1.22$\times$ \\
    \midrule
    50\% & \checkmark & - & 0.489 & 1825s & 1.35$\times$ \\
    50\% & \checkmark & \checkmark & 0.476 & 1690s & 1.46$\times$ \\
    \bottomrule
    \end{tabular}
\vspace{-10pt}
\end{table}

\section{Experiments}
We evaluate OMP-MoE across multiple MoE LLMs and conduct detailed ablation studies on its key design choices. Further implementation details, expanded results, and analyses are  available in Appendix~\ref{app:config},~\ref{app:detailed_main_reuslts}, and \ref{app:sensitivity_study}, respectively.


\subsection{Experimental Setup}

We conduct comprehensive evaluations across a diverse range of MoE architectures, including Qwen3-30B-A3B~\cite{yang2025qwen3}, DeepSeek-V2-Lite~\cite{liu2024deepseekv2}, GPT-OSS-20B~\cite{agarwal2025gptoss}, and Mixtral-8$\times$7B~\cite{jiang2024mixtral}. Our OMP-MoE is benchmarked against state-of-the-art merging (MC-SMoE~\cite{li2023mcsmoe}, HC-SMoE~\cite{chen2024hcsmoe}) and pruning baselines (NAEE~\cite{lu2024naee}, MoNE~\cite{zhang2025mone}, DiEP~\cite{bai2025diep}, Shapley-MoE~\cite{huang2025shapleymoe}). To assess linguistic and reasoning proficiency, we evaluate zero-shot accuracy on eight general benchmarks: ARC-Challenge and ARC-Easy~\cite{clark2018arc}, BoolQ~\cite{clark2019boolq}, HellaSwag~\cite{zellers2019hellaswag}, MMLU~\cite{hendrycks2021mmlu}, OpenBookQA~\cite{mihaylov2018openbookqa}, RTE~\cite{Bentivogli2009rte}, and WinoGrande~\cite{sakaguchi2020winogrande}. We further evaluate perplexity, compositional reasoning on BBH~\cite{suzgun2022bbh}, mathematical reasoning on GSM8K~\cite{cobbe2021gsm8k}, and long-context modeling on LongBench~\cite{bai2024longbench}. Implementation utilizes 64 WikiText-2~\cite{merity2017wikitext2} calibration samples with risk penalty coefficient $\lambda = 3.0$ and dynamic threshold $\epsilon = 0.1$ on $8 \times$ NVIDIA H20 GPUs.

\subsection{Main Results}

\textbf{General Capabilities and Comparative Analysis.} We evaluate the zero-shot performance of OMP-MoE across 6 tasks on Qwen, DeepSeek, GPT-OSS and Mixtral MoE model. Table~\ref{tab:main_results} summarizes the average accuracy on these models at 25\% and 50\% expert pruning ratios. OMP-MoE consistently achieves superior accuracy compared to state-of-the-art merging and pruning baselines. At a 50\% expert pruning ratio, OMP-MoE maintains an average accuracy of 0.597 for the Qwen3-30B-A3B model and preserves 93.3\% of the original 0.640 baseline performance. Simultaneously, we observe similar patterns of resilience for other MoE models. These results confirm that our OMP-MoE effectively identifies the experts that are essential for model performance. We provide complete evaluation results across 8-task benchmarks in Appendix~\ref{app:detailed_main_reuslts}. The robustness and scaling results in Appendix~\ref{app:robustness_scaling} further show that OMP-MoE remains effective under more aggressive compression and scales to larger models.

\textbf{Pruning Overhead and Inference Acceleration.}
Table~\ref{tab:search_efficiency} reports the pruning overhead at a 50\% pruning ratio. On Qwen3-30B-A3B, OMP-MoE finishes search in 641 seconds, achieving a 33$\times$ speedup over NAEE and a 14$\times$ speedup over DiEP, while using lower peak memory than both baselines. Similar gains are observed across DeepSeek-V2-Lite, GPT-OSS-20B, and Mixtral-8$\times$7B, confirming the efficiency of the linear OMP search. Table~\ref{tab:inference_cost} further shows that OMP-MoE reduces inference cost on DeepSeek-V2-Lite, with OMP-MoE$^\dagger$ reaching a 1.46$\times$ speedup at a 50\% pruning ratio by skipping low-energy expert executions.

\begin{table*}[t]
\centering
\small
\setlength{\tabcolsep}{3.5pt}

\begin{minipage}[t][0.22\textheight][t]{0.43\textwidth}
\centering
\caption{Evaluation of mathematical reasoning, long-context modeling, and compositional reasoning tasks on Qwen3-30B-A3B at a 50\% pruning ratio.}
\label{tab:complex_tasks}
\vfill
\resizebox{\columnwidth}{!}{
\begin{tabular}{lccc}
\toprule
Method & GSM8K & LongBench & BBH \\
\midrule
Original & 0.891 & 0.578 & 0.363 \\
MC-SMoE & 0.005 & 0.504 & 0.219 \\
HC-SMoE & 0.007 & 0.284 & 0.235 \\
NAEE & 0.236 & 0.470 & 0.310 \\
MoNE & 0.210 & 0.496 & 0.262 \\
DiEP & 0.007 & 0.329 & 0.299 \\
\textbf{OMP-MoE} & \textbf{0.531} & \textbf{0.523} & \textbf{0.323} \\
\bottomrule
\end{tabular}
}
\end{minipage}
\hfill
\begin{minipage}[t][0.22\textheight][t]{0.53\textwidth}
\centering
\caption{Evaluation of OMP-MoE combined with post-training quantization and pruning methods on DeepSeek-V2-Lite. We report the perplexity on WikiText-2 and C4 datasets and the average accuracy across 8 zero-shot tasks.}
\label{tab:quantization}
\vfill
\resizebox{\columnwidth}{!}{
\begin{tabular}{lccc}
\toprule
Method & WikiText-2 & C4 & Avg. Acc \\
\midrule
OMP-MoE 25\% & 13.824 & 10.618 & 0.576 \\
+ GPTQ & 14.236 & 12.438 & 0.541 \\
+ SparseGPT (4:8) & 15.673 & 13.709 & 0.525 \\
+ Wanda & 14.430 & 13.321 & 0.534 \\
\bottomrule
\end{tabular}
}
\end{minipage}

\vspace{-10pt}
\end{table*}

\textbf{Performance on Reasoning-Heavy and Long-Context Tasks.} We evaluate OMP-MoE on GSM8K~\cite{cobbe2021gsm8k}, LongBench~\cite{bai2024longbench}, and BBH~\cite{suzgun2022bbh} to test mathematical reasoning, long-context modeling, and compositional reasoning, which remain central in recent work on efficient reasoning~\cite{chen2026adaptive,zhu2026outlier} and long-context modeling~\cite{hu2026hierarchical}.
Table~\ref{tab:complex_tasks} presents the experimental results for the Qwen3-30B-A3B model. The model retains substantial reasoning ability even after removing 50\% of the experts, achieving 0.531 on GSM8K, 0.523 on LongBench, and 0.323 on BBH. Compared with NAEE, OMP-MoE improves GSM8K from 0.236 to 0.531, showing that the selected experts are important for difficult reasoning and long-context inputs.
Appendix~\ref{app:recent_pruning_baselines} adds direct comparisons with REAP~\cite{lasby2025reap} and EASY-EP~\cite{Dong2025easyep}, while Appendix~\ref{app:domain_specific_calibration} reports GSM8K and HumanEval~\cite{chen2021codex} under task-relevant calibration.

\textbf{Orthogonality with Quantization and Pruning.} We further combine 25\% expert pruning with GPTQ~\cite{frantar2023gptq}, SparseGPT~\cite{frantar2023sparsegpt}, and Wanda~\cite{sun2024wanda}.
Table~\ref{tab:quantization} shows that OMP-MoE remains compatible with quantization and unstructured pruning techniques on DeepSeek-V2-Lite.
After adding GPTQ, SparseGPT, or Wanda, the model still retains average accuracies of 0.541, 0.525, and 0.534, respectively, indicating that expert-level pruning can be combined with quantization and unstructured pruning for stronger compression.

\subsection{Ablation Studies}

\begin{wrapfigure}{r}{0.55\textwidth}
\captionsetup{type=table}
\vspace{-1.2cm}
    \centering

\begin{minipage}{0.55\textwidth}
    \centering
\begin{spacing}{0.92}
\caption{Performance of different cross-layer allocation methods on 8 tasks zero-shot average accuracy.}
\label{tab:global_allocation}
\resizebox{0.85\columnwidth}{!}{
\begin{tabular}{lcccc}
\toprule
\multirow{2}{*}{Method} & \multicolumn{2}{c}{Qwen3-30B-A3B} & \multicolumn{2}{c}{DeepSeek-V2-Lite} \\
\cmidrule(l){2-5}
 & 25\% & 50\% & 25\% & 50\% \\
\midrule
Sub-MoE & 0.601 & 0.574 & - & - \\
HEAPr & 0.590 & 0.480 & - & - \\
HC-SMoE & 0.644 & 0.536 & 0.552 & 0.456 \\
DiEP & 0.640 & 0.511 & 0.566 & 0.435 \\
\midrule
\multicolumn{5}{l}{\textbf{OMP-MoE}} \\
\multicolumn{1}{r}{ \textbf{\textit{w/o}} cross-layer} & 0.671 & 0.618 & 0.572 & 0.468 \\
\multicolumn{1}{r}{\textbf{\textit{w}} cross-layer} & \textbf{0.676} & \textbf{0.643} & \textbf{0.576} & \textbf{0.489} \\
\bottomrule
\end{tabular}
}
\end{spacing}
\end{minipage}
\vspace{-0.6cm}
\end{wrapfigure}

\textbf{Effectiveness of Cross-Layer Budget Allocation.}
Table~\ref{tab:global_allocation} compares allocation methods. At 50\% pruning, OMP-MoE reaches 0.643 on Qwen3-30B-A3B, exceeding the 0.511 achieved by DiEP and the 0.480 achieved by HEAPr. Relative to uniform allocation, it improves Qwen3 from 0.618 to 0.643 and DeepSeek-V2-Lite from 0.468 to 0.489, confirming effective layer-wise budgeting. The consistent gains on both architectures indicate that layer sensitivity is heterogeneous and cannot be captured by assigning the same pruning ratio to every layer.

\textbf{Impact of Renormalization Risk Penalty.} Table~\ref{tab:risk_penalty_accuracy} studies the routing-risk weight $\lambda$ at 50\% pruning. Reconstruction error alone ($\lambda=0$) yields 0.618 on Qwen3-30B-A3B, whereas including preserved routing mass raises accuracy to 0.643. Thus, $\psi_l(k)=-\log(C_l(k)+\zeta)$ helps avoid allocations that induce strong routing renormalization. The monotonic improvement as $\lambda$ increases further suggests that reconstruction fidelity and routing stability provide complementary signals for allocation.

\begin{wrapfigure}{r}{0.45\textwidth}
    \captionsetup{type=table}
    \vspace{-0.47cm}
        \centering
\begin{minipage}{0.45\textwidth}
    \centering
    \begin{spacing}{0.9}
    \caption{Performance of different routing-risk weights on Qwen3-30B-A3B at a 50\% pruning ratio.}
    \label{tab:risk_penalty_accuracy}
\resizebox{1.0\textwidth}{!}{
    \begin{tabular}{lcccc}
    \toprule
    \multirow{2}{*}{\textbf{Risk Function} $\psi_l(k)$} & \multicolumn{4}{c}{\textbf{Routing-Risk Weight} $\lambda$} \\
    \cmidrule(l){2-5}
     & 0 & 1 & 2 & 3 \\
    \midrule
    $-\log(C_l(k)+\zeta)$ & 0.618 & 0.622 & 0.625 & \textbf{0.643} \\
    \bottomrule
    \end{tabular}
}
\end{spacing}
\end{minipage}
\vspace{-0.8cm}
\end{wrapfigure}

\textbf{Impact of OMP-MoE$^\dagger$ Dynamic Threshold.}
The threshold $\epsilon$ controls the residual expert energy skipped at inference. In Figure~\ref{fig:skip_and_calib}(a), $\epsilon=0.1$ attains the best accuracy of 0.581 while skipping about 35\% of expert executions on DeepSeek-V2-Lite. Larger thresholds remove too much computation and reduce accuracy. This operating point therefore provides a favorable balance between predictive quality and inference efficiency without additional training.
\begin{figure}[t]
    \centering
    \begin{subfigure}[t]{0.48\textwidth}
        \centering
        \includegraphics[width=\linewidth]{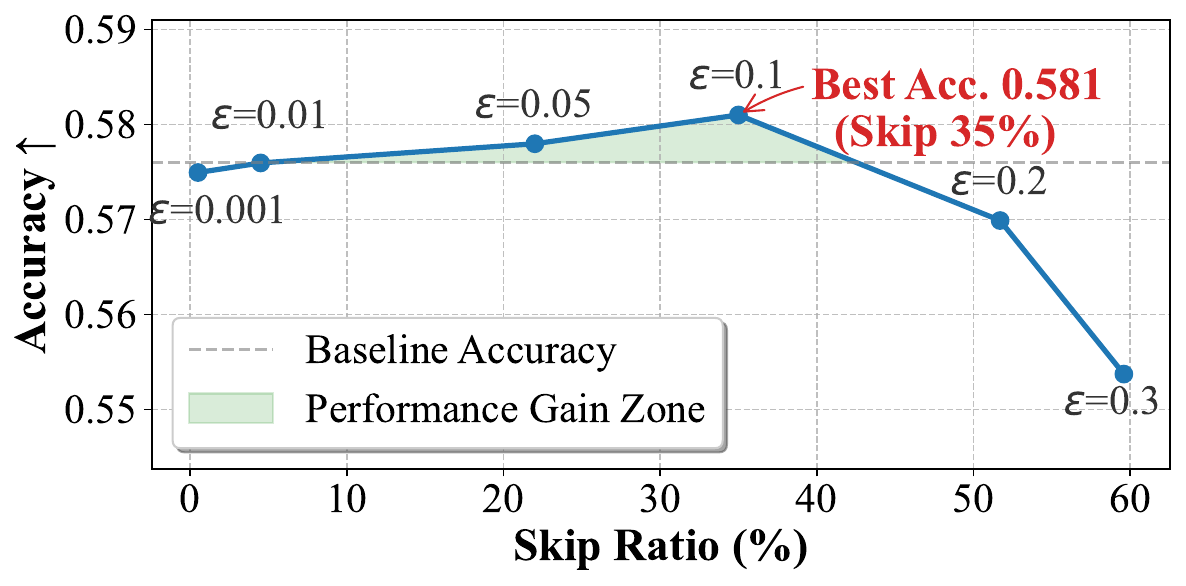}
        \caption{Trade off between zero-shot accuracy and the expert skip ratio.}
        \label{fig:epsilon_tradeoff}
    \end{subfigure}
    \begin{subfigure}[t]{0.48\textwidth}
        \centering
        \includegraphics[width=\linewidth]{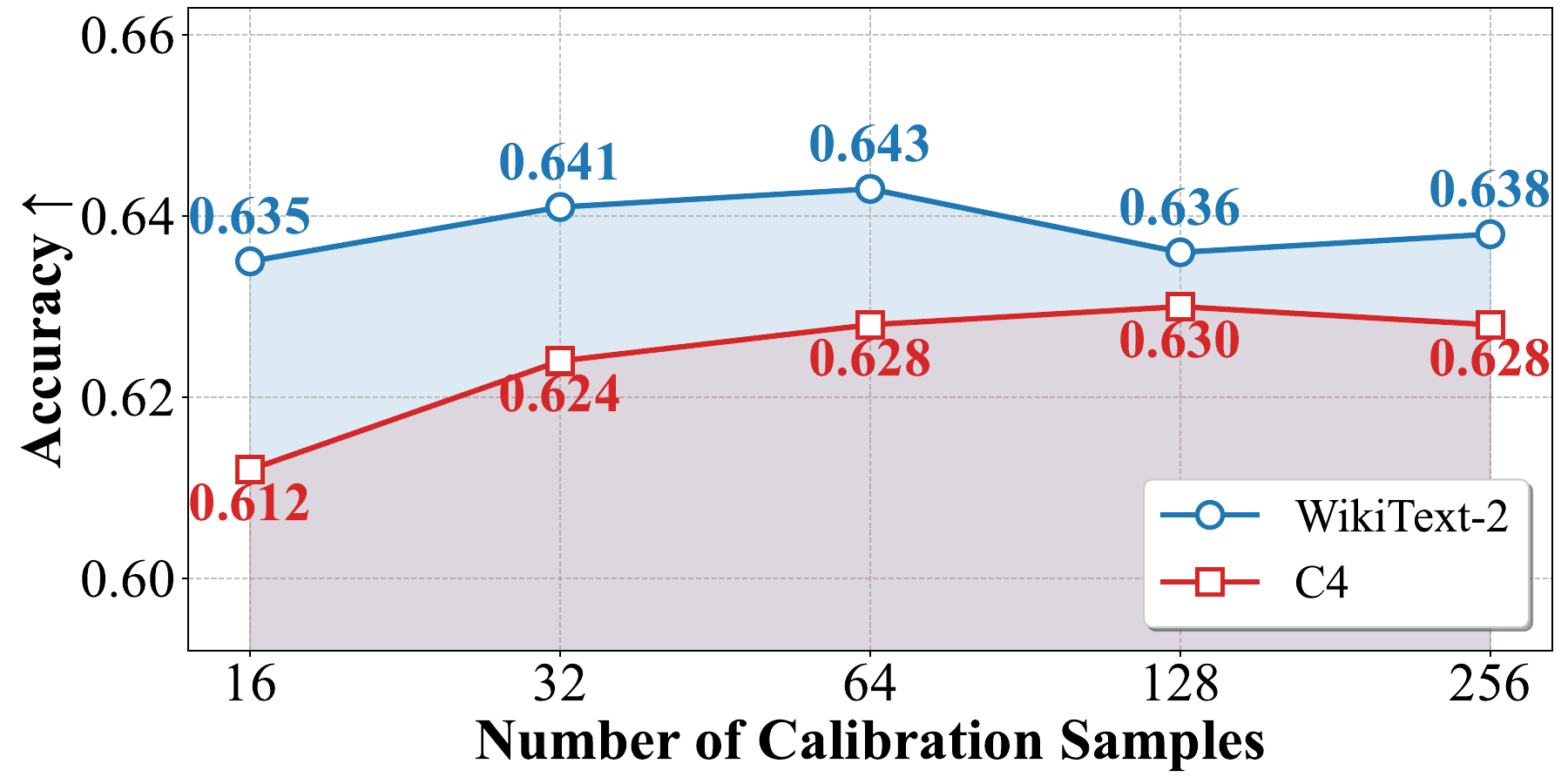}
        \caption{Calibration sensitivity.}
        \label{fig:calibration_analysis}
    \end{subfigure}
    \caption{Ablations across eight benchmarks. (a) Accuracy and expert skipping under different adaptive thresholds on 25\%-pruned DeepSeek-V2-Lite. (b) Accuracy under different calibration sizes and domains on 50\%-pruned Qwen3-30B-A3B.}
\label{fig:skip_and_calib}
\vspace{-10pt}
\end{figure}

\textbf{Impact of Calibration Configurations.}
Figure~\ref{fig:skip_and_calib}(b) studies calibration size and domain on 50\%-pruned Qwen3-30B-A3B. WikiText-2 peaks at 0.643 with 64 samples, while C4 reaches 0.630 with 128. Additional samples provide little benefit. Appendix~\ref{app:reconstruction_accuracy} further shows a positive relation between normalized reconstruction fidelity and downstream accuracy. These results indicate that a modest calibration set is sufficient, although the calibration domain can affect the attainable accuracy.

\section{Conclusion}

Existing MoE compression methods rely on costly expert selection in a combinatorial search space and use uniform and fixed resource budgets across layers and inputs. This design ignores the structural heterogeneity of MoE layers and leads to suboptimal compression and unnecessary inference costs. We present OMP-MoE as a new framework for MoE compression that reformulates expert selection as a sparse signal reconstruction task. This formulation reduces the search space from combinatorial to linear and significantly accelerates the search process. We also introduce cross-layer budget allocation, which assigns compression budgets according to the redundancy of each layer, and OMP-MoE$^\dagger$ adaptive inference, which dynamically adjusts computation for different inputs. These innovations improve compression effectiveness and reduce inference costs. Comprehensive experiments across diverse MoE models validate our framework. Its training-free operation and consistent effectiveness across architectures make OMP-MoE a practical solution for deploying large-scale MoE LLMs.

\textbf{Limitations.} Following most MoE compressors, our technique relies on calibration dataset selection, and we will explore advanced calibration procedures~\cite{shi2025dids} in future work. Future work could combine OMP-MoE with complementary compression and adaptation techniques~\cite{li2025nora,li2025aira,zhu2025oamerge,zhu2026outlier} to further improve efficiency while preserving domain-specific capabilities. Finally, the theoretical connection to OMP offers a promising basis for studying how expert selection relates to model interpretability and robustness under distribution shifts or perturbations.
\clearpage

\bibliographystyle{plain}
\bibliography{OMP-MoE}

@article{jiang2024mixtral,
  title={Mixtral of experts},
  author={Jiang, Albert Q and Sablayrolles, Alexandre and Roux, Antoine and Mensch, Arthur and Savary, Blanche and Bamford, Chris and Chaplot, Devendra Singh and Casas, Diego de las and Hanna, Emma Bou and Bressand, Florian and others},
  journal={arXiv preprint arXiv:2401.04088},
  year={2024}
}

@article{deepseekai2024deepseekv3,
  title={DeepSeek-V3 Technical Report},
  author={DeepSeek-AI},
  journal={arXiv preprint arXiv:2412.19437},
  year={2024}
}

@article{su2025superexperts,
  title={Unveiling Super Experts in Mixture-of-Experts Large Language Models},
  author={Su, Zunhai and Li, Qingyuan and Zhang, Hao and Ye, Weihao and Xue, Qibo and Qian, Yulei and Xie, Yuchen and Wong, Ngai and Yuan, Kehong},
  journal={arXiv preprint arXiv:2507.23279},
  year={2025}
}

@article{yang2025qwen3,
  title={Qwen3 technical report},
  author={Yang, An and Li, Anfeng and Yang, Baosong and Zhang, Beichen and Hui, Binyuan and Zheng, Bo and Yu, Bowen and Gao, Chang and Huang, Chengen and Lv, Chenxu and others},
  journal={arXiv preprint arXiv:2505.09388},
  year={2025}
}

@article{chen2021codex,
  title={Evaluating Large Language Models Trained on Code},
  author={Mark Chen and Jerry Tworek and Heewoo Jun and Qiming Yuan and Henrique Ponde de Oliveira Pinto and Jared Kaplan and Harri Edwards and Yuri Burda and Nicholas Joseph and Greg Brockman and Alex Ray and Raul Puri and Gretchen Krueger and Michael Petrov and Heidy Khlaaf and Girish Sastry and Pamela Mishkin and Brooke Chan and Scott Gray and Nick Ryder and Mikhail Pavlov and Alethea Power and Lukasz Kaiser and Mohammad Bavarian and Clemens Winter and Philippe Tillet and Felipe Petroski Such and Dave Cummings and Matthias Plappert and Fotios Chantzis and Elizabeth Barnes and Ariel Herbert-Voss and William Hebgen Guss and Alex Nichol and Alex Paino and Nikolas Tezak and Jie Tang and Igor Babuschkin and Suchir Balaji and Shantanu Jain and William Saunders and Christopher Hesse and Andrew N. Carr and Jan Leike and Josh Achiam and Vedant Misra and Evan Morikawa and Alec Radford and Matthew Knight and Miles Brundage and Mira Murati and Katie Mayer and Peter Welinder and Bob McGrew and Dario Amodei and Sam McCandlish and Ilya Sutskever and Wojciech Zaremba},
  journal={arXiv preprint arXiv:2107.03374},
  year={2021}
}

@inproceedings{lu2024naee,
    title = "Not All Experts are Equal: Efficient Expert Pruning and Skipping for Mixture-of-Experts Large Language Models",
    author = "Lu, Xudong  and
      Liu, Qi  and
      Xu, Yuhui  and
      Zhou, Aojun  and
      Huang, Siyuan  and
      Zhang, Bo  and
      Yan, Junchi  and
      Li, Hongsheng",
    booktitle = "Proceedings of the 62nd Annual Meeting of the ACL",
    month = "Aug",
    year = "2024",
    address = "Bangkok, Thailand",
    publisher = "Association for Computational Linguistics",
    pages = "6159--6172",
}

@inproceedings{
li2023mcsmoe,
title={Merge, Then Compress: Demystify Efficient {SM}oE with Hints from Its Routing Policy},
author={Pingzhi Li and Zhenyu Zhang and Prateek Yadav and Yi-Lin Sung and Yu Cheng and Mohit Bansal and Tianlong Chen},
booktitle={International Conference on Learning Representations},
year={2024},
url={https://openreview.net/forum?id=eFWG9Cy3WK}
}

@inproceedings{
chen2024hcsmoe,
title={Retraining-free Merging of Sparse MoE via Hierarchical Clustering},
author={I-Chun Chen and Hsu-Shen Liu and Wei-Fang Sun and Chen-Hao Chao and Yen-Chang Hsu and Chun-Yi Lee},
booktitle={International Conference on Machine Learning},
year={2025},
url={https://openreview.net/forum?id=hslOzRxzXL}
}

@InProceedings{frantar2023sparsegpt,
  title = 	 {{S}parse{GPT}: Massive Language Models Can be Accurately Pruned in One-Shot},
  author =       {Frantar, Elias and Alistarh, Dan},
  booktitle = 	 {International Conference on Machine Learning},
  pages = 	 {10323--10337},
  year = 	 {2023},
  volume = 	 {202},
  month = 	 {Jul},
  publisher =    {PMLR},
}

@inproceedings{sun2024wanda,
  title={A Simple and Effective Pruning Approach for Large Language Models},
  author={Mingjie Sun and Zhuang Liu and Anna Bair and J Zico Kolter},
  booktitle={International Conference on Learning Representations},
  year={2024},
  url={https://openreview.net/forum?id=PxoFut3dWW}
}

@article{li2025heapr,
  title={HEAPr: Hessian-based Efficient Atomic Expert Pruning in Output Space},
  author={Li, Ke and Yang, Zheng and Zhou, Zhongbin and Xue, Feng and Jiang, Zhonglin and Wang, Wenxiao},
  journal={arXiv preprint arXiv:2509.22299},
  year={2025}
}

@inproceedings{pati1993orthogonal,
  title={Orthogonal matching pursuit: Recursive function approximation with applications to wavelet decomposition},
  author={Pati, Yagyensh Chandra and Rezaiifar, Ramin and Krishnaprasad, Perinkulam Sambamurthy},
  booktitle={Proceedings of 27th Asilomar conference on signals, systems and computers},
  pages={40--44},
  year={1993},
  organization={IEEE}
}

@article{tropp2007signal,
  title={Signal recovery from random measurements via orthogonal matching pursuit},
  author={Tropp, Joel A and Gilbert, Anna C},
  journal={IEEE Transactions on information theory},
  volume={53},
  number={12},
  pages={4655--4666},
  year={2007},
  publisher={IEEE}
}

@inproceedings{lasby2025reap,
 author = {Lasby, Mike and Lazarevich, Ivan and Sinnadurai, Nish and Lie, Sean and Ioannou, Yani and Thangarasa, Vithursan},
 booktitle = {International Conference on Learning Representations},
 pages = {146883--146912},
 title = {REAP the Experts: Why Pruning Prevails for One-Shot MoE compression},
 volume = {2026},
 year = {2026}
}

@article{shazeer2017outrageously,
  title={Outrageously large neural networks: The sparsely-gated mixture-of-experts layer},
  author={Shazeer, Noam and Mirhoseini, Azalia and Maziarz, Krzysztof and Davis, Andy and Le, Quoc and Hinton, Geoffrey and Dean, Jeff},
  journal={arXiv preprint arXiv:1701.06538},
  year={2017}
}

@article{li2025submoe,
  title={Sub-MoE: Efficient Mixture-of-Expert LLMs Compression via Subspace Expert Merging},
  author={Li, Lujun and Qiyuan, Zhu and Wang, Jiacheng and Li, Wei and Gu, Hao and Han, Sirui and Guo, Yike},
  journal={arXiv preprint arXiv:2506.23266},
  year={2025}
}

@inproceedings{
lepikhin2021gshard,
title={{\{}GS{\}}hard: Scaling Giant Models with Conditional Computation and Automatic Sharding},
author={Dmitry Lepikhin and HyoukJoong Lee and Yuanzhong Xu and Dehao Chen and Orhan Firat and Yanping Huang and Maxim Krikun and Noam Shazeer and Zhifeng Chen},
booktitle={International Conference on Learning Representations},
year={2021},
url={https://openreview.net/forum?id=qrwe7XHTmYb}
}

@article{suzgun2022bbh,
  title={Challenging BIG-Bench Tasks and Whether Chain-of-Thought Can Solve Them},
  author={Suzgun, Mirac and Scales, Nathan and Sch{\"a}rli, Nathanael and Gehrmann, Sebastian and Tay, Yi and Chung, Hyung Won and Chowdhery, Aakanksha and Le, Quoc V and Chi, Ed H and Zhou, Denny and and Wei, Jason},
  journal={arXiv preprint arXiv:2210.09261},
  year={2022}
}

@inproceedings{
kim2025lexico,
title={Lexico: Extreme {KV} Cache Compression via Sparse Coding over Universal Dictionaries},
author={Junhyuck Kim and Jongho Park and Jaewoong Cho and Dimitris Papailiopoulos},
booktitle={International Conference on Machine Learning},
year={2025},
url={https://openreview.net/forum?id=Yh9vxlxnjA}
}

@INPROCEEDINGS{Yang2009CNN,
  author={Yang, Jie and Bouzerdoum, Abdesselam and Phung, Son Lam},
  booktitle={International Joint Conference on Neural Networks}, 
  title={A Neural Network pruning approach based on Compressive Sampling}, 
  year={2009},
  pages={3428-3435},
}

@inproceedings{
huang2025shapleymoe,
title={Discovering Important Experts for Mixture-of-Experts Models Pruning Through a Theoretical Perspective},
author={Weizhong Huang and Yuxin Zhang and Xiawu Zheng and Fei Chao and Rongrong Ji and Liujuan Cao},
booktitle={Advances in Neural Information Processing Systems},
year={2025},
}

@article{liu2024deepseekv2,
  title={Deepseek-v2: A strong, economical, and efficient mixture-of-experts language model},
  author={Liu, Aixin and Feng, Bei and Wang, Bin and Wang, Bingxuan and Liu, Bo and Zhao, Chenggang and Dengr, Chengqi and Ruan, Chong and Dai, Damai and Guo, Daya and others},
  journal={arXiv preprint arXiv:2405.04434},
  year={2024}
}

@article{agarwal2025gptoss,
  title={gpt-oss-120b \& gpt-oss-20b model card},
  author={Agarwal, Sandhini and Ahmad, Lama and Ai, Jason and Altman, Sam and Applebaum, Andy and Arbus, Edwin and Arora, Rahul K and Bai, Yu and Baker, Bowen and Bao, Haiming and others},
  journal={arXiv preprint arXiv:2508.10925},
  year={2025}
}

@article{fedus2022switch,
  title={Switch transformers: Scaling to trillion parameter models with simple and efficient sparsity},
  author={Fedus, William and Zoph, Barret and Shazeer, Noam},
  journal={Journal of Machine Learning Research},
  volume={23},
  number={120},
  pages={1--39},
  year={2022}
}

@inproceedings{
frantar2023gptq,
title={{OPTQ}: Accurate Quantization for Generative Pre-trained Transformers},
author={Elias Frantar and Saleh Ashkboos and Torsten Hoefler and Dan Alistarh},
booktitle={The Eleventh International Conference on Learning Representations },
year={2023},
url={https://openreview.net/forum?id=tcbBPnfwxS}
}

@inproceedings{
hendrycks2021mmlu,
title={Measuring Massive Multitask Language Understanding},
author={Dan Hendrycks and Collin Burns and Steven Basart and Andy Zou and Mantas Mazeika and Dawn Song and Jacob Steinhardt},
booktitle={International Conference on Learning Representations},
year={2021},
url={https://openreview.net/forum?id=d7KBjmI3GmQ}
}

@article{zhang2025mone,
  title={MoNE: Replacing Redundant Experts with Lightweight Novices for Structured Pruning of MoE},
  author={Zhang, Geng and Han, Yuxuan and Lou, Yuxuan and Zhao, Wangbo and Zhang, Yiqi and You, Yang},
  journal={arXiv preprint arXiv:2507.00390},
  year={2025}
}

@article{bai2025diep,
  title={DiEP: Adaptive Mixture-of-Experts Compression through Differentiable Expert Pruning},
  author={Bai, Sikai and Li, Haoxi and Zhang, Jie and Hong, Zicong and Guo, Song},
  journal={arXiv preprint arXiv:2509.16105},
  year={2025}
}

@inproceedings{bai2024longbench,
    title = "{L}ong{B}ench: A Bilingual, Multitask Benchmark for Long Context Understanding",
    author = "Bai, Yushi  and
      Lv, Xin  and
      Zhang, Jiajie  and
      Lyu, Hongchang  and
      Tang, Jiankai  and
      Huang, Zhidian  and
      Du, Zhengxiao  and
      Liu, Xiao  and
      Zeng, Aohan  and
      Hou, Lei  and
      Dong, Yuxiao  and
      Tang, Jie  and
      Li, Juanzi",
    booktitle = "Proceedings of the Association for Computational Linguistics",
    month = "Aug",
    year = "2024",
    address = "Bangkok, Thailand",
    publisher = "Association for Computational Linguistics",
    pages = "3119--3137",
}

@article{cobbe2021gsm8k,
  title={Training Verifiers to Solve Math Word Problems},
  author={Cobbe, Karl and Kosaraju, Vineet and Bavarian, Mohammad and Chen, Mark and Jun, Heewoo and Kaiser, Lukasz and Plappert, Matthias and Tworek, Jerry and Hilton, Jacob and Nakano, Reiichiro and Hesse, Christopher and Schulman, John},
  journal={arXiv preprint arXiv:2110.14168},
  year={2021}
}

@inproceedings{Bentivogli2009rte,
  author       = {Luisa Bentivogli and
                  Bernardo Magnini and
                  Ido Dagan and
                  Hoa Trang Dang and
                  Danilo Giampiccolo},
  title        = {The Fifth {PASCAL} Recognizing Textual Entailment Challenge},
  booktitle    = {Proceedings of the TAC 2009, Gaithersburg,
                  Maryland, USA, November 16-17, 2009},
  publisher    = {{NIST}},
  year         = {2009},
}

@inproceedings{clark2019boolq,
    title = "{B}ool{Q}: Exploring the Surprising Difficulty of Natural Yes/No Questions",
    author = "Clark, Christopher  and
      Lee, Kenton  and
      Chang, Ming-Wei  and
      Kwiatkowski, Tom  and
      Collins, Michael  and
      Toutanova, Kristina",
    booktitle = "Proceedings of the Association for Computational Linguistics",
    month = "June",
    year = "2019",
    address = "Minneapolis, Minnesota",
    publisher = "Association for Computational Linguistics",
    pages = "2924--2936",
}

@article{merity2017wikitext2,
  title={Pointer sentinel mixture models},
  author={Merity, Stephen and Xiong, Caiming and Bradbury, James and Socher, Richard},
  journal={arXiv preprint arXiv:1609.07843},
  year={2017}
}

@article{raffel2020c4,
  title={Exploring the limits of transfer learning with a unified text-to-text transformer},
  author={Raffel, Colin and Shazeer, Noam and Roberts, Adam and Lee, Katherine and Narang, Sharan and Matena, Michael and Zhou, Yanqi and Li, Wei and Liu, Peter J},
  journal={Journal of Machine Learning Research},
  volume={21},
  number={140},
  pages={1--67},
  year={2020}
}

@inproceedings{mihaylov2018openbookqa,
    title = "Can a Suit of Armor Conduct Electricity? A New Dataset for Open Book Question Answering",
    author = "Mihaylov, Todor  and
      Clark, Peter  and
      Khot, Tushar  and
      Sabharwal, Ashish",
    booktitle = "Proceedings of the Conference on Empirical Methods in Natural Language Processing",
    month = "Oct",
    year = "2018",
    address = "Brussels, Belgium",
    publisher = "Association for Computational Linguistics",
    pages = "2381--2391",
}

@article{sakaguchi2020winogrande,
  title={WinoGrande: An adversarial winograd schema challenge at scale},
  author={Sakaguchi, Keisuke and Bras, Ronan Le and Bhagavatula, Chandra and Choi, Yejin},
  journal={Communications of the ACM},
  volume={64},
  number={9},
  pages={99--106},
  year={2020}
}

@inproceedings{zellers2019hellaswag,
    title = "{H}ella{S}wag: Can a Machine Really Finish Your Sentence?",
    author = "Zellers, Rowan  and
      Holtzman, Ari  and
      Bisk, Yonatan  and
      Farhadi, Ali  and
      Choi, Yejin",
    booktitle = "Proceedings of the Association for Computational Linguistics",
    month = "Jul",
    year = "2019",
    address = "Florence, Italy",
    publisher = "Association for Computational Linguistics",
    pages = "4791--4800",
}

@article{clark2018arc,
  title={Think you have solved question answering? Try ARC, the AI2 reasoning challenge},
  author={Clark, Peter and Cowhey, Isaac and Etzioni, Oren and Khot, Tushar and Sabharwal, Ashish and Schoenick, Carissa and Tafjord, Oyvind},
  journal={arXiv preprint arXiv:1803.05457},
  year={2018}
}

@inproceedings{dai2022stablemoe,
    title = "{S}table{M}o{E}: Stable Routing Strategy for Mixture of Experts",
    author = "Dai, Damai  and
      Dong, Li  and
      Ma, Shuming  and
      Zheng, Bo  and
      Sui, Zhifang  and
      Chang, Baobao  and
      Wei, Furu",
    booktitle = "Proceedings of the Association for Computational Linguistics",
    month = "May",
    year = "2022",
    address = "Dublin, Ireland",
    publisher = "Association for Computational Linguistics",
    pages = "7085--7095",
}

@article{report2023gpt4,
  title={Gpt-4 technical report},
  author={Achiam, Josh and Adler, Steven and Agarwal, Sandhini and Ahmad, Lama and Akkaya, Ilge and Aleman, Florencia Leoni and Almeida, Diogo and Altenschmidt, Janko and Altman, Sam and Anadkat, Shyamal and others},
  journal={arXiv preprint arXiv:2303.08774},
  year={2023}
}

@article{report2024llama3,
  title={The llama 3 herd of models},
  author={Grattafiori, Aaron and Dubey, Abhimanyu and Jauhri, Abhinav and Pandey, Abhinav and Kadian, Abhishek and Al-Dahle, Ahmad and Letman, Aiesha and Mathur, Akhil and Schelten, Alan and Vaughan, Alex and others},
  journal={arXiv preprint arXiv:2407.21783},
  year={2024}
}

@inproceedings{
ghorbani2022scaling,
title={Scaling Laws for Neural Machine Translation},
author={Behrooz Ghorbani and Orhan Firat and Markus Freitag and Ankur Bapna and Maxim Krikun and Xavier Garcia and Ciprian Chelba and Colin Cherry},
booktitle={International Conference on Learning Representations},
year={2022},
url={https://openreview.net/forum?id=hR_SMu8cxCV}
}

@InProceedings{Clark2022ScalingMoe,
  title = 	 {Unified Scaling Laws for Routed Language Models},
  author =       {Clark, Aidan and De Las Casas, Diego and Guy, Aurelia and Mensch, Arthur and Paganini, Michela and Hoffmann, Jordan and Damoc, Bogdan and Hechtman, Blake and Cai, Trevor and Borgeaud, Sebastian and Van Den Driessche, George Bm and Rutherford, Eliza and Hennigan, Tom and Johnson, Matthew J and Cassirer, Albin and Jones, Chris and Buchatskaya, Elena and Budden, David and Sifre, Laurent and Osindero, Simon and Vinyals, Oriol and Ranzato, Marc'Aurelio and Rae, Jack and Elsen, Erich and Kavukcuoglu, Koray and Simonyan, Karen},
  booktitle = 	 {Proceedings of the International Conference on Machine Learning},
  pages = 	 {4057--4086},
  year = 	 {2022},
  volume = 	 {162},
  month = 	 {17--23 Jul},
  publisher =    {PMLR},
}

@inproceedings{li2025nora,
  author    = {Li, Lujun and Lin, Cheng and Li, Dezhi and Huang, You-Liang and Li, Wei and Wu, Tianyu and Zou, Jie and Xue, Wei and Han, Sirui and Guo, Yike},
  title     = {Efficient Fine-Tuning of Large Models Via Nested Low-Rank Adaptation},
  booktitle = {IEEE/CVF International Conference on Computer Vision (ICCV)},
  pages     = {22252--22262},
  publisher = {IEEE},
  year      = {2025},
  address   = {Honolulu, HI, USA},
  doi       = {10.1109/ICCV51701.2025.02066},
}

@inproceedings{chen2026adaptive,
  author    = {Chen, Tianle and Cheng, Pengyu and Zhu, Qiyuan and Wang, Jiacheng and Liu, Bei and Gu, Hao and Shen, Ruijie and Hou, Xiaofeng and Han, Sirui and Liu, Jiacheng},
  title     = {Adaptive Spatial and Temporal Redundancy Optimization for Efficient Reasoning in Large Language Models},
  booktitle = {Proceedings of the 64th Annual Meeting of the Association for Computational Linguistics (Volume 1: Long Papers)},
  pages     = {24647--24662},
  publisher = {Association for Computational Linguistics},
  address   = {San Diego, California, United States},
  year      = {2026},
  doi       = {10.18653/v1/2026.acl-long.1130},
}

@inproceedings{li2025aira,
  author    = {Li, Lujun and Li, Dezhi and Lin, Cheng and Li, Wei and Xue, Wei and Han, Sirui and Guo, Yike},
  title     = {{AIRA}: Activation-Informed Low-Rank Adaptation for Large Models},
  booktitle = {IEEE/CVF International Conference on Computer Vision (ICCV)},
  pages     = {1729--1739},
  publisher = {IEEE},
  year      = {2025},
  address   = {Honolulu, HI, USA},
  doi       = {10.1109/ICCV51701.2025.00169},
}

@article{hu2026hierarchical,
  title={Hierarchical Sparse Attention Done Right: Toward Infinite Context Modeling},
  author={Hu, Xiang and Wei, Xinyu and Gu, Hao and Zhang, Minshen and Liang, Tian and Li, Huayang and Zhu, Lei and Wang, Yan and Han, Sirui and Bai, Yushi and Tu, Kewei and Mi, Haitao and Liang, Leo},
  journal={arXiv preprint arXiv:2607.02980},
  year={2026}
}

@inproceedings{zhu2025oamerge,
author = {Zhu, Qiyuan and Li, Lujun and Li, Dezhi and Liu, Jiacheng and Cheng, Pengyu and Xu, Yucheng and Han, Sirui and Guo, Yike},
title = {Outlier-Aware Model Merging for Efficient Multitask Inference},
year = {2025},
publisher = {Association for Computing Machinery},
address = {Dublin, Ireland},
booktitle = {Proceedings of the ACM International Conference on Multimedia},
pages = {1032--1041},
}

@article{zhu2026outlier,
  title={Outlier Matters: Efficient Long-to-Short Reasoning via Outlier-Guided Model Merging},
  author={Zhu, Qiyuan and Li, Dezhi and Li, Lujun and Qin, Xiaoyu and Li, Wei and Gu, Hao and Xu, Hua and Han, Sirui and Guo, Yike},
  journal={Proceedings of the AAAI Conference on Artificial Intelligence},
  address={Singapore},
  volume={40},
  number={41},
  pages={35213--35221},
  month={Jan},
  year={2026}
}

@inproceedings{shi2025dids,
    title = "{DIDS}: Domain Impact-aware Data Sampling for Large Language Model Training",
    author = "Shi, Weijie  and
      Zhang, Jipeng  and
      Wu, Yaguang  and
      Fang, Jingzhi  and
      Zhang, Shibo  and
      Zhao, Yao  and
      Chen, Hao  and
      Zhang, Ruiyuan  and
      Cui, Yue  and
      Zhu, Jia  and
      Han, Sirui  and
      Xu, Jiajie  and
      Zhou, Xiaofang",
    booktitle = "Proceedings of the Conference on Empirical Methods in Natural Language Processing",
    month = "Nov",
    year = "2025",
    address = "Suzhou, China",
    publisher = "Association for Computational Linguistics",
    pages = "4330--4350",
}

@inproceedings{Dong2025easyep,
 author = {Dong, Zican and Peng, Han and Liu, Peiyu and Zhao, Xin and Wu, Dong and Xiao, Feng and Wang, Zhifeng},
 booktitle = {Advances in Neural Information Processing Systems},
 pages = {103552--103577},
 publisher = {Curran Associates, Inc.},
 address = {San Diego, CA, USA},
 title = {Domain-Specific Pruning of Large Mixture-of-Experts Models with Few-shot Demonstrations},
 volume = {38, Main Conference},
 year = {2025}
}

@article{zhu2026fewer,
  title={Fewer Tokens, Smaller Cache: Reward-Coordinated Efficient Reasoning},
  author={Zhu, Qiyuan and Li, Dezhi and Cheng, Pengyu and Chen, Tianle and Wang, Jiacheng and Shen, Ruijie and Gu, Hao and Lin, Sida and Liu, Zirui and Liu, Jiacheng and Han, Sirui},
  journal={arXiv preprint arXiv:2608.04771},
  year={2026}
}

@inproceedings{xu2026bit,
    title = "Bit-by-Bit: Progressive {QAT} Strategy with Outlier Channel Splitting for Stable Low-Bit {LLMs}",
    author = "Xu, Binxing  and
      Gu, Hao  and
      Li, Lujun  and
      Wang, Hao  and
      Liu, Bei  and
      Liu, Jiacheng  and
      Zhu, Qiyuan  and
      Yang, Xintong  and
      Li, Chao  and
      Han, Sirui  and
      Guo, Yike",
    booktitle = "Proceedings of the 64th Annual Meeting of the ACL",
    month = "Jul",
    year = "2026",
    address = "San Diego, California, United States",
    publisher = "Association for Computational Linguistics",
    pages = "22284--22299",
}

\newpage
\appendix

\section{Formal Theoretical Analysis of OMP-MoE}
\label{app:formal_analysis}
In this section, we provide a rigorous mathematical treatment of the expert selection process. We frame the Mixture-of-Experts (MoE) pruning problem within the context of Hilbert space theory and prove the optimality of our greedy selection criterion under the linear decomposition property.

\subsection{Mathematical Framework}

Let $\mathcal{H} = \mathbb{R}^{B \times d}$ be the Hilbert space of tensors equipped with the Frobenius inner product $\langle \mathbf{A}, \mathbf{B} \rangle_{\mathcal{H}} = \text{Tr}(\mathbf{A}^\top \mathbf{B})$ and the induced norm $\|\mathbf{A}\|_{\mathcal{H}} = \sqrt{\langle \mathbf{A}, \mathbf{A} \rangle_{\mathcal{H}}}$.

In the standard Orthogonal Matching Pursuit (OMP) formulation, the objective is to approximate a signal $\mathbf{Y}$ using a linear combination of atoms from a dictionary $\mathcal{D} = \{\mathbf{V}_e\}_{e=1}^{N_e}$ with continuous coefficients $\mathbf{c} \in \mathbb{R}^{N_e}$. However, in the context of MoE pruning, ``partial selection'' of an expert is computationally invalid, because an expert is either executed (coefficient 1) or pruned (coefficient 0).

We formally define the \textit{Expert Selection Problem} as a constrained optimization over the Boolean hypercube:
\begin{equation}
    \min_{\mathbf{c} \in \{0, 1\}^{N_e}} \left\| \mathbf{Y} - \sum_{e=1}^{N_e} c_e \mathbf{V}_e \right\|_{\mathcal{H}}^2 \quad \text{subject to} \quad \|\mathbf{c}\|_0 = n.
\end{equation}
Let $E = \{e \mid c_e = 1\}$ be the support set. The problem reduces to finding the optimal subset indices:
\begin{equation}
    \min_{E: |E|=n} \mathcal{J}(E) \coloneqq \left\| \mathbf{Y} - \sum_{e \in E} \mathbf{V}_e \right\|_{\mathcal{H}}^2.
\end{equation}

\subsection{Derivation of the Greedy Selection Rule}\label{app:binary_greedy_optimality}

Unlike standard OMP, which requires an orthogonal projection of the signal onto the subspace spanned by the selected atoms (typically involving the inversion of the Gram matrix), our binary constraint simplifies the update dynamics significantly.

\begin{theorem}[Optimality of Greedy Selection Criterion]
Given a current residual $\mathbf{R}_t = \mathbf{Y} - \sum_{e \in E_t} \mathbf{V}_e$ at iteration $t$, and imposing the binary constraint $c_{e^*} \in \{0, 1\}$, the optimal next expert $e^*$ to add to the support set $E_t$ is determined by:
\begin{equation}
    e^* = \arg\max_{e \notin E_t} \left\{ 2\langle \mathbf{R}_t, \mathbf{V}_e \rangle_{\mathcal{H}} - \|\mathbf{V}_e\|_{\mathcal{H}}^2 \right\}.
\end{equation}
Furthermore, the residual update is strictly additive: $\mathbf{R}_{t+1} = \mathbf{R}_t - \mathbf{V}_{e^*}$.
\end{theorem}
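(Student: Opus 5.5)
The plan is to interpret ``optimal next expert'' as one-step (conditional) optimality. Given the current support $E_t$, we choose $e \notin E_t$ to minimize $\mathcal{J}(E_t \cup \{e\})$, with the coefficients of already selected atoms frozen at $1$. This matches the binary constraint: unlike standard OMP, there is no re-projection step, because each coefficient is fixed in $\{0,1\}$. For a candidate $e \notin E_t$ with $c_e = 1$, the new residual is by definition
\[
\mathbf{Y} - \sum_{e' \in E_t \cup \{e\}} \mathbf{V}_{e'} = \mathbf{R}_t - \mathbf{V}_e .
\]
This identity already gives the second claim, $\mathbf{R}_{t+1} = \mathbf{R}_t - \mathbf{V}_{e^*}$. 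The update is purely additive in $\mathcal{H}$ and needs no Gram-matrix inversion.

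Next, I will expand the squared norm of this candidate residual using bilinearity and symmetry of the Frobenius inner product:
\[
\|\mathbf{R}_t - \mathbf{V}_e\|_{\mathcal{H}}^2 = \|\mathbf{R}_t\|_{\mathcal{H}}^2 - 2\langle \mathbf{R}_t, \mathbf{V}_e\rangle_{\mathcal{H}} + \|\mathbf{V}_e\|_{\mathcal{H}}^2 .
\]
Defining $\Delta_e = \|\mathbf{R}_t\|_{\mathcal{H}}^2 - \|\mathbf{R}_t - \mathbf{V}_e\|_{\mathcal{H}}^2$ recovers exactly Eq.~\eqref{eq:omp_gain_main}. Since $\|\mathbf{R}_t\|_{\mathcal{H}}^2$ does not depend on $e$, minimizing $\mathcal{J}(E_t \cup \{e\})$ over $e \notin E_t$ is equivalent to maximizing $2\langle \mathbf{R}_t, \mathbf{V}_e\rangle_{\mathcal{H}} - \|\mathbf{V}_e\|_{\mathcal{H}}^2$. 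The $\arg\max$ is attained because the candidate set is finite and nonempty whenever $|E_t| < N_e$. If there are ties, the statement should be read as ``any maximizer''.

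The main obstacle is scope rather than computation. The theorem must be stated, and proved, as greedy \emph{stepwise} optimality, not global optimality of $E_n$ for the subset problem, which is combinatorial and in general not attained greedily. I would make this explicit by remarking that the argument conditions on $E_t$, and by noting that the criterion differs from classical OMP's $|\langle \mathbf{R}_t, \mathbf{V}_e\rangle|$ rule. The difference is the $-\|\mathbf{V}_e\|^2$ penalty, which arises precisely because the coefficient is pinned to $1$ instead of the least-squares value $\langle \mathbf{R}_t, \mathbf{V}_e\rangle/\|\mathbf{V}_e\|^2$.
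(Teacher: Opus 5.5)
Your proposal is correct and follows the same route as the paper's proof. You pin the new coefficient at $1$, so the candidate residual is $\mathbf{R}_t - \mathbf{V}_e$, which also gives the additive update; you expand $\|\mathbf{R}_t - \mathbf{V}_e\|_{\mathcal{H}}^2$ by bilinearity; and you note that $\|\mathbf{R}_t\|_{\mathcal{H}}^2$ does not depend on $e$, so minimizing the next residual is equivalent to maximizing $2\langle \mathbf{R}_t, \mathbf{V}_e\rangle_{\mathcal{H}} - \|\mathbf{V}_e\|_{\mathcal{H}}^2$. Your extra remarks on attainment, ties, and restricting the claim to stepwise rather than global optimality are accurate, and they match the paper's own scoping in its extended discussion.
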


\begin{proof}
Let $e \notin E_t$ be a candidate expert. Under the binary constraint, if expert $e$ is selected, its coefficient is fixed at $1$. Thus, the new residual is deterministically defined as $\mathbf{R}_{t+1} = \mathbf{R}_t - \mathbf{V}_e$. We analyze the energy reduction (squared norm) of the residual:
\begin{align}
    \|\mathbf{R}_{t+1}\|_{\mathcal{H}}^2 &= \|\mathbf{R}_t - \mathbf{V}_e\|_{\mathcal{H}}^2 \nonumber \\
    &= \langle \mathbf{R}_t - \mathbf{V}_e, \mathbf{R}_t - \mathbf{V}_e \rangle_{\mathcal{H}} \nonumber \\
    &= \langle \mathbf{R}_t, \mathbf{R}_t \rangle_{\mathcal{H}} - 2\langle \mathbf{R}_t, \mathbf{V}_e \rangle_{\mathcal{H}} + \langle \mathbf{V}_e, \mathbf{V}_e \rangle_{\mathcal{H}} \nonumber \\
    &= \|\mathbf{R}_t\|_{\mathcal{H}}^2 - \underbrace{\left( 2\langle \mathbf{R}_t, \mathbf{V}_e \rangle_{\mathcal{H}} - \|\mathbf{V}_e\|_{\mathcal{H}}^2 \right)}_{\Delta(e; \mathbf{R}_t)}.
\end{align}
Since $\|\mathbf{R}_t\|_{\mathcal{H}}^2$ is constant with respect to $e$, minimizing the new residual norm $\|\mathbf{R}_{t+1}\|_{\mathcal{H}}^2$ is equivalent to maximizing the term $\Delta(e; \mathbf{R}_t)$. This proves that the greedy selection based on $\Delta$ is locally optimal under the binary constraint.
\end{proof}

\subsection{Comparison with Standard OMP Projection}

It is crucial to distinguish this update rule from standard OMP. In standard OMP, the residual is updated as $\mathbf{R}_{t+1}^{std} = (\mathbf{I} - \mathbf{P}_{t+1})\mathbf{Y}$, where $\mathbf{P}_{t+1}$ is the orthogonal projector onto $\text{span}(\{\mathbf{V}_e\}_{e \in E_{t+1}})$. This implies that previously selected experts re-adjust their coefficients to compensate for the new atom.

In OMP-MoE, the binary constraint forbids such re-adjustment (coefficients must remain 1). Consequently, the ``orthogonalization'' step is replaced by a direct subtraction. This yields two theoretical implications:
\begin{enumerate}
    \item \textbf{Computational Efficiency:} We avoid the $\mathcal{O}(n^3)$ cost of matrix inversion required for projection, reducing the complexity to simple tensor additions.
    \item \textbf{Independence of Selection:} The contribution of an expert $\mathbf{V}_e$ is invariant to the selection of future experts, preserving the physical interpretation of the MoE routing mechanism where expert weights are determined by the router, not by a post-hoc least-squares fit.
\end{enumerate}

\subsection{Budget Allocation via Marginal Gain}

The cross-layer budget allocation involves distributing a total budget $C_{\text{tot}}$ across $L$ layers. Let $f_l(k) = \|\mathbf{R}_{l,k}\|_{\mathcal{H}}^2$ be the minimum residual norm for layer $l$ using $k$ experts.

\begin{theorem}[Optimality of Greedy Allocation]
If the sequence of marginal gains $\delta_l(k) = f_l(k-1) - f_l(k)$ is monotonically non-increasing for each layer $l$ (i.e., $f_l$ is discrete convex), then the greedy allocation strategy yields the globally optimal expert distribution $\{k_l^*\}_{l=1}^L$ that minimizes the total reconstruction error $\sum_{l=1}^L f_l(k_l)$.
\end{theorem}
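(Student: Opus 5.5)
Since each $f_l$ is discrete convex, the greedy allocation reduces to an exchange argument on sorted marginal gains. Write each layer's cost telescopically as
\[
f_l(k_l) = f_l(1) - \sum_{j=2}^{k_l} \delta_l(j), \qquad \text{for } k_l \ge 1,
\]
matching the water-filling initialization with one expert per layer. Then
\[
\sum_l f_l(k_l) = \sum_l f_l(1) - \sum_l \sum_{j=2}^{k_l}\delta_l(j).
\]
Minimizing the total error under $\sum_l k_l = C_{\mathrm{tot}}$ is therefore the same as maximizing the total gain collected from exactly $C_{\mathrm{tot}}-L$ marginal increments.

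The feasible selections are constrained: in each layer the chosen increments must form a prefix $\{\delta_l(2),\dots,\delta_l(k_l)\}$. The convexity hypothesis makes this constraint harmless. Because $\delta_l(\cdot)$ is non-increasing in $j$, any multiset of $C_{\mathrm{tot}}-L$ increments can be rearranged layer by layer into a prefix without lowering its sum. Hence the prefix-constrained maximum equals the unconstrained maximum, namely the sum of the $C_{\mathrm{tot}}-L$ largest values in the pooled multiset $\{\delta_l(j): 2\le j\le N_e,\ 1\le l\le L\}$.

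Next I show that greedy picks exactly these largest values. At each step, the candidate set $\{\delta_l(k_l+1)\}_l$ consists of the current heads of the $L$ sorted (non-increasing) lists. The global maximum among all unpicked increments is always one of these heads, because within each list every later element is no larger than its head. Greedy is therefore an $L$-way merge of sorted lists that outputs elements in non-increasing order. By induction on the step count, after $s$ steps the greedy set is a set of $s$ largest pooled increments. The layer caps $k_l\le N_e$ only remove exhausted lists, so they do not affect this.

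To handle ties cleanly, I will phrase the argument as an exchange. Suppose an optimal allocation $\{k_l^*\}$ differs from the greedy $\{k_l\}$. Then some layer $a$ has $k_a^*>k_a$ and some layer $b$ has $k_b^*<k_b$. Greedy chose $\delta_b(k_b^*+1)$, and by the merge property
\[
\delta_b(k_b^*+1) \ge \delta_b(k_b) \ge \delta_a(k_a+1) \ge \delta_a(k_a^*).
\]
Moving one expert from layer $a$ to layer $b$ in the optimal allocation therefore does not increase the cost. Repeating this strictly decreases $\sum_l |k_l-k_l^*|$, which ends at the greedy allocation with cost no larger than optimal.

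The main obstacle is the middle inequality $\delta_b(k_b) \ge \delta_a(k_a+1)$. It holds because at the moment greedy took its final increment from layer $b$, the head of layer $a$ was some $\delta_a(j)$ with $j\le k_a+1$. Greedy's choice gives $\delta_b(k_b)\ge\delta_a(j)$, and monotonicity of layer $a$'s list gives $\delta_a(j)\ge\delta_a(k_a+1)$.

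Two modelling points need stating. First, $f_l(k)$ here is the residual along the fixed OMP ordering $\pi_l$, so it is a well-defined function of $k$ with $f_l(0)=\|\mathbf{Y}_l\|^2$. Second, the same argument applies verbatim to $F_l=r_l+\lambda\psi_l$ whenever $F_l$ has non-increasing marginal gains.
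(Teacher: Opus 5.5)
Your proof is correct, but it takes a different route from the paper's. The paper's proof is a one-line appeal to general theory. It notes that $\sum_l f_l(k_l)$ is separable with diminishing marginal returns, and asserts that greedy is globally optimal ``for the resource allocation problem under a matroid constraint.'' It never exhibits the matroid or explains how the within-layer prefix structure fits it. You instead prove the result directly. Telescoping reduces the problem to choosing $C_{\mathrm{tot}}-L$ increments from the pooled lists $\{\delta_l(j)\}_{j\ge 2}$. Monotonicity of each list makes the prefix constraint non-binding. Greedy then acts as an $L$-way merge that collects a set of largest values.

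This is exactly the reduction the paper leaves implicit: a modular objective over the uniform matroid of rank $C_{\mathrm{tot}}-L$ on the pooled increments. Spelling it out shows that exactness comes from separability plus convexity, not from submodularity alone, which would give only the classical $(1-1/e)$ greedy guarantee. Your argument also handles the caps $k_l\le N_e$, the one-expert initialization, ties, and possibly negative increments transparently. As you note, it applies verbatim to the cost $F_l=r_l+\lambda\psi_l$ that the allocator actually uses. The paper's citation buys brevity and places the result in the general matroid/polymatroid greedy framework, but leaves all verification to the reader.

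Two minor points:
\begin{enumerate}
\item Your pooled top-$(C_{\mathrm{tot}}-L)$ argument already settles ties, since every set of $s$ largest values has the same sum. The exchange argument is therefore an independent second proof, not a needed patch.
\item In the exchange argument's middle inequality, you should state that layer $a$ was eligible when greedy made its last pick from layer $b$. This holds because its count at that moment was at most $k_a<k_a^*\le N_e$.
\end{enumerate}
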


\begin{proof}
This follows from the theory of submodular optimization. The objective function $F(k_1, \dots, k_L) = \sum f_l(k_l)$ is a separable function. If each $f_l$ exhibits diminishing marginal returns (a property we empirically verify in Figure \ref{fig:analysis_residual}), the greedy algorithm that at each step allocates one unit of budget to the layer $l = \arg\max \delta_l(k_l+1)$ is guaranteed to reach the global optimum for the resource allocation problem under a matroid constraint.
\end{proof}

\subsection{Reconstruction Error and Loss Perturbation}\label{app:sensitivity_energy}

Let $\mathbf{Y}_l$ be the original output of layer $l$, and let the pruned or skipped output be
\begin{equation}
\tilde{\mathbf{Y}}_l = \mathbf{Y}_l - \mathbf{R}_l,
\end{equation}
where $\mathbf{R}_l$ is the reconstruction residual. For a downstream loss $\mathcal{L}$, a local second-order expansion gives
\begin{align}
\Delta \mathcal{L}
&= \mathcal{L}(\{\tilde{\mathbf{Y}}_l\}_{l=1}^{L})-\mathcal{L}(\{\mathbf{Y}_l\}_{l=1}^{L}) \nonumber \\
&\approx -\sum_{l=1}^{L}\langle \nabla_{\mathbf{Y}_l}\mathcal{L}, \mathbf{R}_l\rangle \nonumber \\
&\qquad +\frac{1}{2}\sum_{l=1}^{L}\mathrm{vec}(\mathbf{R}_l)^\top \mathbf{H}_l \mathrm{vec}(\mathbf{R}_l).
\end{align}
Assume bounded local sensitivity:
\begin{equation}
\|\nabla_{\mathbf{Y}_l}\mathcal{L}\|_F \le G_l,\qquad
\|\mathbf{H}_l\|_2 \le \beta_l.
\end{equation}
Then
\begin{equation}
|\Delta \mathcal{L}|
\le
\sum_{l=1}^{L}G_l\|\mathbf{R}_l\|_F
+\frac{1}{2}\sum_{l=1}^{L}\beta_l\|\mathbf{R}_l\|_F^2.
\end{equation}
Thus, reducing the layer-wise reconstruction residual reduces a local upper bound on the downstream loss perturbation.

We next apply this view to OMP-MoE$^\dagger$. For token $\mathbf{x}_i$, let the skipped perturbation be
\begin{equation}
\delta \mathbf{y}_i=\sum_{j>m_i^*} g_{i,j}F_{e_{i,j}}(\mathbf{x}_i).
\end{equation}
With the calibrated expert energy
\begin{equation}
\alpha_e \approx \mathbb{E}_{\mathbf{x}}\|F_e(\mathbf{x})\|_2^2,
\end{equation}
OMP-MoE$^\dagger$ predicts the contribution energy as
\begin{equation}
\hat{\mathcal{E}}_{i,j}=g_{i,j}^{2}\alpha_{e_{i,j}}.
\end{equation}
Under a weak-correlation approximation among skipped expert outputs, the skipped output energy is bounded by
\begin{equation}
\mathbb{E}\|\delta \mathbf{y}_i\|_2^2
\lesssim
\sum_{j>m_i^*}\hat{\mathcal{E}}_{i,j}.
\end{equation}
Since OMP-MoE$^\dagger$ chooses the smallest $m_i^*$ satisfying
\begin{equation}
\frac{\sum_{j>m_i^*}\hat{\mathcal{E}}_{i,j}}
{\sum_{j=1}^{K}\hat{\mathcal{E}}_{i,j}+\zeta}
\le
\epsilon,
\end{equation}
we obtain
\begin{equation}
\mathbb{E}\|\delta \mathbf{y}_i\|_2^2
\lesssim
\epsilon\sum_{j=1}^{K}\hat{\mathcal{E}}_{i,j}.
\end{equation}
Applying the local sensitivity bound at the token level gives
\begin{equation}
\mathbb{E}|\Delta \mathcal{L}_i|
\lesssim
C_{1,i}\sqrt{\epsilon}+C_{2,i}\epsilon,
\end{equation}
where $C_{1,i}$ and $C_{2,i}$ absorb the local sensitivity constants and the total predicted expert energy. This result explains why the energy coefficient can skip low-energy expert directions while keeping the induced loss perturbation controlled.

\begin{algorithm}[h]
    \caption{Comprehensive OMP-MoE Pruning Pipeline}
    \label{alg:omp_pruning_detailed}
    \begin{algorithmic}[1]
        \Require MoE Model $\mathcal{M}$, Calibration manifold $\mathcal{D}$, Global pruning ratio $P$
        \Ensure Optimal expert subsets $\{E_l\}_{l=1}^L$

        \State \textbf{Phase 1: Statistics Acquisition}
        \For{each layer $l \in \{1, \dots, L\}$}
            \State Execute a singular forward pass over $\mathcal{D}$ to populate the contribution cache
            \State Cache output signal $\mathbf{Y}_l$ and expert contribution atoms $\{\mathbf{V}_{l,e}\}_{e=1}^{N_e}$
            \State Precompute squared Frobenius norms $v_{l,e} = \|\mathbf{V}_{l,e}\|_F^2$ for all $e$
        \EndFor

        \State \textbf{Phase 2: Layer-wise Greedy Expert Selection}
        \For{each layer $l \in \{1, \dots, L\}$}
            \State Initialize residual signal $\mathbf{R}_0 = \mathbf{Y}_l$ and selected set $E_{l,0} = \emptyset$
            \For{iteration $t = 1$ to $N_e$}
                \State Identify $e^* = \arg\max_{e \notin E_{l,t-1}} \left( 2\langle \mathbf{R}_{t-1}, \mathbf{V}_{l,e} \rangle_{\mathcal{H}} - v_{l,e} \right)$
                \State Update residual: $\mathbf{R}_t = \mathbf{R}_{t-1} - \mathbf{V}_{l,e^*}$
                \State Append $e^*$ to importance sequence $\pi_l$ and record residual energy $r_l(t) = \|\mathbf{R}_t\|_F^2$
            \EndFor
        \EndFor

        \State \textbf{Phase 3: Risk-Aware Global Budget Allocation}
        \State Distribute expert quotas $\{k_l\}_{l=1}^L$ by solving the constrained optimization in Eq.~\ref{eq:cross_layer}
        \State Utilize a greedy water-filling strategy guided by marginal gain curves $\{r_l(k)\}$
        \State \textbf{Return} Retained expert indices $\{E_l\}_{l=1}^L$ where $E_l = \{e \in \pi_l \mid \text{rank}(e) \le k_l\}$
    \end{algorithmic}
\end{algorithm}

\section{Extended Algorithmic Procedures and Computational Analysis}
\label{app:algorithmic_details}

In this section, we provide the comprehensive procedural logic for the OMP-MoE framework and conduct a formal analysis of its computational complexity. We delineate the transition from offline statistics collection to online adaptive inference, establishing the theoretical foundations for the efficiency gains reported in our experiments.

\subsection{Comprehensive Pruning and Selection Pipeline}
\label{app:sub:detailed_workflow}

The OMP-MoE pruning pipeline is structured into three distinct phases: signal acquisition, greedy expert pursuit, and global resource allocation. Unlike traditional pruning methods that necessitate iterative model evaluations, our approach isolates the selection logic from the forward pass by operating on cached contribution atoms. The detailed implementation is provided in Algorithm~\ref{alg:omp_pruning_detailed}.

\begin{algorithm}[h]
    \caption{Adaptive Inference Mechanism (OMP-MoE$^\dagger$ )}
    \label{alg:ompplus_inference}
    \begin{algorithmic}[1]
        \Require Token input $\mathbf{x}$, Top-$K$ expert indices $\{e_1, \dots, e_K\}$, Energy coefficients $\{\alpha_e\}$, Residual threshold $\epsilon$
        \Ensure Parsimonious output signal $\mathbf{y}$

        \State Retrieve gating weights $g_j$ for $j \in \{1, \dots, K\}$
        \State Estimate contribution energy: $\hat{\mathcal{E}}_j = g_j^2 \cdot \alpha_{e_j}$
        \State Compute prefix sums of predicted energy: $S_m = \sum_{j=1}^m \hat{\mathcal{E}}_j$
        \State Determine minimal expert cardinality: $m^* = \min \{ m \in [1, K] \mid (1 - \frac{S_m}{S_K + \zeta}) \le \epsilon \}$
        \State Execute FFN sub-networks only for the subset $\{e_1, \dots, e_{m^*}\}$
        \State \textbf{Return} Aggregated output $\mathbf{y} = \sum_{j=1}^{m^*} g_j F_{e_j}(\mathbf{x})$
    \end{algorithmic}
\end{algorithm}
\subsection{Adaptive Online Inference via OMP-MoE$^\dagger$ }
\label{app:sub:ompplus_logic}
To further catalyze inference-time efficiency, we introduce OMP-MoE$^\dagger$ , an adaptive mechanism that dynamically modulates the expert activation count for each input token. By leveraging the precomputed energy coefficients $\alpha_e$, the model predicts the reconstruction sufficiency of the top-$K$ candidates before full execution. This mechanism is formalized in Algorithm~\ref{alg:ompplus_inference}.

\subsection{Computational Complexity and Asymptotic Bounds}
\label{app:sub:complexity_analysis}

We conduct a rigorous evaluation of the computational trajectory associated with OMP MoE by examining the asymptotic scaling across two distinct temporal phases. Let the configuration space be defined by the number of layers $L$, the total expert count per layer $N_e$, the calibration batch size $B$, and the hidden dimensionality $d$.

The Statistics Acquisition phase necessitates a singular forward evaluation of the model over the calibration manifold. The computational overhead of this phase is isomorphic to standard sparse MoE inference: $\mathcal{O}(L \cdot B \cdot K \cdot d_{ffn})$, where $K$ denotes the active expert count and $d_{ffn}$ represents the intermediate dimension of the feed-forward network. The memory footprint required to cache the contribution atoms is $\mathcal{O}(L \cdot N_e \cdot B \cdot d)$. This cost is incurred exclusively once per architecture and remains invariant to the number of pruning ratios under evaluation.

The Recursive Selection complexity describes the iterative greedy pursuit within a non-orthogonal dictionary of expert contribution atoms. For a given layer, the selection of $n$ experts involves $n$ iterations of inner product computations. The total complexity is expressed as:
\begin{equation}
\mathcal{C}_{\text{search}} \approx \sum_{l=1}^{L} \sum_{t=1}^{n} (N_e - t + 1) \cdot Bd = \mathcal{O}(L \cdot n \cdot N_e \cdot Bd).
\end{equation}
This linear scaling with respect to the token batch $B$ ensures that OMP MoE remains computationally tractable even as $N_e$ increases. This property provides a significant advantage over methods requiring combinatorial forward passes. Table~\ref{tab:complexity_comparison_formal} provides a comparative analysis of these complexity bounds against prevailing baselines.

\begin{table}[!htbp]
\centering
\small
\caption{Asymptotic complexity comparison between OMP MoE and state of the art pruning methodologies. We denote $N_{iter}$ as the number of iterations required for combinatorial search and $T_{ft}$ as the duration of the fine-tuning stage.}
\label{tab:complexity_comparison_formal}
\begin{tabular}{lccc}
\toprule
Method & Search Complexity & Forward Passes & Optimization Type \\
\midrule
NAEE & $\mathcal{O}(L \cdot N_{iter} \cdot \binom{N_e}{n})$ & Massive & Combinatorial \\
DiEP & $\mathcal{O}(T_{ft} \cdot L \cdot B \cdot d^2)$ & Extensive & Gradient-based \\
\textbf{OMP-MoE} & $\mathcal{O}(L \cdot n \cdot N_e \cdot Bd)$ & \textbf{Singular} & Greedy Pursuit \\
\bottomrule
\end{tabular}
\end{table}

As summarized in Table~\ref{tab:complexity_comparison_formal}, OMP-MoE replaces the repeated forward evaluations or gradient steps of the baselines with a single statistics-acquisition pass followed by operations on cached atoms. Its search is polynomial in the number of experts and retained experts, whereas the subset enumeration term in NAEE grows combinatorially.

\subsection{Theoretical FLOPs Reduction in OMP-MoE$^\dagger$ }
\label{app:sub:flops_derivation}

We derive the routed-expert FLOPs of OMP-MoE$^\dagger$ to quantify its theoretical efficiency gain. For the gated feed-forward networks used by the evaluated models, one expert applies three linear projections, and the leading per-token expert cost is
\begin{equation}
\mathcal{F}_{\mathrm{expert}}=6d\,d_{\mathrm{ffn}}.
\end{equation}
Let $\mathcal{F}_{\mathrm{fixed}}$ denote the computation that is unchanged by adaptive expert execution, including attention, router evaluation, shared experts, and other dense components. Let $\mathcal{F}_{\mathrm{select}}$ denote the small cost of predicting contribution energy and selecting the prefix length. A standard top-$K$ MoE layer and OMP-MoE$^\dagger$ therefore require
\begin{align}
\mathcal{F}_{\mathrm{baseline}}
&=\mathcal{F}_{\mathrm{fixed}}+K\mathcal{F}_{\mathrm{expert}},\\
\mathcal{F}_{\mathrm{adaptive}}(\epsilon)
&=\mathcal{F}_{\mathrm{fixed}}+\mathcal{F}_{\mathrm{select}}
+\mathbb{E}[m^*\mid\epsilon]\mathcal{F}_{\mathrm{expert}},
\end{align}
where $m^*$ is the minimum number of candidates whose predicted cumulative energy satisfies the threshold $\epsilon$. Because $\mathcal{F}_{\mathrm{select}}$ is negligible relative to an expert FFN, the normalized routed-expert cost is approximately $\mathbb{E}[m^*\mid\epsilon]/K$, and the corresponding efficiency gain is
\begin{equation}
\eta \approx 1-\frac{\mathbb{E}[m^*\mid\epsilon]}{K}.
\end{equation}
Table~\ref{tab:flops_reduction_summary} reports this normalized cost for the measured activation counts at $\epsilon=0.1$.

\begin{table}[!htbp]
\centering
\small
\caption{Normalized routed-expert FLOPs per token under OMP-MoE$^\dagger$ with $\epsilon=0.1$. The baseline cost for each architecture is normalized to 1.000.}
\label{tab:flops_reduction_summary}
\begin{tabular}{lcccc}
\toprule
Architecture & Active Experts ($K$) & $\mathbb{E}[m^* | \epsilon]$ & Normalized FLOPs $\downarrow$ & Efficiency Gain ($\eta$) \\
\midrule
Mixtral-8$\times$7B & 2 & 1.22 & 1.000 $\rightarrow$ 0.610 & 39.0\% \\
DeepSeek-V2-Lite & 6 & 3.90 & 1.000 $\rightarrow$ 0.650 & 35.0\% \\
Qwen3-30B-A3B & 8 & 5.14 & 1.000 $\rightarrow$ 0.643 & 35.8\% \\
\bottomrule
\end{tabular}
\end{table}

Table~\ref{tab:flops_reduction_summary} predicts routed-expert FLOPs reductions from 35.0\% to 39.0\% across the three architectures. The similar range despite different native top-$K$ values indicates that the adaptive criterion scales with the fraction of contribution energy captured, rather than relying on a particular expert count.

\FloatBarrier
\section{Implementation and Configuration}
\label{app:config}

In this section, we provide exhaustive implementation details, model specifications, and hyperparameter configurations to ensure the reproducibility of our results.

\subsection{Model Architecture Specifications}

Table~\ref{tab:model_arch_detailed} provides the detailed architectural parameters for the models evaluated in this work. We also include a detailed breakdown of routing mechanisms in Table~\ref{tab:routing_details}.

We use the exact checkpoints \texttt{Qwen/Qwen3-30B-A3B}, \texttt{deepseek-ai/DeepSeek-V2-Lite}, \texttt{openai/gpt-oss-20b}, and \texttt{mistralai/Mixtral-8x7B-v0.1} in HuggingFace. As formalized in Eq.~\ref{eq:router}, OMP-MoE consumes the weighted expert contributions produced by each checkpoint's architecture-native router. The selection objective is therefore unchanged across softmax-, sigmoid-, or otherwise transformed router scores, provided that the native score transformation, top-$k$ selection, and normalization are retained when constructing $G(\mathbf{X})$.

We do not enable explicit chain-of-thought reasoning in any generative evaluation. For Qwen3, we explicitly set \texttt{enable\_thinking=False}. Calibration and evaluation formatting are distinct: calibration always uses directly tokenized raw text without a chat template, whereas HumanEval uses the \texttt{humaneval\_instruct} task with \texttt{apply\_chat\_template} so that the prompt format matches the evaluated model. This evaluation-time template is not applied to calibration data.

\begin{table}[!htbp]
\centering
\small
\caption{Detailed architectural specifications of the evaluated MoE models.}
\label{tab:model_arch_detailed}
\resizebox{\columnwidth}{!}{
\begin{tabular}{lccccccccc}
\toprule
Model & Total Params & Active Params & Layers ($L$) & Experts ($N_e$) & Active ($K$) & Hidden Size ($d$) & Expert Intermediate & Dense Intermediate \\
\midrule
\texttt{Qwen/Qwen3-30B-A3B} & 30.5B & 3.3B & 48 & 128 & 8 & 2048 & 768 & 6144 \\
\texttt{deepseek-ai/DeepSeek-V2-Lite} & 15.7B & 2.4B & 27 & 64 & 6 & 2048 & 1408 & 10944 \\
\texttt{openai/gpt-oss-20b} & 21.0B & 3.6B & 24 & 32 & 4 & 2880 & 2880 & -- \\
\texttt{mistralai/Mixtral-8x7B-v0.1} & 46.7B & 12.9B & 32 & 8 & 2 & 4096 & 14336 & -- \\
\bottomrule
\end{tabular}
}
\end{table}

The values in Table~\ref{tab:model_arch_detailed} are read directly from each checkpoint's \texttt{config.json}: \texttt{hidden\_size} gives the hidden size, \texttt{moe\_intermediate\_size} (or the expert \texttt{intermediate\_size} field for Mixtral) gives the routed-expert FFN width, and a distinct dense/shared FFN width is reported separately when the configuration provides one. A dash indicates that the checkpoint has no separate dense intermediate-size field.

\begin{table}[!htbp]
\centering
\small
\caption{Detailed routing mechanisms and gating configurations.}
\label{tab:routing_details}
\begin{tabular}{lcc}
\toprule
Model & Top-K & Shared Experts \\
\midrule
Qwen3-30B-A3B & 8 & No \\
DeepSeek-V2-Lite & 6 & Yes \\
GPT-OSS-20B & 4 & No \\
Mixtral-8$\times$7B-v0.1 & 2 & No \\
\bottomrule
\end{tabular}
\end{table}

Together, Tables~\ref{tab:model_arch_detailed} and~\ref{tab:routing_details} cover expert dictionaries ranging from 8 to 128 experts and native activation counts from 2 to 8. DeepSeek-V2-Lite additionally contains shared experts, which remain unchanged during pruning. This architectural diversity supports evaluating OMP-MoE on weighted contribution atoms instead of assuming a single router implementation.

\subsection{Dataset Processing and Calibration}

We use C4 and WikiText-2 for general-domain calibration and perplexity evaluation. For calibration, we extract raw text and tokenize it directly without applying a chat template. We concatenate the resulting token sequences into one token stream, divide that stream into non-overlapping fixed-length blocks of 4,096 tokens, retain the first 64 complete blocks, and discard any incomplete remainder. Thus, each calibration set contains exactly 262,144 tokens. These blocks are used to cache expert contribution atoms and total layer outputs for the OMP statistics-collection phase. The same preprocessing protocol is used for both corpora.

For the domain-specific generative experiments in Appendix~\ref{app:domain_specific_calibration}, we additionally use \texttt{allenai/tulu-3-sft-personas-math} for GSM8K and \texttt{theblackcat102/evol-codealpaca-v1} for HumanEval. For each domain-specific calibration set, we shuffle the training split with seed 42 and construct 64 sequences of 4,096 tokens, again totaling 262,144 tokens. We use raw-token inputs without chat templates during calibration, and OMP-MoE, REAP, and EASY-EP receive identical calibration sequences.

Table~\ref{tab:dataset_processing} details the characteristics and processing configurations of these datasets. We also provide the breakdown of zero-shot evaluation metrics in Table~\ref{tab:benchmark_metrics}.

\begin{table}[!htbp]
\centering
\small
\caption{Characteristics and processing configurations of the calibration datasets. C4 and WikiText-2 are also used for perplexity evaluation.}
\label{tab:dataset_processing}
\resizebox{\columnwidth}{!}{
\begin{tabular}{lcccc}
\toprule
Dataset & Domain & Processing Method & Block Length & Blocks \\
\midrule
C4 & Web Corpus & \makecell{Non-overlapping fixed-length\\chunking} & 4096 & 64 \\
WikiText-2 & Wikipedia & \makecell{Non-overlapping fixed-length\\chunking} & 4096 & 64 \\
Tulu-3 SFT Personas Math & Mathematics & \makecell{Raw-token fixed-length\\construction} & 4096 & 64 \\
Evol-CodeAlpaca-v1 & Code & \makecell{Raw-token fixed-length\\construction} & 4096 & 64 \\
\bottomrule
\end{tabular}
}
\end{table}

Together, Tables~\ref{tab:dataset_processing} and~\ref{tab:benchmark_metrics} distinguish the fixed calibration-token budget from downstream evaluation. Both calibration corpora use the same non-overlapping 64-block construction, while the evaluation tasks retain their task-specific metrics and shot settings.

\begin{table}[!htbp]
\small
\centering
\caption{Benchmark evaluation tasks and their corresponding performance metrics.}
\label{tab:benchmark_metrics}
\resizebox{\columnwidth}{!}{
\begin{tabular}{lccc}
\toprule
Category & Task & Evaluation & Primary Metric \\
\midrule
Perplexity & WikiText-2, C4 & 0-shot & Perplexity $\downarrow$ \\
8 General Tasks & \makecell[c]{ARC-Challenge, ARC-Easy, BoolQ, HellaSwag,\\ MMLU, OpenBookQA, RTE, WinoGrande} & 0-shot & Accuracy $\uparrow$ \\
Compositional Reasoning & BBH & 0-shot & Exact Match $\uparrow$ \\
Mathematics & GSM8K & 5-shot & Exact Match $\uparrow$ \\
Long-context & LongBench & 0-shot & Score $\uparrow$ \\
Code Generation & HumanEval & 0-shot & pass@1 $\uparrow$ \\
\bottomrule
\end{tabular}
}
\end{table}

\subsection{Benchmark Evaluation Details}
\label{app:benchmark}
We evaluate the four MoE models across a total of twelve diverse benchmarks. These models include Qwen3-30B-A3B~\cite{yang2025qwen3}, DeepSeek-V2-Lite~\cite{liu2024deepseekv2}, GPT-OSS-20B~\cite{agarwal2025gptoss}, and Mixtral-8$\times$7B~\cite{jiang2024mixtral}. We report zero-shot accuracy on eight general benchmarks: ARC-Challenge and ARC-Easy~\cite{clark2018arc}, BoolQ~\cite{clark2019boolq}, HellaSwag~\cite{zellers2019hellaswag}, MMLU~\cite{hendrycks2021mmlu}, OpenBookQA~\cite{mihaylov2018openbookqa}, RTE~\cite{Bentivogli2009rte}, and WinoGrande~\cite{sakaguchi2020winogrande}.

We also conduct evaluations on specialized reasoning, long-context, and code-generation tasks. Table~\ref{tab:benchmark_metrics} summarizes the evaluation settings and primary metrics for these benchmarks. The Big Bench Hard (BBH)~\cite{suzgun2022bbh} dataset tests compositional reasoning capabilities through zero-shot evaluation. We report the exact match score for the BBH benchmark. The GSM8K~\cite{cobbe2021gsm8k} dataset assesses mathematical reasoning through 5-shot evaluation. We report the exact match score for this mathematical task. The LongBench~\cite{bai2024longbench} benchmark evaluates long-context modeling proficiency through zero-shot evaluation. We report its comprehensive score metric. HumanEval~\cite{chen2021codex} evaluates code-generation correctness in the zero-shot setting, for which we report pass@1.

For the domain-specific Qwen3-30B-A3B experiments, GSM8K is evaluated in the 5-shot setting. HumanEval is evaluated with the \texttt{humaneval\_instruct} task in \texttt{lm-eval}, using the model's chat template, \texttt{enable\_thinking=False}, greedy decoding, \texttt{max\_new\_tokens=1024}, and batch size 8. The chat template is used for HumanEval evaluation only, and calibration remains based on the raw-token protocol described above.

\subsection{Standardization of Router Renormalization}
After experts are removed, we preserve each architecture's native router transformation and renormalization rule rather than imposing a universal softmax. In the notation of Eq.~\ref{eq:router}, pruning changes only the set of available routed experts. The score transformation, top-$k$ selection, normalization, and any architecture-specific bias correction remain those of the original checkpoint. All pruning and merging baselines receive the same architecture-native treatment. Consequently, OMP-MoE operates on the resulting weighted contribution atoms regardless of whether the router begins from softmax, sigmoid, or another native score transformation.

\subsection{Hyperparameter Display}

Table~\ref{tab:hyperparams_detailed} lists the hyperparameters used for the OMP-MoE pruning and OMP-MoE$^\dagger$  inference stages. We specify the search and statistics collection configurations in Table~\ref{tab:search_configs}.

For the combinatorial search baseline NAEE~\cite{lu2024naee}, the search space grows factorially with the number of experts. To ensure computational feasibility on models with large expert counts, we imposed a computational budget cap. Specifically, we limited the search to a maximum of 200 iterations per layer for all models except Mixtral-8$\times$7B, where the full search space remains tractable. For all other baselines, we strictly adhered to the hyperparameter configurations reported in their original papers.

\begin{table}[!htbp]
\centering
\small
\caption{Hyperparameter settings for OMP-MoE pruning and OMP-MoE$^\dagger$  inference.}
\label{tab:hyperparams_detailed}
\begin{tabular}{llc}
\toprule
Subgroup & Parameter & Value \\
\midrule
\multirow{2}{*}{OMP Search} & risk penalty coefficient $\lambda$ & 3.0 \\
& Numerical stability $\zeta$ & 1e-6 \\
\midrule
\multirow{2}{*}{Global Allocation} & Min expert ratio & 0.375 \\
& Max expert ratio & 0.875 \\
\midrule
\multirow{2}{*}{OMP-MoE$^\dagger$ } & Adaptive threshold epsilon & 0.1 \\
& Numerical stability $\zeta$ & 1e-6 \\
\bottomrule
\end{tabular}
\end{table}

\begin{table}[!htbp]
\centering
\small
\caption{Search and statistics collection configurations.}
\label{tab:search_configs}
\begin{tabular}{lc}
\toprule
Parameter & Configuration \\
\midrule
Calibration Dataset & C4 and WikiText-2 \\
Calibration Samples per Layer & 64 samples \\
Search Batch Size & 1 \\
Vectorization & PyTorch torch.matmul \\
Layer Offloading & Enabled \\
\bottomrule
\end{tabular}
\end{table}

\subsection{Baseline Configuration Details}\label{app:baseline_config}

For baselines, MC-SMoE, NAEE, MoNE, and Shapley-MoE use a uniform pruning ratio across MoE layers, while DiEP, HC-SMoE, HEAPr, and Sub-MoE include cross-layer budget allocation. For NAEE, the combinatorial search space becomes infeasible on models with large expert counts. Therefore, we impose a fixed computational budget and cap its search to 200 iterations per layer for all models except Mixtral-8$\times$7B, where the full search space is still tractable. This setting follows the same evaluation budget used for the timing comparison.

For the additional REAP~\cite{lasby2025reap} and EASY-EP~\cite{Dong2025easyep} comparisons, we use the same Qwen3-30B-A3B checkpoint, pruning ratios, evaluation harness, architecture-native router treatment, and calibration sequences as OMP-MoE. In particular, all three methods receive the same 64-sequence calibration budget and no chat template is applied during calibration. The reported values are reproduced under this shared protocol rather than copied from prior papers.

\subsection{Hardware and Software Environment}

Table~\ref{tab:hardware_env} summarizes the hardware and software environments used for all experiments.

\begin{table}[!htbp]
\centering
\small
\caption{Hardware and software environment configurations.}
\label{tab:hardware_env}
\begin{tabular}{llc}
\toprule
Category & Component & Specification \\
\midrule
\multirow{3}{*}{Hardware} & GPU & 8 $\times$ NVIDIA H20 (96GB VRAM) \\
& CPU & Intel(R) Xeon(R) Platinum 8358 @ 2.60GHz \\
& RAM & 1TB DDR4 \\
\midrule
\multirow{4}{*}{Software} & OS & Ubuntu 22.04.3 LTS \\
& Python & 3.10.12 \\
& PyTorch & 2.9.1+cu124 \\
& Transformers & 4.57.3 \\
\bottomrule
\end{tabular}
\end{table}

Table~\ref{tab:hardware_env} fixes the compute and software stack used for both pruning-time and inference-time measurements. Reporting the accelerator count, memory capacity, and library versions is necessary for interpreting the absolute search and latency values in the subsequent experiments.

\FloatBarrier
\section{Extended Experimental Results}
\label{app:extended_results}

This section expands the aggregate results in the main text with task-level comparisons, compatibility and sensitivity studies, inference measurements, stronger compression settings, scaling results, and specialized generative evaluations. Each subsection states the evaluation question and discusses the corresponding tables.

\subsection{Detailed Results for Zero-shot Tasks}
\label{app:detailed_main_reuslts}

We provide the complete zero-shot evaluation results for 8 diverse tasks on Qwen3-30B-A3B, DeepSeek-V2-Lite, GPT-OSS-20B, and Mixtral-8$\times$7B models at pruning ratios of 25\% and 50\%. The benchmarks encompass ARC challenge~\cite{clark2018arc}, ARC easy, BoolQ~\cite{clark2019boolq}, HellaSwag~\cite{zellers2019hellaswag}, MMLU~\cite{hendrycks2021mmlu}, OpenBookQA~\cite{mihaylov2018openbookqa}, RTE~\cite{Bentivogli2009rte}, and WinoGrande~\cite{sakaguchi2020winogrande}. Table~\ref{app:tab:detailed_main_results_1} and Table~\ref{app:tab:detailed_main_results_2} list the average accuracy scores. We additionally report the variance for each performance metric to demonstrate the statistical stability and reliability of the selection results.

\begin{table}[!htbp]
\centering
\small
\setlength{\extrarowheight}{0pt}
\addtolength{\extrarowheight}{\aboverulesep}
\addtolength{\extrarowheight}{\belowrulesep}
\setlength{\aboverulesep}{0pt}
\setlength{\belowrulesep}{0pt}
\caption{Comprehensive zero-shot performance comparison on Qwen3-30B-A3B and DeepSeek-V2-Lite model. We report accuracy across eight benchmarks at 25\% and 50\% expert pruning ratios, corresponding to retained expert counts of Num=96 and Num=64 for Qwen3, with analogous settings for DeepSeek. Bold indicates the best performance among pruning methods.}
\label{app:tab:detailed_main_results_1}
\resizebox{0.97\textwidth}{!}{
\begin{tabular}{llccccccccc}
\toprule
\textbf{Expert} & \textbf{Method} & \textbf{ARC-c} & \textbf{ARC-e} & \textbf{BoolQ} & \textbf{HellaS.} & \textbf{MMLU} & \textbf{OBQA} & \textbf{RTE} & \textbf{WinoG.} & \textbf{Avg.$\uparrow$} \\
\midrule
\multicolumn{11}{c}{{\cellcolor[rgb]{0.831,0.937,0.984}}\textbf{Qwen3-30B-A3B }} \\
Num=128 & Original & 0.528 & 0.792 & 0.887 & 0.596 & 0.778 & 0.346 & 0.823 & 0.703 & 0.682 \\
\midrule
\multirow{7}{*}{Num=96} & MC-SMoE & 0.392 & 0.647 & 0.777 & 0.415 & 0.540 & 0.318 & \textbf{0.823} & 0.588 & 0.562 \\
 & HC-SMoE & 0.458 & 0.760 & 0.865 & 0.515 & 0.669 & 0.410 & 0.769 & \textbf{0.704} & 0.644 \\
 & NAEE & 0.481 & 0.769 & 0.870 & 0.555 & 0.701 & 0.290 & 0.773 & 0.693 & 0.642 \\
 & DiEP & 0.505 & 0.774 & 0.871 & 0.567 & 0.646 & 0.334 & 0.718 & 0.702 & 0.640 \\
 & MoNE & 0.523 & 0.782 & 0.871 & 0.580 & 0.730 & 0.308 & 0.776 & 0.684 & 0.657 \\
 & Shapley-MoE & 0.458 & 0.712 & 0.841 & 0.524 & - & 0.300 & - & 0.650 & - \\
 & {\cellcolor[rgb]{1,0.988,0.714}}OMP (Ours) & {\cellcolor[rgb]{1,0.988,0.714}}\textbf{0.534±0.008} & {\cellcolor[rgb]{1,0.988,0.714}}\textbf{0.803±0.015} & {\cellcolor[rgb]{1,0.988,0.714}}\textbf{0.890±0.005} & {\cellcolor[rgb]{1,0.988,0.714}}\textbf{0.592±0.005} & {\cellcolor[rgb]{1,0.988,0.714}}\textbf{0.747±0.003} & {\cellcolor[rgb]{1,0.988,0.714}}\textbf{0.338±0.021} & {\cellcolor[rgb]{1,0.988,0.714}}0.805±0.024 & {\cellcolor[rgb]{1,0.988,0.714}}0.699±0.013 & {\cellcolor[rgb]{1,0.988,0.714}}\textbf{0.676±0.005} \\
\midrule
\multirow{7}{*}{Num=64} & MC-SMoE & 0.195 & 0.309 & 0.583 & 0.273 & 0.261 & 0.250 & 0.729 & 0.470 & 0.384 \\
 & HC-SMoE & 0.358 & 0.652 & 0.830 & 0.418 & 0.461 & \textbf{0.362} & 0.556 & 0.652 & 0.536 \\
 & NAEE & 0.369 & 0.656 & 0.805 & 0.467 & 0.521 & 0.262 & 0.614 & 0.665 & 0.545 \\
 & DiEP & 0.389 & 0.678 & 0.762 & 0.486 & 0.299 & 0.244 & 0.599 & 0.634 & 0.511 \\
 & MoNE & 0.387 & 0.710 & 0.866 & \textbf{0.553} & 0.516 & 0.286 & 0.729 & 0.673 & 0.590 \\
 & Shapley-MoE & 0.320 & 0.512 & 0.712 & 0.420 & - & 0.222 & - & 0.591 & - \\
 & {\cellcolor[rgb]{1,0.988,0.714}}OMP (Ours) & {\cellcolor[rgb]{1,0.988,0.714}}\textbf{0.533±0.009} & {\cellcolor[rgb]{1,0.988,0.714}}\textbf{0.795±0.015} & {\cellcolor[rgb]{1,0.988,0.714}}\textbf{0.879±0.006} & {\cellcolor[rgb]{1,0.988,0.714}}0.546±0.005 & {\cellcolor[rgb]{1,0.988,0.714}}\textbf{0.605±0.004} & {\cellcolor[rgb]{1,0.988,0.714}}0.324±0.021 & {\cellcolor[rgb]{1,0.988,0.714}}\textbf{0.765±0.026} & {\cellcolor[rgb]{1,0.988,0.714}}\textbf{0.695±0.013} & {\cellcolor[rgb]{1,0.988,0.714}}\textbf{0.643±0.005} \\
\hline\hline
\multicolumn{11}{c}{{\cellcolor[rgb]{0.831,0.937,0.984}}\textbf{DeepSeek-V2-Lite}} \\
Num=64 & Original & 0.465 & 0.784 & 0.799 & 0.587 & 0.551 & 0.348 & 0.625 & 0.710 & 0.609 \\
\midrule
\multirow{7}{*}{Num=48} & MC-SMoE & 0.367 & 0.608 & 0.713 & 0.531 & 0.422 & \textbf{0.366} & 0.585 & 0.687 & 0.535 \\
 & HC-SMoE & 0.420 & 0.732 & 0.722 & 0.560 & 0.458 & 0.280 & 0.549 & 0.695 & 0.552 \\
 & NAEE & 0.375 & 0.722 & 0.669 & 0.531 & 0.365 & 0.290 & 0.549 & 0.669 & 0.521 \\
 & DiEP & 0.433 & 0.747 & 0.740 & 0.550 & \textbf{0.505} & 0.302 & 0.588 & 0.660 & 0.566 \\
 & MoNE & 0.456 & 0.774 & 0.720 & 0.576 & 0.474 & 0.314 & 0.549 & 0.702 & 0.571 \\
 & Shapley-MoE & 0.395 & 0.677 & \textbf{0.760} & 0.512 & - & 0.262 & - & 0.673 & - \\
 & {\cellcolor[rgb]{1,0.988,0.714}}OMP (Ours) & {\cellcolor[rgb]{1,0.988,0.714}}\textbf{0.456±0.009} & {\cellcolor[rgb]{1,0.988,0.714}}\textbf{0.774±0.015} & {\cellcolor[rgb]{1,0.988,0.714}}0.713±0.008 & {\cellcolor[rgb]{1,0.988,0.714}}\textbf{0.583±0.005} & {\cellcolor[rgb]{1,0.988,0.714}}0.494±0.004 & {\cellcolor[rgb]{1,0.988,0.714}}0.320±0.021 & {\cellcolor[rgb]{1,0.988,0.714}}0.574±0.029 & {\cellcolor[rgb]{1,0.988,0.714}}0.697±0.013 & {\cellcolor[rgb]{1,0.988,0.714}}\textbf{0.576±0.005} \\
\midrule
\multirow{7}{*}{Num=32} & MC-SMoE & 0.288 & 0.366 & 0.595 & 0.438 & 0.237 & 0.306 & 0.516 & 0.598 & 0.418 \\
 & HC-SMoE & 0.325 & 0.593 & 0.564 & 0.456 & \textbf{0.302} & 0.216 & 0.563 & 0.625 & 0.456 \\
 & NAEE & 0.328 & 0.620 & 0.583 & 0.409 & 0.265 & 0.218 & 0.567 & 0.585 & 0.447 \\
 & DiEP & 0.294 & 0.589 & \textbf{0.617} & 0.441 & 0.256 & 0.212 & 0.498 & 0.569 & 0.435 \\
 & MoNE & \textbf{0.375} & 0.644 & 0.548 & 0.484 & 0.253 & 0.258 & \textbf{0.570} & 0.611 & 0.468 \\
 & Shapley-MoE & 0.268 & 0.491 & 0.240 & 0.395 & - & 0.240 & - & 0.603 & - \\
 & {\cellcolor[rgb]{1,0.988,0.714}}OMP (Ours) & {\cellcolor[rgb]{1,0.988,0.714}}0.354±0.010 & {\cellcolor[rgb]{1,0.988,0.714}}\textbf{0.679±0.014} & {\cellcolor[rgb]{1,0.988,0.714}}0.602±0.008 & {\cellcolor[rgb]{1,0.988,0.714}}\textbf{0.513±0.005} & {\cellcolor[rgb]{1,0.988,0.714}}0.273±0.004 & {\cellcolor[rgb]{1,0.988,0.714}}\textbf{0.310±0.020} & {\cellcolor[rgb]{1,0.988,0.714}}0.549±0.030 & {\cellcolor[rgb]{1,0.988,0.714}}\textbf{0.631±0.014} & {\cellcolor[rgb]{1,0.988,0.714}}\textbf{0.489±0.005} \\
\bottomrule
\end{tabular}
}
\end{table}

\begin{table}[!htbp]
    \centering
    \small
    \setlength{\extrarowheight}{0pt}
    \addtolength{\extrarowheight}{\aboverulesep}
    \addtolength{\extrarowheight}{\belowrulesep}
    \setlength{\aboverulesep}{0pt}
    \setlength{\belowrulesep}{0pt}
    \caption{Comprehensive zero-shot performance comparison on GPT-OSS-20B and Mixtral-8$\times$7B model. We report accuracy across eight benchmarks at 25\% and 50\% expert pruning ratios, corresponding to retained expert counts of Num=24 and Num=16 for GPT-OSS, with analogous settings for Mixtral. Bold indicates the best performance among pruning methods.}

    \label{app:tab:detailed_main_results_2}
    \resizebox{0.97\textwidth}{!}{
    \begin{tabular}{llccccccccc}
    \toprule
    \textbf{Expert} & \textbf{Method} & \textbf{ARC-c} & \textbf{ARC-e} & \textbf{BoolQ} & \textbf{HellaS.} & \textbf{MMLU} & \textbf{OBQA} & \textbf{RTE} & \textbf{WinoG.} & \textbf{Avg.$\uparrow$} \\
    \midrule
    \multicolumn{11}{c}{{\cellcolor[rgb]{0.831,0.937,0.984}}\textbf{GPT-OSS-20B}} \\
    Num=32 & Original & 0.451 & 0.775 & 0.757 & 0.415 & 0.566 & 0.270 & 0.700 & 0.657 & 0.574 \\
    \midrule
    \multirow{7}{*}{Num=24} & MC-SMoE & 0.399 & 0.739 & 0.725 & 0.399 & 0.488 & 0.258 & 0.617 & \textbf{0.665} & 0.536 \\
     & HC-SMoE & 0.285 & 0.592 & 0.622 & 0.336 & 0.425 & 0.182 & 0.610 & 0.617 & 0.459 \\
     & NAEE & 0.422 & 0.729 & 0.731 & 0.400 & 0.547 & 0.232 & 0.664 & 0.624 & 0.544 \\
     & DiEP & 0.379 & 0.712 & 0.751 & 0.393 & 0.527 & 0.224 & \textbf{0.736} & 0.593 & 0.539 \\
     & MoNE & 0.421 & 0.745 & 0.653 & 0.406 & 0.503 & \textbf{0.268} & 0.625 & 0.654 & 0.534 \\
     & {\cellcolor[rgb]{1,0.988,0.714}}OMP (Ours) & {\cellcolor[rgb]{1,0.988,0.714}}\textbf{0.424±0.009} & {\cellcolor[rgb]{1,0.988,0.714}}\textbf{0.765±0.014} & {\cellcolor[rgb]{1,0.988,0.714}}\textbf{0.762±0.008} & {\cellcolor[rgb]{1,0.988,0.714}}\textbf{0.409±0.005} & {\cellcolor[rgb]{1,0.988,0.714}}\textbf{0.555±0.004} & {\cellcolor[rgb]{1,0.988,0.714}}0.260±0.020 & {\cellcolor[rgb]{1,0.988,0.714}}0.693±0.029 & {\cellcolor[rgb]{1,0.988,0.714}}0.650±0.013 & {\cellcolor[rgb]{1,0.988,0.714}}\textbf{0.565±0.005} \\
    \midrule
    \multirow{7}{*}{Num=16} & MC-SMoE & \textbf{0.337} & 0.680 & 0.716 & 0.365 & 0.376 & 0.236 & 0.621 & 0.635 & 0.496 \\
     & HC-SMoE & 0.208 & 0.440 & 0.421 & 0.310 & 0.295 & 0.174 & 0.599 & 0.555 & 0.375 \\
     & NAEE & 0.284 & 0.568 & 0.659 & 0.359 & 0.367 & 0.178 & 0.599 & 0.580 & 0.449 \\
     & DiEP & 0.277 & 0.560 & 0.490 & 0.336 & 0.315 & 0.194 & 0.585 & 0.549 & 0.413 \\
     & MoNE & 0.265 & 0.583 & 0.476 & 0.369 & 0.240 & 0.222 & 0.527 & 0.627 & 0.414 \\
     & {\cellcolor[rgb]{1,0.988,0.714}}OMP (Ours) & {\cellcolor[rgb]{1,0.988,0.714}}0.333±0.009 & {\cellcolor[rgb]{1,0.988,0.714}}\textbf{0.699±0.014} & {\cellcolor[rgb]{1,0.988,0.714}}\textbf{0.750±0.008} & {\cellcolor[rgb]{1,0.988,0.714}}\textbf{0.376±0.005} & {\cellcolor[rgb]{1,0.988,0.714}}\textbf{0.438±0.004} & {\cellcolor[rgb]{1,0.988,0.714}}\textbf{0.256±0.019} & {\cellcolor[rgb]{1,0.988,0.714}}\textbf{0.625±0.029} & {\cellcolor[rgb]{1,0.988,0.714}}\textbf{0.642±0.013} & {\cellcolor[rgb]{1,0.988,0.714}}\textbf{0.515±0.005} \\
    \hline\hline
    \multicolumn{11}{c}{{\cellcolor[rgb]{0.831,0.937,0.984}}\textbf{Mixtral-8$\times$7B}} \\
    Num=8 & Original & 0.565 & 0.842 & 0.851 & 0.649 & 0.671 & 0.350 & 0.711 & 0.759 & 0.675 \\
    \midrule
    \multirow{7}{*}{Num=6} & MC-SMoE & 0.262 & 0.556 & 0.521 & 0.432 & 0.250 & 0.194 & 0.527 & 0.585 & 0.416 \\
     & HC-SMoE & 0.450 & 0.730 & 0.830 & 0.570 & 0.560 & 0.290 & 0.690 & 0.745 & 0.608 \\
     & NAEE & 0.516 & \textbf{0.819} & \textbf{0.836} & 0.616 & 0.587 & 0.330 & 0.679 & 0.754 & 0.642 \\
     & DiEP & 0.514 & 0.809 & 0.835 & 0.612 & 0.598 & 0.302 & 0.657 & 0.740 & 0.633 \\
     & MoNE & 0.463 & 0.775 & 0.702 & 0.594 & 0.493 & 0.300 & 0.545 & 0.728 & 0.575 \\
     & {\cellcolor[rgb]{1,0.988,0.714}}OMP (Ours) & {\cellcolor[rgb]{1,0.988,0.714}}\textbf{0.535±0.008} & {\cellcolor[rgb]{1,0.988,0.714}}0.809±0.015 & {\cellcolor[rgb]{1,0.988,0.714}}0.831±0.007 & {\cellcolor[rgb]{1,0.988,0.714}}\textbf{0.624±0.005} & {\cellcolor[rgb]{1,0.988,0.714}}\textbf{0.604±0.004} & {\cellcolor[rgb]{1,0.988,0.714}}\textbf{0.330±0.021} & {\cellcolor[rgb]{1,0.988,0.714}}\textbf{0.690±0.028} & {\cellcolor[rgb]{1,0.988,0.714}}\textbf{0.766±0.012} & {\cellcolor[rgb]{1,0.988,0.714}}\textbf{0.649±0.005} \\
    \midrule
    \multirow{7}{*}{Num=4} & MC-SMoE & 0.212 & 0.277 & 0.495 & 0.277 & 0.245 & 0.108 & 0.491 & 0.496 & 0.325 \\
     & HC-SMoE & 0.322 & 0.613 & 0.754 & 0.493 & 0.392 & 0.256 & 0.614 & 0.671 & 0.514 \\
     & NAEE & \textbf{0.489} & 0.782 & 0.814 & 0.577 & 0.473 & 0.290 & 0.614 & 0.729 & 0.596 \\
     & DiEP & 0.473 & 0.768 & 0.812 & \textbf{0.580} & 0.489 & 0.292 & 0.599 & \textbf{0.740} & 0.594 \\
     & MoNE & 0.310 & 0.562 & 0.620 & 0.412 & 0.231 & 0.202 & 0.527 & 0.647 & 0.439 \\
     & {\cellcolor[rgb]{1,0.988,0.714}}OMP (Ours) & {\cellcolor[rgb]{1,0.988,0.714}}0.482±0.009 & {\cellcolor[rgb]{1,0.988,0.714}}\textbf{0.781±0.015} & {\cellcolor[rgb]{1,0.988,0.714}}\textbf{0.830±0.007} & {\cellcolor[rgb]{1,0.988,0.714}}0.577±0.005 & {\cellcolor[rgb]{1,0.988,0.714}}\textbf{0.497±0.004} & {\cellcolor[rgb]{1,0.988,0.714}}\textbf{0.298±0.020} & {\cellcolor[rgb]{1,0.988,0.714}}\textbf{0.661±0.029} & {\cellcolor[rgb]{1,0.988,0.714}}0.728±0.012 & {\cellcolor[rgb]{1,0.988,0.714}}\textbf{0.607±0.004} \\
    \bottomrule
    \end{tabular}
    }
\end{table}

Tables~\ref{app:tab:detailed_main_results_1} and~\ref{app:tab:detailed_main_results_2} show that OMP-MoE obtains the highest average accuracy among the compared pruning methods for every evaluated architecture at both pruning ratios. The advantage is particularly visible at 50\% pruning: OMP-MoE reaches 0.643 on Qwen3, 0.489 on DeepSeek-V2-Lite, 0.515 on GPT-OSS-20B, and 0.607 on Mixtral. These consistent task-level results show that the aggregate improvements in the main text are not driven by a single model family.

\subsection{Comparison with One-Shot Expert Pruning Baselines}
\label{app:recent_pruning_baselines}

We further compare OMP-MoE with REAP~\cite{lasby2025reap} and EASY-EP~\cite{Dong2025easyep}, two recent one-shot expert-pruning methods. All methods are evaluated on Qwen3-30B-A3B under the same checkpoint, pruning ratio, router treatment, evaluation harness, and 64-sequence WikiText-2 calibration protocol.

\begin{table}[!htbp]
\centering
\small
\caption{Comparison with REAP and EASY-EP on Qwen3-30B-A3B. Bold indicates the best result among the three pruning methods at the same pruning ratio.}
\label{app:tab:reap_easyep_general}
\resizebox{0.97\textwidth}{!}{
\begin{tabular}{clccccccccc}
\toprule
Pruned & Method & ARC-c & ARC-e & BoolQ & HellaS. & MMLU & OBQA & RTE & WinoG. & Avg. \\
\midrule
0\% & Original & 0.528 & 0.792 & 0.887 & 0.596 & 0.778 & 0.346 & 0.823 & 0.703 & 0.682 \\
\midrule
\multirow{3}{*}{25\%}
& REAP & \textbf{0.555} & 0.797 & 0.867 & 0.579 & 0.733 & 0.330 & 0.791 & 0.690 & 0.668 \\
& EASY-EP & 0.511 & 0.791 & 0.887 & \textbf{0.593} & 0.734 & \textbf{0.344} & \textbf{0.809} & 0.695 & 0.670 \\
& OMP-MoE & 0.534 & \textbf{0.803} & \textbf{0.890} & 0.592 & \textbf{0.747} & 0.338 & 0.805 & \textbf{0.699} & \textbf{0.676} \\
\midrule
\multirow{3}{*}{50\%}
& REAP & 0.455 & 0.741 & 0.821 & 0.464 & 0.546 & 0.316 & 0.737 & 0.651 & 0.591 \\
& EASY-EP & 0.506 & 0.780 & 0.865 & 0.527 & \textbf{0.613} & 0.320 & 0.737 & 0.672 & 0.627 \\
& OMP-MoE & \textbf{0.533} & \textbf{0.795} & \textbf{0.879} & \textbf{0.546} & 0.605 & \textbf{0.324} & \textbf{0.765} & \textbf{0.695} & \textbf{0.643} \\
\bottomrule
\end{tabular}
}
\end{table}

At 50\% pruning, OMP-MoE exceeds REAP by 0.052 and EASY-EP by 0.016 in average accuracy. Among these three recent pruning methods, OMP-MoE achieves the best score on seven of the eight tasks. The larger margin under stronger pruning indicates that residual-dependent expert selection is more useful when the retained expert budget is limited.

\subsection{Compatibility with Orthogonal Compression Methods}
\label{app:orthogonality_detailed}

We conduct an evaluation of the OMP-MoE integrated with post-training quantization and unstructured weight pruning algorithms. Table~\ref{app:tab:detailed_quantization_results} reports the performance of perplexity and zero-shot accuracy on DeepSeek-V2-Lite model at a 25\% pruning ratio. These results demonstrate that OMP MoE remains functional when combined with GPTQ~\cite{frantar2023gptq} or SparseGPT~\cite{frantar2023sparsegpt} or Wanda~\cite{sun2024wanda}. This modularity allows for multi level compression without catastrophic performance degradation, and is complementary to progressive low-bit quantization~\cite{xu2026bit}.

\begin{table}[!htbp]
\centering
\small
\caption{Performance comparison of OMP-MoE combined with post-training quantization and weight pruning methods on DeepSeek-V2-Lite at a 25\% pruning ratio.}
\label{app:tab:detailed_quantization_results}
\resizebox{0.97\textwidth}{!}{
\begin{tabular}{lccccccccccc}
\toprule
\textbf{Methods} & \textbf{C4} $\downarrow$ & \textbf{WikiText-2}$\downarrow$ & \textbf{ARC-c} & \textbf{ARC-e} & \textbf{BoolQ} & \textbf{HellaS.} & \textbf{MMLU} & \textbf{OBQA} & \textbf{RTE} & \textbf{WinoG.} & \textbf{Avg.} $\uparrow$ \\
\midrule
OMP-MoE 25\% & 10.618 & 13.824 & 0.456 & 0.774 & 0.713 & 0.583 & 0.494 & 0.320 & 0.574 & 0.697 & 0.576 \\
+ GPTQ & 12.438 & 14.236 & 0.404 & 0.739 & 0.743 & 0.405 & 0.508 & 0.254 & 0.621 & 0.657 & 0.541 \\
+ SparseGPT (4:8) & 13.709 & 15.673 & 0.369 & 0.711 & 0.720 & 0.501 & 0.368 & 0.302 & 0.578 & 0.655 & 0.525 \\
+ Wanda & 13.321 & 14.430 & 0.394 & 0.728 & 0.720 & 0.501 & 0.390 & 0.308 & 0.585 & 0.649 & 0.534 \\
\bottomrule
\end{tabular}
}
\end{table}

\subsection{Sensitivity Studies on Calibration Set Size}
\label{app:sensitivity_study}

We analyze the impact of calibration data volume and domain on the pruning performance. Table~\ref{tab:calib_size_sensitivity} shows the results for Qwen3-30B-A3B at a 50\% pruning ratio. The accuracy remains stable as the sample size increases from 16 to 256. We find that 64 samples provide sufficient information for near-optimal expert selection. The performance is consistent across both C4~\cite{raffel2020c4} and WikiText-2~\cite{merity2017wikitext2} calibration sets.

\begin{table}[!htbp]
    \centering
    \small
    \caption{{Impact of calibration sample size and dataset source on Qwen3-30B-A3B performance at a 50\% pruning ratio.}}
    \label{tab:calib_size_sensitivity}
    \resizebox{0.97\textwidth}{!}{
    \begin{tabular}{ccccccccccc}
    \toprule
    \multicolumn{1}{l}{\textbf{Calibration Dataset}} & \multicolumn{1}{l}{\textbf{Number of Samples}} & \textbf{ARC-c} & \textbf{ARC-e} & \textbf{BoolQ} & \textbf{HellaS.} & \textbf{MMLU} & \textbf{OBQA} & \textbf{RTE} & \textbf{WinoG.} & \textbf{Avg.}$\uparrow$ \\
    \midrule
    \multirow{5}{*}{C4} & 16 & 0.428 & 0.708 & 0.867 & 0.575 & 0.544 & 0.312 & 0.758 & 0.702 & 0.612 \\
     & 32 & 0.460 & 0.742 & 0.870 & 0.572 & 0.554 & 0.308 & \textbf{0.791} & 0.692 & 0.624 \\
     & 64 & 0.457 & 0.737 & 0.872 & \textbf{0.579} & 0.574 & 0.332 & 0.765 & \textbf{0.703} & 0.627 \\
     & 128 & 0.482 & 0.777 & 0.874 & 0.575 & 0.578 & 0.320 & 0.736 & 0.695 & \textbf{0.630} \\
     & 256 & \textbf{0.497} & \textbf{0.778} & \textbf{0.880} & 0.575 & \textbf{0.583} & \textbf{0.338} & 0.675 & 0.702 & 0.628 \\
    \midrule
    \multirow{5}{*}{WikiText-2} & 16 & 0.517 & 0.778 & 0.873 & 0.548 & 0.606 & 0.316 & 0.762 & 0.681 & 0.635 \\
     & 32 & 0.526 & 0.781 & \textbf{0.880} & 0.550 & 0.617 & \textbf{0.334} & 0.751 & 0.689 & 0.641 \\
     & 64 & \textbf{0.533} & \textbf{0.795} & 0.879 & 0.546 & 0.605 & 0.324 & 0.765 & \textbf{0.695} & \textbf{0.643} \\
     & 128 & 0.521 & 0.792 & 0.878 & 0.549 & 0.614 & 0.314 & 0.729 & 0.692 & 0.636 \\
     & 256 & 0.486 & 0.753 & 0.844 & \textbf{0.564} & \textbf{0.698} & 0.326 & \textbf{0.769} & 0.662 & 0.638 \\
    \bottomrule
    \end{tabular}
    }
\end{table}

\subsection{Ablation of Adaptive Inference (OMP-MoE$^\dagger$ )}
\label{app:ablation_omp_plus}

We evaluate the trade-off between inference efficiency and model fidelity using the OMP-MoE$^\dagger$  mechanism. Table~\ref{tab:omp_plus_ablation} summarizes the performance of DeepSeek-V2-Lite under different thresholds $\epsilon$. We report the average expert activation count, expert skip ratios, and the perplexity on WikiText-2 and C4 datasets, and the zero-shot average accuracy scores on 8 benchmarks. The results indicate that a threshold $\epsilon$ of 0.1 balances FLOPs and accuracy effectively.

\begin{table}[!htbp]
    \centering
    \small
    \caption{Impact of the OMP-MoE$^\dagger$  threshold $\epsilon$ on the inference efficiency and accuracy of DeepSeek-V2-Lite at a 25\% pruning ratio.}
    \label{tab:omp_plus_ablation}
    \begin{tabular}{cccccc}
    \toprule
    \textbf{Threshold $\epsilon$} & \textbf{Average Experts} & \textbf{Skip Ratio} & \textbf{WikiText-2}$\downarrow$ & \textbf{C4} $\downarrow$ & \textbf{Average Accuracy} $\uparrow$\\
    \midrule
    0 (Static) & 6 & 0.00\% & 13.824 & 10.618 & 0.576 \\
    0.001 & 5.97 & 0.50\% & 13.180 & 10.405 & 0.575 \\
    0.01 & 5.73 & 4.50\% & 13.061 & 10.333 & 0.576 \\
    0.05 & 4.68 & 22.00\% & 13.062 & 10.341 & 0.578 \\
    0.1 (Default) & 3.90 & 35.00\% & 13.576 & 10.522 & 0.581 \\
    0.2 & 2.90 & 51.70\% & 13.580 & 10.785 & 0.570 \\
    0.3 & 2.42 & 59.60\% & 14.205 & 11.170 & 0.554 \\
    \bottomrule
    \end{tabular}
\end{table}

\subsection{Empirical Inference Efficiency across Diverse Architectures}
\label{app:sub:empirical_efficiency}

We present the empirical evaluation of inference performance and efficiency trade offs across four distinct Mixture of Experts architectures~\cite{zhu2026fewer}. Tables~\ref{app:tab:gpt-oss-20b-omp++}, \ref{app:tab:mixtral-omp++}, \ref{app:tab:qwen3-30b-a3b-omp++}, and \ref{app:tab:deepseek-v2-lite-omp++} summarize the average zero-shot accuracy, the total computational cost measured in seconds, and the relative speedup for GPT-OSS-20B, Mixtral-8$\times$7B, Qwen3-30B-A3B, and DeepSeek-V2-Lite respectively. Each evaluation compares the static expert pruning framework against the adaptive OMP-MoE$^\dagger$ mechanism under the global budget allocation strategy. The results demonstrate that the synergy between structured pruning and dynamic expert activation yields substantial reductions in execution latency while preserving the linguistic reasoning proficiency of the model across various parameter scales.

\begin{table}[!htbp]
\centering
\small
\caption{Inference performance and efficiency trade offs for the GPT-OSS-20B model.}
\label{app:tab:gpt-oss-20b-omp++}
\begin{tabular}{l|ccc|cc}
\toprule
Pruning Ratio & OMP-MoE & OMP-MoE$^\dagger$ & Avg. Acc & Cost $\downarrow$ & Speedup $\uparrow$ \\
\midrule
0 & - & - & 0.574 & 3188s & 1.00$\times$ \\
0.25 & \checkmark & - & 0.565 & 2140s & 1.49$\times$ \\
0.25 & \checkmark & \checkmark & 0.562 & 2129s & 1.50$\times$ \\
0.50 & \checkmark & - & 0.515 & 1705s & 1.87$\times$ \\
0.50 & \checkmark & \checkmark & 0.510 & 1596s & 2.00$\times$ \\
\bottomrule
\end{tabular}
\end{table}

\begin{table}[!htbp]
\centering
\small
\caption{Inference performance and efficiency trade offs for the Mixtral-8$\times$7B model.}
\label{app:tab:mixtral-omp++}
\begin{tabular}{l|ccc|cc}
\toprule
Pruning Ratio & OMP-MoE & OMP-MoE$^\dagger$ & Avg. Acc & Cost $\downarrow$ & Speedup $\uparrow$ \\
\midrule
0 & - & - & 0.675 & 2954s & 1.00$\times$ \\
0.25 & \checkmark & - & 0.649 & 2685s & 1.10$\times$ \\
0.25 & \checkmark & \checkmark & 0.644 & 2517s & 1.17$\times$ \\
0.50 & \checkmark & - & 0.607 & 2382s & 1.24$\times$ \\
0.50 & \checkmark & \checkmark & 0.600 & 2315s & 1.28$\times$ \\
\bottomrule
\end{tabular}
\end{table}

\begin{table}[!htbp]
\centering
\small
\caption{Inference performance and efficiency trade offs for the Qwen3-30B-A3B model.}
\label{app:tab:qwen3-30b-a3b-omp++}
\begin{tabular}{l|ccc|cc}
\toprule
Pruning Ratio & OMP-MoE & OMP-MoE$^\dagger$ & Avg. Acc & Cost $\downarrow$ & Speedup $\uparrow$ \\
\midrule
0 & - & - & 0.682 & 9980 & 1.00$\times$ \\
0.25 & \checkmark & - & 0.676 & 8911s & 1.12$\times$ \\
0.25 & \checkmark & \checkmark & 0.674 & 8382s & 1.19$\times$ \\
0.50 & \checkmark & - & 0.643 & 6698s & 1.49$\times$ \\
0.50 & \checkmark & \checkmark & 0.641 & 6447s & 1.55$\times$ \\
\bottomrule
\end{tabular}
\end{table}

\begin{table}[!htbp]
\centering
\small
\caption{Inference performance and efficiency trade offs for the DeepSeek-V2-Lite model.}
\label{app:tab:deepseek-v2-lite-omp++}
\begin{tabular}{l|ccc|cc}
\toprule
Pruning Ratio & OMP-MoE & OMP-MoE$^\dagger$ & Avg. Acc & Cost $\downarrow$ & Speedup $\uparrow$ \\
\midrule
0 & - & - & 0.609 & 2459s & 1.00$\times$ \\
0.25 & \checkmark & - & 0.576 & 2222s & 1.11$\times$ \\
0.25 & \checkmark & \checkmark & 0.581 & 2019s & 1.22$\times$ \\
0.50 & \checkmark & - & 0.489 & 1825s & 1.35$\times$ \\
0.50 & \checkmark & \checkmark & 0.476 & 1690s & 1.46$\times$ \\
\bottomrule
\end{tabular}
\end{table}

Across Tables~\ref{app:tab:gpt-oss-20b-omp++}--\ref{app:tab:deepseek-v2-lite-omp++}, adaptive execution provides additional latency reduction after static expert pruning. At 50\% pruning, the combined speedups range from 1.28$\times$ on Mixtral to 2.00$\times$ on GPT-OSS-20B. The corresponding accuracy changes are small on GPT-OSS-20B, Mixtral, and Qwen3, while DeepSeek-V2-Lite exhibits a larger decrease from 0.489 to 0.476, illustrating that the adaptive threshold retains an architecture-dependent accuracy-efficiency trade-off.

\subsection{Robustness and Scaling Results}\label{app:robustness_scaling}

We test robustness beyond the primary 25\% and 50\% pruning settings from three complementary perspectives: more aggressive expert removal, transfer to a substantially larger MoE model, and comparison with smaller dense alternatives. Table~\ref{tab:stronger_retained_ratio} evaluates Qwen3-30B-A3B when only 40\% or 25\% of its experts are retained. Table~\ref{tab:qwen235b} evaluates scaling to Qwen3-235B-A22B, and Table~\ref{tab:dense_comparison} compares the 50\%-pruned Qwen3-30B-A3B with dense Qwen3 models.

\begin{table}[!htbp]
\centering
\small
\caption{Performance under stronger retained-expert constraints on Qwen3-30B-A3B. ``Retained'' denotes the fraction of experts kept in each MoE layer.}
\label{tab:stronger_retained_ratio}
\resizebox{0.97\textwidth}{!}{
\begin{tabular}{c|l|ccccccccc}
\toprule
Retained & Method & ARC-c & ARC-e & BoolQ & HellaS. & MMLU & OBQA & RTE & WinoG. & Avg. \\
\midrule
100\% & Original & 0.528 & 0.792 & 0.887 & 0.596 & 0.778 & 0.346 & 0.823 & 0.703 & 0.682 \\
\midrule
40\% & NAEE & 0.276 & 0.457 & 0.623 & 0.401 & 0.239 & 0.222 & 0.610 & 0.617 & 0.431 \\
40\% & DiEP & 0.299 & 0.562 & 0.775 & 0.425 & \textbf{0.375} & 0.224 & 0.570 & 0.630 & 0.482 \\
40\% & OMP-MoE & \textbf{0.372} & \textbf{0.636} & \textbf{0.848} & \textbf{0.553} & 0.235 & \textbf{0.284} & \textbf{0.736} & \textbf{0.705} & \textbf{0.546} \\
\midrule
25\% & NAEE & 0.195 & 0.283 & 0.419 & 0.262 & \textbf{0.233} & 0.132 & \textbf{0.588} & 0.507 & 0.327 \\
25\% & DiEP & 0.212 & 0.423 & 0.622 & 0.316 & 0.229 & 0.142 & 0.545 & 0.556 & 0.381 \\
25\% & OMP-MoE & \textbf{0.241} & \textbf{0.453} & \textbf{0.712} & \textbf{0.427} & 0.230 & \textbf{0.192} & 0.538 & \textbf{0.636} & \textbf{0.429} \\
\bottomrule
\end{tabular}}
\end{table}

Under aggressive compression in Table~\ref{tab:stronger_retained_ratio}, OMP-MoE achieves average accuracies of 0.546 and 0.429 when retaining 40\% and 25\% of experts, exceeding DiEP by 0.064 and 0.048 and NAEE by 0.115 and 0.102, respectively. Performance declines as expected when three quarters of the experts are removed, but the relative advantage of residual-conditioned selection remains.

\begin{table}[!htbp]
\centering
\small
\caption{Performance on Qwen3-235B-A22B at a 50\% pruning ratio. We report eight-task accuracy and search time.}
\label{tab:qwen235b}
\resizebox{0.97\textwidth}{!}{
\begin{tabular}{c|l|ccccccccc|c}
\toprule
Num & Method & ARC-c & ARC-e & BoolQ & HellaS. & MMLU & OBQA & RTE & WinoG. & Avg. & Time (s) \\
\midrule
128 & Original & 0.618 & 0.854 & 0.896 & 0.678 & 0.849 & 0.372 & 0.812 & 0.787 & 0.733 & -- \\
64 & NAEE & 0.486 & 0.737 & 0.841 & 0.591 & 0.680 & 0.296 & \textbf{0.747} & 0.740 & 0.640 & 20491 \\
64 & DiEP & 0.513 & 0.789 & 0.854 & 0.611 & 0.698 & 0.344 & 0.711 & 0.757 & 0.660 & 10473 \\
64 & OMP-MoE & \textbf{0.559} & \textbf{0.822} & \textbf{0.898} & \textbf{0.667} & \textbf{0.724} & \textbf{0.358} & 0.682 & \textbf{0.774} & \textbf{0.685} & \textbf{846} \\
\bottomrule
\end{tabular}}
\end{table}

The scaling results in Table~\ref{tab:qwen235b} show that OMP-MoE reaches 0.685 average accuracy on Qwen3-235B-A22B, compared with 0.660 for DiEP and 0.640 for NAEE. Its 846-second search is more than 12$\times$ faster than DiEP and 24$\times$ faster than NAEE, indicating that the cached greedy search remains practical as the model size increases.

\begin{table}[!htbp]
\centering
\small
\caption{Comparison between the pruned Qwen3-30B-A3B MoE model and smaller dense Qwen3 models across eight tasks.}
\label{tab:dense_comparison}
\resizebox{0.97\textwidth}{!}{
\begin{tabular}{l|ccccccccc}
\toprule
Model & ARC-c & ARC-e & BoolQ & HellaS. & MMLU & OBQA & RTE & WinoG. & Avg. \\
\midrule
Qwen3-1.7B dense & 0.399 & 0.725 & 0.777 & 0.461 & 0.524 & 0.278 & 0.708 & 0.615 & 0.561 \\
Qwen3-4B dense & 0.503 & 0.752 & 0.850 & 0.522 & 0.530 & 0.296 & 0.747 & 0.654 & 0.607 \\
Qwen3-30B-A3B, 50\% pruned & \textbf{0.533} & \textbf{0.795} & \textbf{0.879} & \textbf{0.546} & \textbf{0.605} & \textbf{0.324} & \textbf{0.765} & \textbf{0.695} & \textbf{0.643} \\
\bottomrule
\end{tabular}}
\end{table}

Table~\ref{tab:dense_comparison} provides a deployment-oriented reference point. The 50\%-pruned Qwen3-30B-A3B obtains an average accuracy of 0.643, outperforming the dense Qwen3-4B and Qwen3-1.7B models by 0.036 and 0.082, respectively. Thus, structured expert removal retains an accuracy advantage over replacing the MoE with these smaller dense checkpoints.

\subsection{Domain-Specific Calibration on Generative Tasks}
\label{app:domain_specific_calibration}

We next test whether task-relevant calibration better preserves specialized experts. For GSM8K, we calibrate on \texttt{allenai/tulu-3-sft-personas-math}. For HumanEval, we use \texttt{theblackcat102/evol-codealpaca-v1}. Each training split is shuffled with seed 42 before constructing 64 sequences of 4,096 tokens. OMP-MoE, REAP, and EASY-EP use exactly the same calibration sequences and raw-token calibration format. We keep the OMP-MoE hyperparameters fixed from the eight-task development setting and do not retune them on GSM8K or HumanEval.

\begin{table}[!htbp]
\centering
\small
\caption{Domain-specific calibration on Qwen3-30B-A3B. GSM8K uses 5-shot exact match with mathematical calibration, while HumanEval uses 0-shot pass@1 with code calibration and the \texttt{humaneval\_instruct} evaluation protocol. Bold indicates the best pruned result at each pruning ratio.}
\label{tab:domain_specific_generative}
\begin{tabular}{llccc}
\toprule
Task & Method & Full (0\%) & 25\% & 50\% \\
\midrule
\multirow{4}{*}{GSM8K}
& Full model & 0.891 & -- & -- \\
& OMP-MoE & -- & \textbf{0.897} & \textbf{0.898} \\
& REAP & -- & 0.895 & 0.895 \\
& EASY-EP & -- & 0.895 & 0.888 \\
\midrule
\multirow{4}{*}{HumanEval}
& Full model & 0.933 & -- & -- \\
& OMP-MoE & -- & \textbf{0.939} & \textbf{0.896} \\
& REAP & -- & 0.927 & 0.878 \\
& EASY-EP & -- & 0.921 & 0.872 \\
\bottomrule
\end{tabular}
\end{table}

With mathematical calibration, OMP-MoE reaches 0.897 and 0.898 exact match on GSM8K at 25\% and 50\% pruning, respectively, compared with 0.891 for the full model. With code calibration, it reaches 0.939 and 0.896 pass@1 on HumanEval, compared with 0.933 for the full model. OMP-MoE also outperforms REAP and EASY-EP at both pruning ratios on both tasks. These results show that the retained expert set is calibration-distribution dependent: general-domain calibration can preserve substantial reasoning ability, while domain-specific calibration can further recover specialized mathematical and code-generation capabilities.

\subsection{Reconstruction Fidelity and Downstream Accuracy}\label{app:reconstruction_accuracy}

We further study the relation between reconstruction error and downstream accuracy on Qwen3-30B-A3B. We compute the total reconstruction error $E$ as the sum of normalized residual rates across all MoE layers. Table~\ref{tab:reconstruction_accuracy_fit} reports this error together with the corresponding average accuracy across eight tasks at five pruning ratios.
\begin{table}[h]
\centering
\small
\caption{Reconstruction error and average downstream performance on Qwen3-30B-A3B.}
\label{tab:reconstruction_accuracy_fit}
\begin{tabular}{c|cc}
\toprule
Pruning Ratio & Reconstruction Error $E$ & Average Accuracy $A$ \\
\midrule
0\% & 0.000 & 0.682 \\
25\% & 0.193 & 0.676 \\
50\% & 2.633 & 0.643 \\
60\% & 5.388 & 0.546 \\
75\% & 17.047 & 0.429 \\
\bottomrule
\end{tabular}
\end{table}
We define normalized reconstruction fidelity as
\begin{equation}
\phi = 1-\frac{E}{17.047},
\end{equation}
where $E$ is the total reconstruction error and $\phi\in[0,1]$. A least-squares fit over the five operating points gives
\begin{equation}
A \approx 0.4170 + 0.2533\phi,\qquad
R^2 \approx 0.947.
\end{equation}
The positive fitted slope shows that higher reconstruction fidelity is associated with higher downstream average accuracy. This supports the use of reconstruction residual as the optimization target in OMP-MoE, while the sensitivity analysis in Appendix~\ref{app:sensitivity_energy} explains why local deviations may occur when residuals lie in low-sensitivity directions.

\FloatBarrier
\section{Visualization and Analytical Experiments}
\label{app:viz}

We provide a comprehensive visual analysis of the OMP-MoE process. This section includes layer-wise results for four different MoE architectures. These models are Qwen3-30B-A3B, DeepSeek-V2-Lite, GPT-OSS-20B, and Mixtral-8$\times$7B.

\subsection{Reconstruction Gain Heatmaps}
\label{app:intro:heatmaps}
We visualize the dynamic importance of experts through reconstruction gain heatmaps. These heatmaps illustrate the relative contribution of each candidate expert at every selection step. A higher color intensity represents a larger reduction in the residual signal. Figure~\ref{apx:fig:app_heatmaps_1} and \ref{apx:fig:app_heatmaps_2} present the layer-wise heatmap results for all four experimental models. These visualizations reveal how the priority of experts shifts during the iterative search process. The OMP algorithm effectively disentangles the correlations between experts by updating the residual signal.

We visualize the reconstruction gain heatmaps to illustrate the dynamic selection of experts throughout the greedy selection process. Each cell within the grid represents the reconstruction gain $\Delta_e$ for a candidate expert at a specific selection iteration. The horizontal axis tracks the progression of selection steps while the vertical axis denotes individual expert indices within the MoE layer. The varying color intensity characterizes the marginal contribution of an expert to the current residual signal. Bright regions indicate experts with high correlation to the uncaptured information manifold, whereas darker regions signify high redundancy relative to the previously selected subset.

These heatmaps reveal the conditional nature of expert importance because the intensity patterns shift significantly after each iteration step. We observe that experts exhibit mutual dependencies during the iterative pursuit. The selection of a specific expert alters the residual signal energy. And this modification in the residual energy subsequently modifies the reconstruction gain of all remaining expert candidates. The pattern of mutual dependencies in experts confirms that one-shot ranking methodologies are insufficient for identifying optimal expert subsets. Furthermore, the patterns highlight that specific experts consistently dominate the initial reconstruction phase in shallow layers, whereas deeper layers exhibit more distributed importance across the expert dictionary. We provide the full layer wise heatmap results for the four primary experimental architectures in Figures~\ref{apx:fig:app_heatmaps_1} and~\ref{apx:fig:app_heatmaps_2}.

\begin{figure*}[!p]
    \centering

    \begin{subfigure}[b]{\linewidth}
        \centering
        \includegraphics[width=\linewidth]{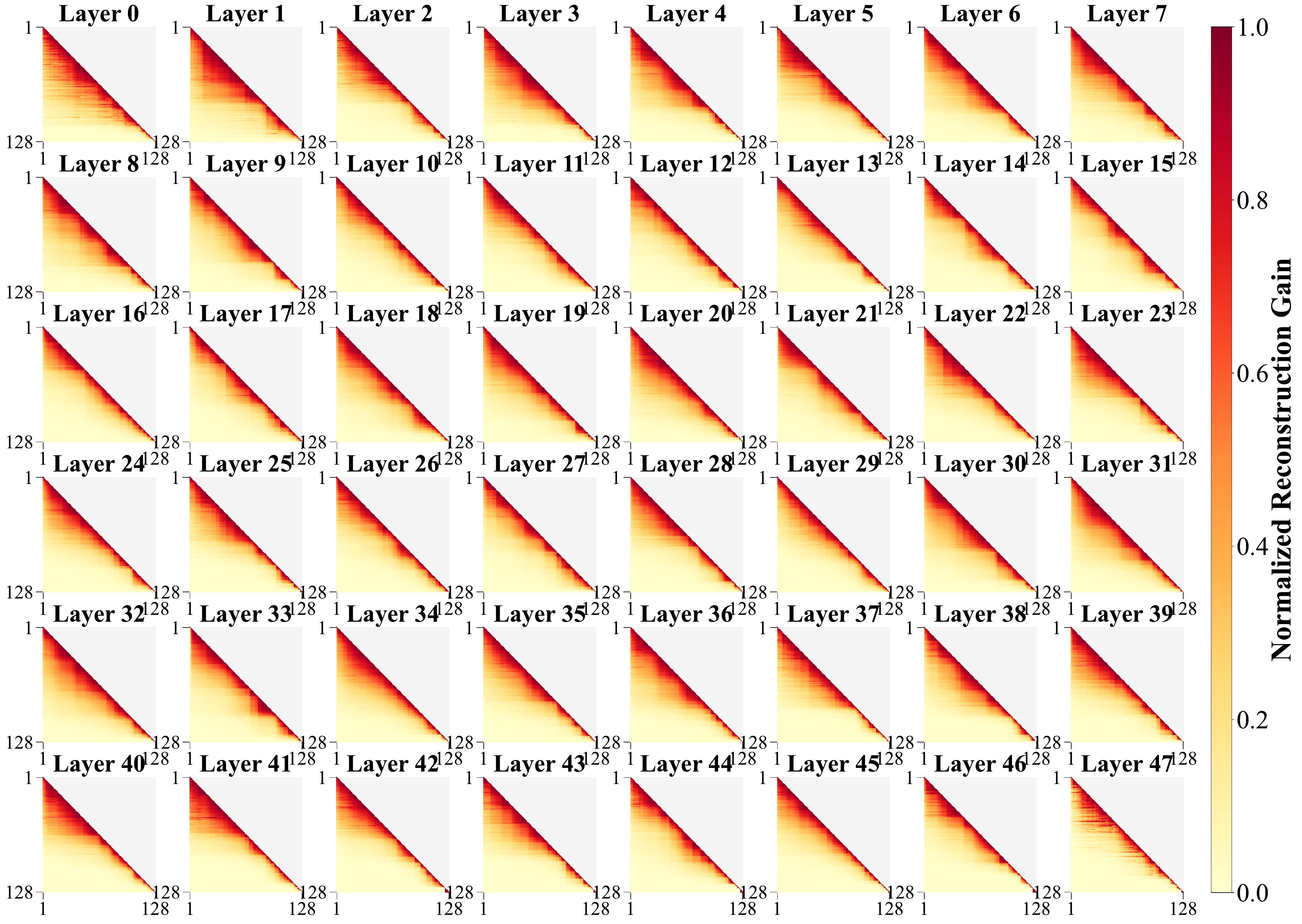}
        \caption{Visualization of Layer-wise reconstruction gain heatmaps within each MoE layer on Qwen3-30B-A3B.}
        \label{apx:fig:Qwen3_heatmap}
    \end{subfigure}

    \begin{subfigure}[b]{\linewidth}
        \centering
        \includegraphics[width=\linewidth]{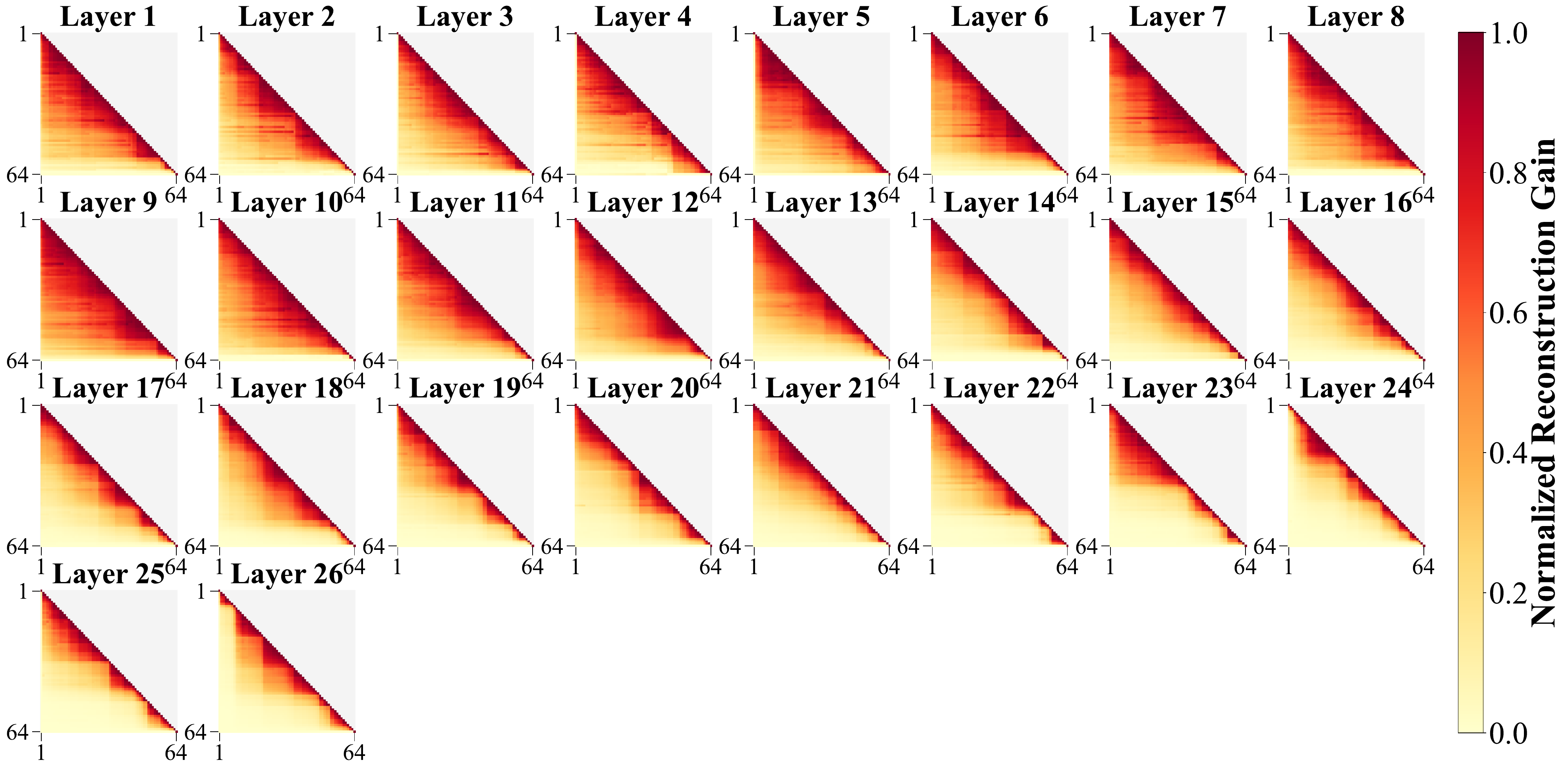}
        \caption{Visualization of Layer-wise reconstruction gain heatmaps within each MoE layer on DeepSeek-V2-Lite.}
        \label{apx:fig:DeepSeek_heatmap}
    \end{subfigure}
    \caption{Layer-wise reconstruction gain heatmaps for the Qwen3-30B-A3B and DeepSeek-V2-Lite models. Each row represents a specific MoE layer and each column represents a selection iteration. Bright cells indicate experts that provide the maximum reconstruction gain at that step.}
    \label{apx:fig:app_heatmaps_1}

\end{figure*}

\begin{figure*}[!p]
    \centering
    \begin{subfigure}[b]{\linewidth}
        \centering
        \includegraphics[width=\linewidth,height=0.36\textheight,keepaspectratio]{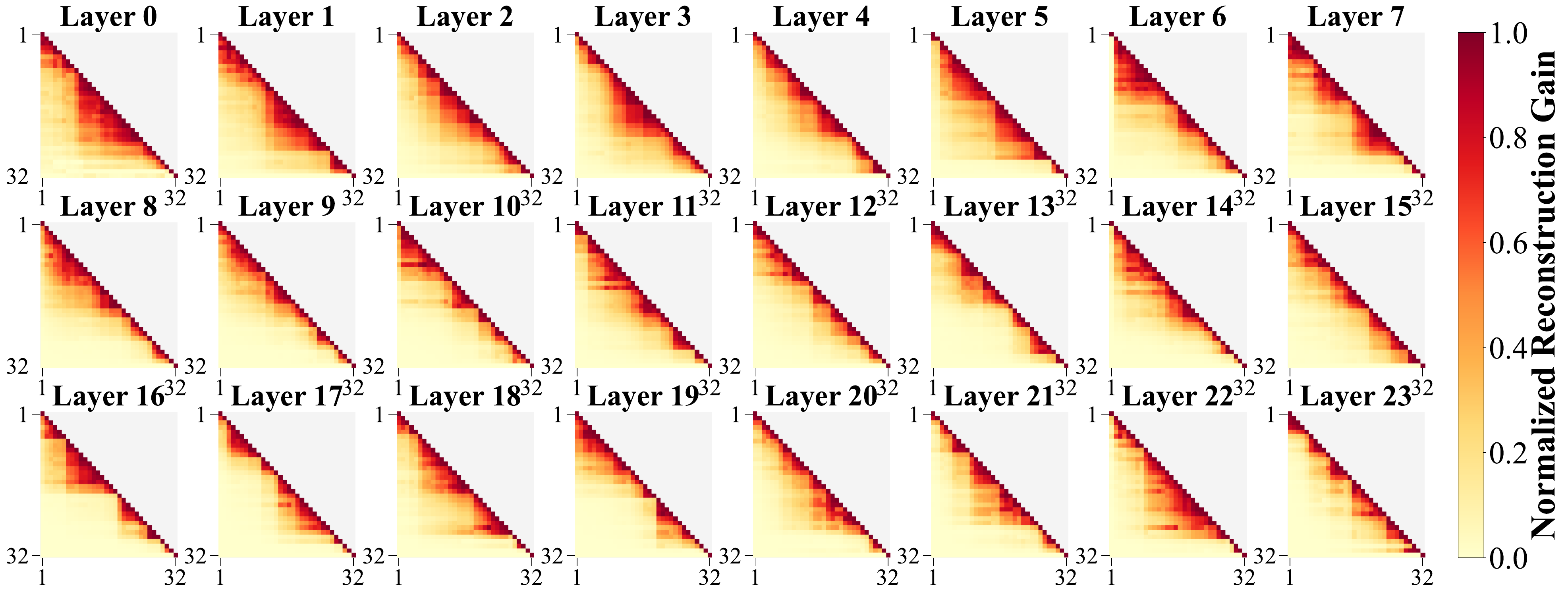}
        \caption{Visualization of Layer-wise reconstruction gain heatmaps within each MoE layer on GPT-OSS-20B.}
        \label{apx:fig:GPT_heatmap}
    \end{subfigure}

    \begin{subfigure}[b]{\linewidth}
        \centering
        \includegraphics[width=\linewidth,height=0.36\textheight,keepaspectratio]{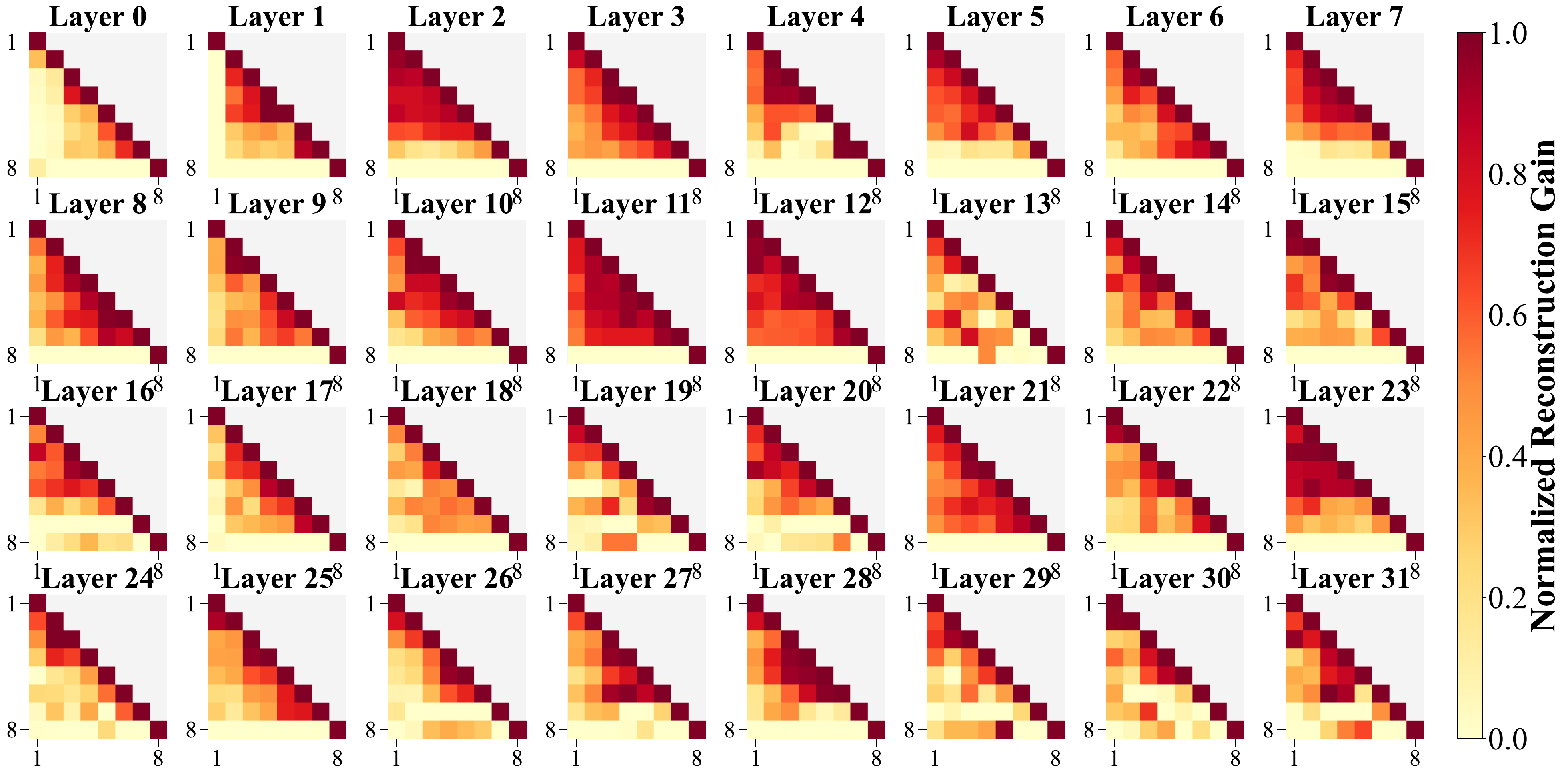}
        \caption{Visualization of Layer-wise reconstruction gain heatmaps within each MoE layer on Mixtral-8$\times$7B.}
        \label{apx:fig:Mixtral_heatmap}
    \end{subfigure}

    \caption{Layer-wise reconstruction gain heatmaps for the GPT-OSS-20B and Mixtral-8$\times$7B models. Each row represents a specific MoE layer and each column represents a selection iteration. Bright cells indicate experts that provide the maximum reconstruction gain at that step.}
    \label{apx:fig:app_heatmaps_2}
\end{figure*}

\FloatBarrier
\subsection{Normalized Residual Curves}
\label{app:intro:residual_curves}
We investigate the convergence properties of the iterative selection process by examining the normalized residual curves across various MoE architectures. The horizontal axis quantifies the count of selected experts while the vertical axis represents the normalized Frobenius norm of the residual signal. Each curve characterizes the proportion of the original signal energy that remains uncaptured after a specific number of selection iterations. We observe that these trajectories typically exhibit a monotonic decay which signifies the progressive reconstruction of the layer output through the Orthogonal Matching Pursuit algorithm.

The gradient of the residual curve provides a direct metric for expert redundancy within a specific MoE layer. A rapid descent indicates that a small subset of experts encapsulates the majority of the information whereas a linear or slow decay suggests that the contribution energy is distributed across a wider dictionary of atoms. Shallow layers frequently demonstrate high signal concentration which allows for aggressive pruning without substantial reconstruction loss. Conversely, deeper layers often require a larger expert budget to satisfy the same reconstruction threshold.

Furthermore, we visualize the risk rate curves to illustrate the stability of the routing distribution as described in Section~\ref{subsec:cross_layer_allocation}. These curves track the preserved routing mass as the expert count increases. The intersection of the residual and risk trajectories highlights the physical trade off between reconstruction fidelity and routing renormalization stability. This visualized heterogeneity across layers justifies our use of a discrete water filling strategy for global budget allocation because it prioritizes layers with the steepest gain to risk ratios. Figures~\ref{apx:fig:app_residual_curves_1} and \ref{apx:fig:app_residual_curves_2} provide the complete layer wise results for all experimental models.

\begin{figure*}[!p]
    \centering
    \begin{subfigure}[b]{\linewidth}
        \centering
        \includegraphics[width=\linewidth,height=0.36\textheight,keepaspectratio]{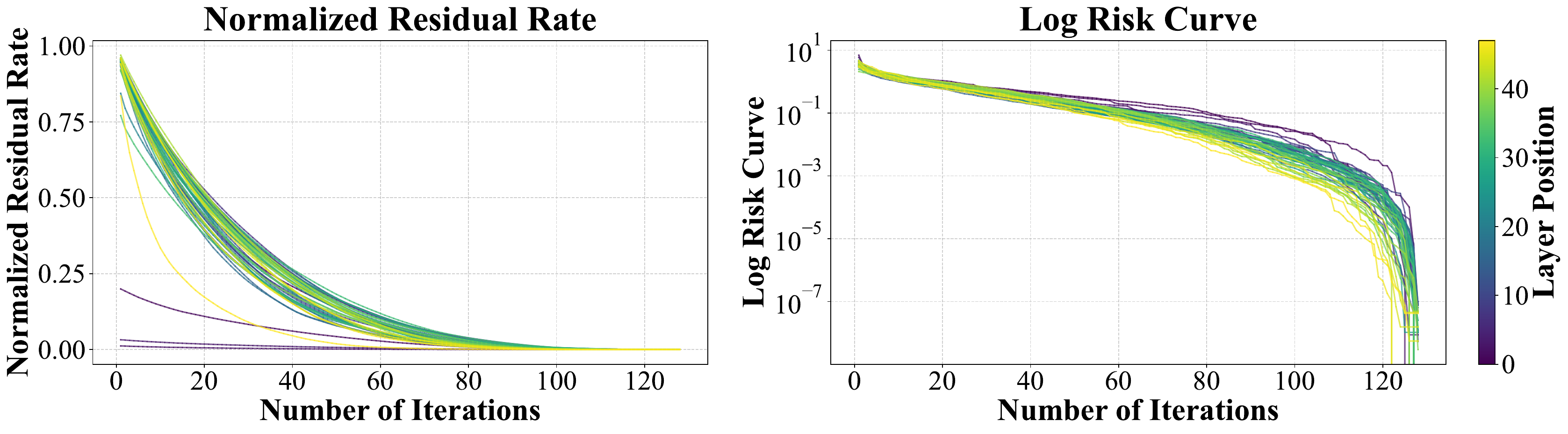}
        \caption{Visualization of normalized residual and risk curves on the Qwen3-30B-A3B model.}
        \label{apx:fig:Qwen3_residual_curves}
    \end{subfigure}

    \begin{subfigure}[b]{\linewidth}
        \centering
        \includegraphics[width=\linewidth,height=0.36\textheight,keepaspectratio]{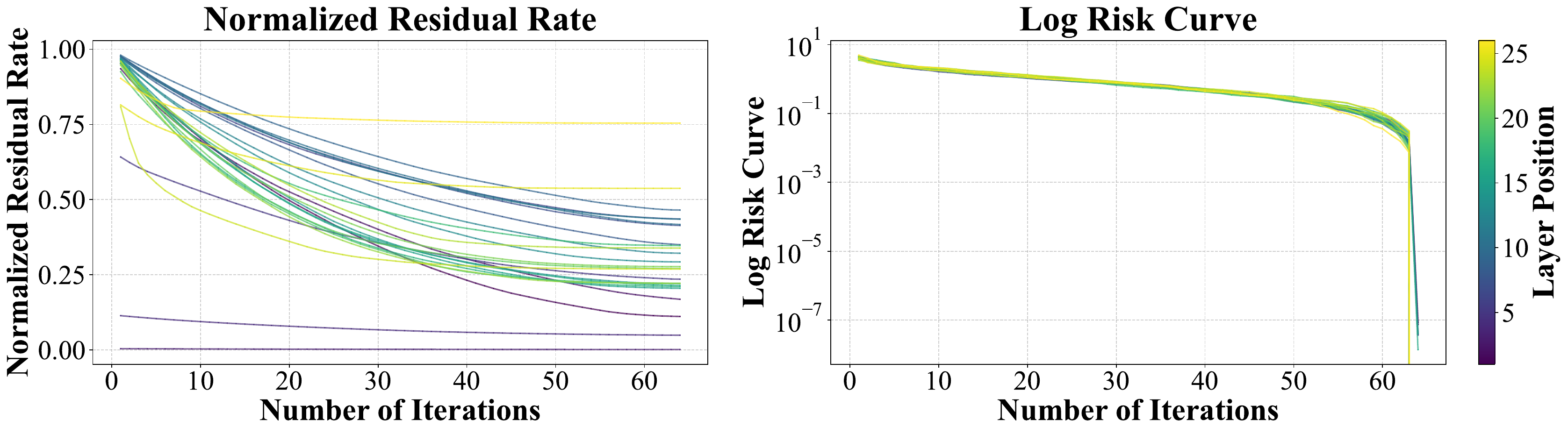}
        \caption{Visualization of normalized residual and risk curves on the DeepSeek-V2-Lite model.}
        \label{apx:fig:DeepSeek_residual_curves}
    \end{subfigure}

    \caption{Normalized residual and risk curves for all layers on Qwen3-30B-A3B and DeepSeek-V2-Lite models. The horizontal axis represents the number of retained experts. The vertical axis represents the normalized residual rate and risk rate. Curves with rapid decay indicate layers with high expert redundancy.}
    \label{apx:fig:app_residual_curves_1}
\end{figure*}
\begin{figure*}[!p]
    \centering
    \begin{subfigure}[b]{\linewidth}
        \centering
        \includegraphics[width=\linewidth,height=0.36\textheight,keepaspectratio]{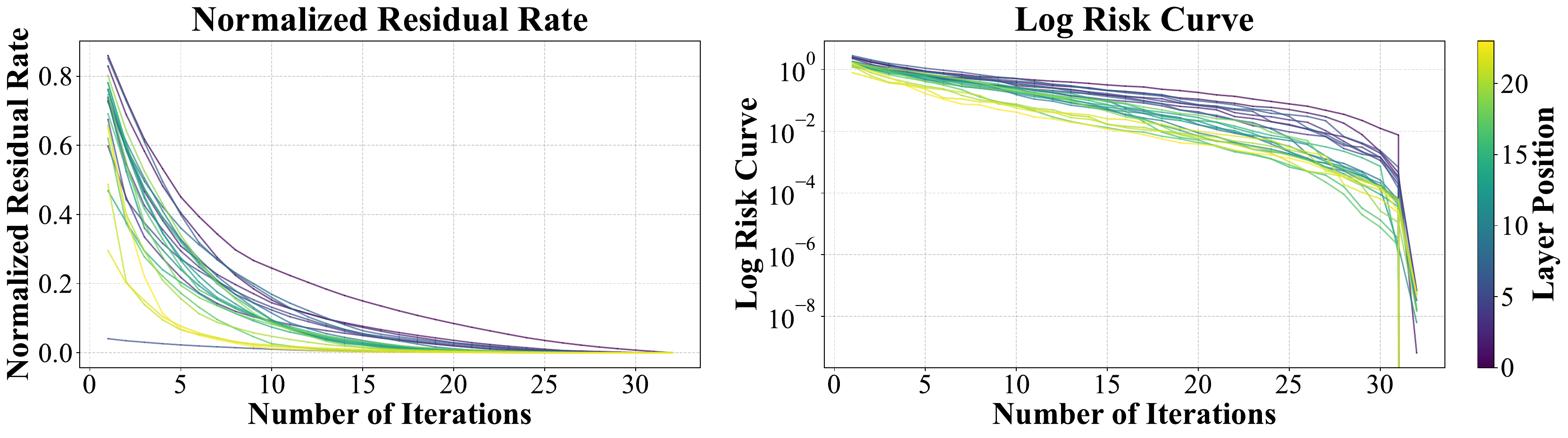}
        \caption{Visualization of normalized residual and risk curves on the GPT-OSS-20B model.}
        \label{apx:fig:GPT_residual_curves}
    \end{subfigure}

    \begin{subfigure}[b]{\linewidth}
        \centering
        \includegraphics[width=\linewidth,height=0.36\textheight,keepaspectratio]{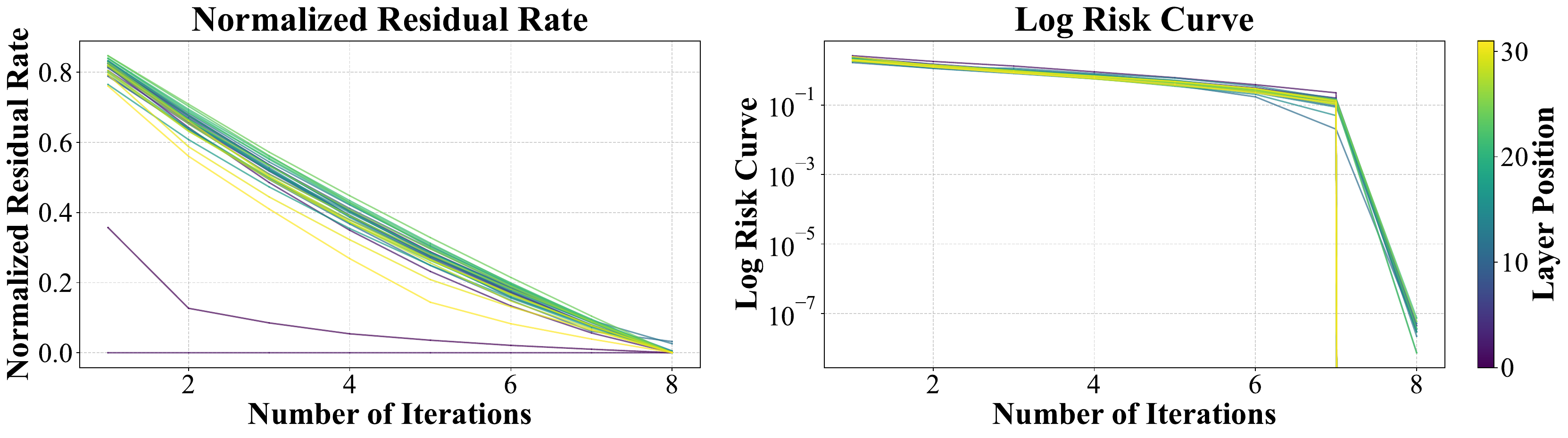}
        \caption{Visualization of normalized residual and risk curves on the Mixtral-8$\times$7B model.}
        \label{apx:fig:Mixtral_residual_curves}
    \end{subfigure}

    \caption{Normalized residual and risk curves for all layers on GPT-OSS-20B and Mixtral-8$\times$7B models. The horizontal axis represents the number of retained experts. The vertical axis represents the normalized residual rate and risk rate. Curves with rapid decay indicate layers with high expert redundancy.}
    \label{apx:fig:app_residual_curves_2}
\end{figure*}

Across Figures~\ref{apx:fig:app_residual_curves_1} and~\ref{apx:fig:app_residual_curves_2}, the decay rates differ substantially across both layers and architectures. This variation supplies the layer-specific marginal-gain curves used by the global allocator rather than assuming uniform redundancy throughout a model.

\FloatBarrier
\subsection{Identification of High-Contribution Experts}
\label{app:super_experts}
We use the descriptive term \textit{high-contribution expert} for an expert whose first-step OMP selection reconstructs more than 90\% of a layer's routed-output energy. Operationally, this occurs when the normalized residual after the first greedy step satisfies $r_l(1)<0.1$, corresponding to a first-step reconstruction gain greater than 0.9. Our definition is specific to the calibration-conditioned OMP objective and quantifies importance through residual reduction relative to the other candidate atoms in the same layer.

This phenomenon is distinct from the \textit{Super Experts} introduced by Su et al.~\cite{su2025superexperts}. That work identifies rare experts through extreme \texttt{down\_proj} activation outliers and studies their data-agnostic role in hidden-state outliers, attention sinks, and downstream performance. In contrast, our analysis neither assigns that established name nor assumes the same activation mechanism: it uses the binary matching-pursuit view to measure how much routed-output energy a candidate uniquely explains under a particular calibration distribution. The two criteria may identify overlapping experts, but they answer different questions and are not treated as equivalent here.

\begin{table}[h]
\centering
\small
\caption{High-contribution experts across four MoE architectures, identified by the initial OMP reconstruction residual.}
\label{apx:tab:super_experts}
\begin{tabular}{lcccc}
\toprule
Model & Layer Index & Residual $r_l(1)$ & Expert ID \\
\midrule
DeepSeek-V2-Lite & 3 & 0.0041 & 54 \\
GPT-OSS-20B & 6 & 0.0407 & 5 \\
Mixtral-8$\times$7B & 1 & 0.0001 & 3 \\
Qwen3-30B-A3B & 2 & 0.0118 & 92 \\
Qwen3-30B-A3B & 3 & 0.0322 & 82 \\
\bottomrule
\end{tabular}
\end{table}

Table~\ref{apx:tab:super_experts} identifies five representative cases across all four architectures. The first-step residual ranges from 0.0001 for Mixtral layer 1 to 0.0407 for GPT-OSS layer 6, so each listed expert reconstructs more than 95\% of its layer's calibration output under this criterion. These cases provide empirical evidence of strong signal concentration: after the dominant atom is selected, the residual-conditioned marginal utility of subsequent experts diminishes rapidly. This OMP-based quantification also motivates the cross-layer allocator, which can assign smaller budgets to layers with rapidly decaying residual curves.

\FloatBarrier
\section{Extended Discussion}
\label{app:extended_discussion}

In this section, we provide clarifications regarding the theoretical boundaries and implementation details of the OMP-MoE framework.

\subsection{Scope of Greedy Optimality}
We explicitly distinguish our selection mechanism from standard Orthogonal Matching Pursuit. The classical algorithm continuously updates coefficients via orthogonal projection. In contrast, expert pruning imposes a binary constraint where coefficients are fixed at one. Consequently, our optimality guarantee is scoped to the local greedy step. OMP-MoE ensures that each iteration selects the expert providing the maximum immediate reduction in residual energy conditioned on the current subset. This formulation avoids matrix inversion overhead while effectively handling mutual dependencies between experts.

\subsection{Superiority over Static Heuristics}
OMP-MoE addresses the inter-expert correlations that static ranking heuristics neglect. Methods based on one-shot routing frequency or norm assume expert independence. However, our heatmaps in Appendix~\ref{app:intro:heatmaps} demonstrate that the utility of a candidate expert shifts dynamically after selecting a dominant expert. By updating the residual signal, OMP-MoE re-evaluates candidates based on uncaptured information. This allows the identification of complementary experts that static magnitude-based methods would discard.


\subsection{Broader Impact}
OMP-MoE aims to reduce the memory and inference cost of MoE LLMs, which may lower the hardware barrier for running large models and reduce energy use during deployment. This can make efficient language technologies more accessible to researchers and users with limited compute resources.

\end{document}